\documentclass{article}




\usepackage[final]{neurips_2020}


\usepackage[utf8]{inputenc} 
\usepackage[T1]{fontenc}    
\usepackage{hyperref}       
\usepackage{url}            
\usepackage{booktabs}       
\usepackage{amsfonts}     
\usepackage{amsthm}     
\usepackage{nicefrac}       
\usepackage{microtype}      
\usepackage{color}

\usepackage{graphicx} 
\usepackage{subfigure} 

\usepackage{amssymb}
\usepackage{amsmath}
\usepackage{amsfonts}

\DeclareMathOperator{\argmax}{\arg\max}
\DeclareMathOperator{\argmin}{\arg\min}

\newtheorem{theorem}{Theorem}
\newtheorem{lemma}{Lemma}
\newtheorem{corollary}{Corollary}
\newtheorem{definition}{Definition}

\title{Revisiting the Sample Complexity of Sparse Spectrum Approximation of Gaussian Processes}

%

\author{%
  Quang Minh Hoang \\
  Department of Computer Science\\
  Carnegie-Mellon University\\
  Pittsburgh, PA 15213 \\
  \texttt{qhoang@andrew.cmu.edu} \\
  \And 
  Trong Nghia Hoang \\
  MIT-IBM Watson AI Lab\\
  IBM Research\\
  Cambridge, MA 02142 \\
  \texttt{nghiaht@ibm.com} \\
  \And
  Hai Pham \\
  Language Technologies Institute\\
  Carnegie-Mellon University\\
  Pittsburgh, PA 15213 \\
  \texttt{htpham@cs.cmu.edu} \\
  \And 
  David P. Woodruff \\
  Department of Computer Science\\
  Carnegie-Mellon University\\
  Pittsburgh, PA 15213 \\
  \texttt{dwoodruf@cs.cmu.edu} \\
}

\begin{document}

\maketitle

\begin{abstract}
We introduce a new scalable approximation for Gaussian processes with provable guarantees which hold simultaneously over its entire parameter space. Our approximation is obtained from an improved sample complexity analysis for sparse spectrum Gaussian processes (SSGPs). In particular, our analysis shows that under a certain data disentangling condition, an SSGP's prediction and model evidence (for training) can well-approximate those of a full GP with low sample complexity. We also develop a new auto-encoding algorithm that finds a latent space to disentangle latent input coordinates into well-separated clusters, which is amenable to our sample complexity analysis. We validate our proposed method on several benchmarks with promising results supporting our theoretical analysis.
\end{abstract}

\vspace{-2mm}
\section{Introduction}
\vspace{-2mm}

A Gaussian process (GP) \cite{Rasmussen06} is a popular probabilistic kernel method for regression which has found applications across many scientific disciplines. Examples of such applications include meteorological forecasting, such as precipitation and sea-level pressure prediction \cite{Ansell06}; sensing and monitoring of ocean and freshwater phenomena such as temperature and plankton bloom \cite{LowAAMAS13,LowSPIE09}; traffic flow and mobility demand predictions over urban road networks \cite{LowUAI12,LowRSS13,LowAAAI15}; flight delay predictions \cite{Hensman13,NghiaICML15,NghiaICML16}; and persistent robotics tasks such as localization and filtering \cite{LowAAAI14}. The broad applicability of GPs is in part due to their expressive Bayesian non-parametric nature which provides a closed-form prediction \cite{Rasmussen06} in the form of a Gaussian distribution with formal measures of predictive uncertainty, such as entropy and mutual information criteria \cite{Andreas07,Srinivas10,Yehong16}. Such expressiveness makes GPs not only useful as predictive methods but also a go-to representation for active learning applications \cite{NghiaECMLKDD14,NghiaICML14,Andreas07,Yehong16} or Bayesian optimization \cite{Snoek12,Yehong17,NghiaAAAI18, MinhICML2020} that need to optimize for information gain while collecting training data.

Unfortunately, the expressive power of a GP comes at a cost of poor scalability (i.e., cubic time \cite{Rasmussen06}) in the size of the training data (see Section~\ref{sec:gp} below), hence limiting its use to small datasets. This prevents GPs from being applied more broadly to modern settings with increasingly growing volumes of data \cite{Hensman13,NghiaICML15,NghiaICML16}. To sidestep this limitation, a prevalent research trend is to impose sparse structural assumptions \cite{Candela05,Candela07} on the GP's kernel matrix to reduce its multiplication and inversion cost, which comprises the main bulk of the training and inference complexity. This results in a broad family of sparse Gaussian processes \cite{Hensman13,NghiaAAAI17,NghiaICML15,Miguel10,Seeger03,Titsias09} that are not only computationally efficient but also amenable to various forms of parallelism \cite{LowUAI13,LowAAAI15} and distributed computation \cite{Rakshit17,Yarin14,NghiaICML16,NghiaAAAI19,NghiaICML19a}, further increasing their efficiency. 

Despite such advantages, the sparsification components at the core of these methods are heuristically designed and do not come with provable guarantees that explicitly characterize the interplay between approximation quality and computational complexity. This motivates us to develop a more robust, theoretically-grounded approximation scheme for GPs that is both provable and amenable to the many fast computation schemes mentioned above. More specifically, our contributions include:


{\bf 1.} An analysis of a new approximation scheme that generates a sparse spectrum approximation of a GP with provable bounds on its sample complexity, which practically becomes significantly small when the input data exhibits a certain clustering structure. Furthermore, the impact of the approximation on the resulting training and inference qualities is also formally analyzed (Section~\ref{sec:bound}).

{\bf 2.} A data partitioning algorithm inspired from the above analysis, which learns a cluster embedding that reorients the input distribution while ensuring reconstructability of the original distribution (Section~\ref{sec:algo}). We show that using sparse spectrum Gaussian processes (SSGP) \cite{NghiaAAAI17,Miguel10} on the embedded space requires fewer samples to achieve the same level of approximation quality. This also induces a linear feature map which enables efficient training and inference of GPs.

{\bf 3.} An empirical study on benchmarks that demonstrates the efficiency of the proposed method over existing works in terms of its approximation quality versus computational efficiency (Section~\ref{sec:exp}). 

\vspace{-2mm}
\section{Related Work}
\label{sec:related}
\vspace{-2mm}
In this section we provide an overview of Gaussian processes (Section~\ref{sec:gp}), followed by a succinct summary of their spectral representations (Section~\ref{sec:ssgp}). 

\vspace{-2mm}
\subsection{Gaussian Processes (GPs)}
\label{sec:gp}
\vspace{-2mm}

A Gaussian process \cite{Rasmussen06} defines a probabilistic prior over a random function $g(\mathbf{x})$ defined by mean function $m(\mathbf{x}) = 0$\footnote{For simplicity, we assume a zero mean function since we can always re-center the training outputs around $0$.} and kernel function $k(\mathbf{x}, \mathbf{x}')$. These functions induce a marginal Gaussian prior over the evaluations $\mathbf{g} = [g(\mathbf{x}_1) \ldots g(\mathbf{x}_n)]^\top$ on an arbitrary finite subset of inputs  $\{\mathbf{x}_1, \ldots, \mathbf{x}_n\}$. Let $\mathbf{x}_\ast$ be an unseen input whose corresponding output $g_\ast = g(\mathbf{x}_\ast)$ we wish to predict. The Gaussian prior over $[g(\mathbf{x}_1) \ldots g(\mathbf{x}_n)\ g(\mathbf{x}_\ast)]^\top$ implies the following conditional distribution:
\begin{eqnarray}
g_\ast \ \triangleq\ g(\mathbf{x}_\ast) \mid \mathbf{g} &\sim& \mathbf{N}\Big(\mathbf{k}_\ast^\top\mathbf{K}^{-1}\mathbf{g},\  k(\mathbf{x}_\ast,\mathbf{x}_\ast) - \mathbf{k}_\ast^\top\mathbf{K}^{-1}\mathbf{k}_\ast\Big) \ ,\label{eq:1}
\end{eqnarray}
where $\mathbf{k}_\ast = [k(\mathbf{x}_\ast, \mathbf{x}_1) \ldots k(\mathbf{x}_\ast,\mathbf{x}_n)]^\top$ and $\mathbf{K}$ denotes the Gram matrix induced by $k(\mathbf{x}, \mathbf{x}')$ on $\{\mathbf{x}_1, \ldots, \mathbf{x}_n\}$ for which $\mathbf{K}_{ij} = k(\mathbf{x}_i, \mathbf{x}_j)$. For a noisy observation $y$ perturbed by Gaussian noise such that $y \sim \mathbf{N}(g(\mathbf{x}), \sigma^2)$,  Eq.~\eqref{eq:1} above can be integrated with $\mathbf{N}(\mathbf{g}, \sigma^2\mathbf{I})$ to yield:
\begin{eqnarray}
g_\ast \ \triangleq\ g(\mathbf{x}_\ast) \mid \mathbf{y} &\sim& \mathbf{N}\Big(\mathbf{k}_\ast^\top(\mathbf{K} + \sigma^2\mathbf{I})^{-1}\mathbf{y},\  k(\mathbf{x}_\ast,\mathbf{x}_\ast) - \mathbf{k}_\ast^\top(\mathbf{K} + \sigma^2\mathbf{I})^{-1}\mathbf{k}_\ast\Big) \ ,\label{eq:2}
\end{eqnarray}
which explicitly forms the predictive distribution of a Gaussian process. The defining parameter $\boldsymbol{\Theta}$ of $k(\mathbf{x},\mathbf{x}')$ (see Section~\ref{sec:ssgp}) is crucial to the predictive performance and needs to be optimized via minimizing the negative log likelihood of $\mathbf{y}$:
\begin{eqnarray}
\ell(\boldsymbol{\Theta}) &=& \frac{1}{2}\log \Big|\mathbf{K}_{\boldsymbol{\Theta}} + \sigma^2\mathbf{I}\Big| + \frac{1}{2}\mathbf{y}^\top\Big(\mathbf{K}_{\boldsymbol{\Theta}} + \sigma^2\mathbf{I}\Big)^{-1}\mathbf{y} \ , \label{eq:3}
\end{eqnarray}
where we now use the subscript $\boldsymbol{\Theta}$ to indicate that $\mathbf{K}$ is a function of $\boldsymbol{\Theta}$. In practice, both training $\boldsymbol{\Theta}$ and prediction incur $\mathbf{O}(n^3)$ processing cost, which prevents direct use of Gaussian processes on large datasets that might contain more than tens of thousands of training inputs.

\vspace{-2mm}
\subsection{Sparse Spectrum Gaussian Processes}
\label{sec:ssgp}
\vspace{-2mm}

Sparse spectrum Gaussian processes (SSGPs) \cite{Gal15,NghiaAAAI17,Miguel10} exploit Theorem~\ref{theorem:bochner} below to re-express the Gaussian kernel $k(\mathbf{x},\mathbf{x}') \triangleq \mathrm{exp}(-0.5\left(\mathbf{x} - \mathbf{x}'\right)^\top\boldsymbol{\Theta}^{-1}\left(\mathbf{x} - \mathbf{x}'\right))$ (where $\boldsymbol{\Theta} \triangleq \mathrm{diag}[\theta_1^2 \ldots \theta_d^2]$) as an integration over a spectrum of cosine functions such that the integrating distribution (over the frequencies that parameterize these functions) is a multivariate Gaussian.\\

\begin{theorem}[Bochner Theorem]
\label{theorem:bochner}
Let $k(\mathbf{x, x'})$ denote a Gaussian kernel defined above and let $p(\mathbf{r}) \sim \mathbf{N}(\mathbf{0}, (4 \pi^2\boldsymbol{\Theta})^{-1})$. It follows that:
\begin{eqnarray}
k(\mathbf{x},\mathbf{x'}) &=& \mathbb{E}_{\mathbf{r} \sim p(\mathbf{r})}\left[\mathrm{cos}\Big(2\pi\mathbf{r}^\top(\mathbf{x}-\mathbf{x'})\Big)\right] \ ,\label{eq:bochner}
\end{eqnarray}
where $\mathbf{r}$ is a $d$-dimensional random variable that parameterizes $\mathrm{cos}(2\pi\mathbf{r}^\top(\mathbf{x} - \mathbf{x}'))$. In practice, $\mathbf{r}$ is often referred to as the spectral frequency.
\end{theorem}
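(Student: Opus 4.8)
The plan is to recognize the right-hand side of Eq.~\eqref{eq:bochner} as the real part of the characteristic function of the Gaussian $p(\mathbf{r})$, evaluated at a frequency proportional to the difference vector $\boldsymbol{\tau} \triangleq \mathbf{x} - \mathbf{x'}$. First I would invoke Euler's identity to write $\cos(2\pi\mathbf{r}^\top\boldsymbol{\tau}) = \mathrm{Re}\big(\exp(i\,2\pi\mathbf{r}^\top\boldsymbol{\tau})\big)$, so that
\begin{eqnarray}
\mathbb{E}_{\mathbf{r}\sim p(\mathbf{r})}\left[\cos\big(2\pi\mathbf{r}^\top\boldsymbol{\tau}\big)\right] &=& \mathrm{Re}\left(\mathbb{E}_{\mathbf{r}\sim p(\mathbf{r})}\left[\exp\big(i\,2\pi\mathbf{r}^\top\boldsymbol{\tau}\big)\right]\right) \ . \nonumber
\end{eqnarray}
Since $p(\mathbf{r})$ is a zero-mean Gaussian, its density is symmetric about the origin, so $\mathbb{E}[\sin(2\pi\mathbf{r}^\top\boldsymbol{\tau})] = 0$ and the inner expectation is already real-valued; this lets me drop the $\mathrm{Re}(\cdot)$ and identify the quantity as the characteristic function $\phi(\mathbf{t})$ of $p(\mathbf{r})$ evaluated at $\mathbf{t} = 2\pi\boldsymbol{\tau}$.

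The second step is to evaluate this characteristic function explicitly. With covariance $\boldsymbol{\Sigma} \triangleq (4\pi^2\boldsymbol{\Theta})^{-1}$, the standard multivariate Gaussian characteristic function gives $\phi(\mathbf{t}) = \exp(-\tfrac{1}{2}\mathbf{t}^\top\boldsymbol{\Sigma}\mathbf{t})$; I would derive this from first principles by completing the square inside the Gaussian integral $\int \exp(i\mathbf{t}^\top\mathbf{r})\,p(\mathbf{r})\,\mathrm{d}\mathbf{r}$, which reduces (because $\boldsymbol{\Theta}$ is diagonal) to a product of $d$ scalar one-dimensional Gaussian integrals, each yielding $\exp(-\tfrac{1}{2}\Sigma_{jj}t_j^2)$. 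Substituting $\mathbf{t} = 2\pi\boldsymbol{\tau}$ and $\boldsymbol{\Sigma} = (4\pi^2)^{-1}\boldsymbol{\Theta}^{-1}$ then produces the exponent
\begin{eqnarray}
-\frac{1}{2}(2\pi\boldsymbol{\tau})^\top(4\pi^2\boldsymbol{\Theta})^{-1}(2\pi\boldsymbol{\tau}) &=& -\frac{1}{2}\boldsymbol{\tau}^\top\boldsymbol{\Theta}^{-1}\boldsymbol{\tau} \ , \nonumber
\end{eqnarray}
where the factors of $4\pi^2$ cancel exactly by the choice of covariance. Recognizing $-\tfrac{1}{2}\boldsymbol{\tau}^\top\boldsymbol{\Theta}^{-1}\boldsymbol{\tau} = -0.5(\mathbf{x}-\mathbf{x'})^\top\boldsymbol{\Theta}^{-1}(\mathbf{x}-\mathbf{x'})$ recovers exactly the definition of $k(\mathbf{x},\mathbf{x'})$, closing the argument.

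The computation is entirely elementary, so there is no substantive obstacle; the only point requiring care is bookkeeping the constant $4\pi^2$ so that it cancels precisely against the $(2\pi)^2$ arising from $\mathbf{t} = 2\pi\boldsymbol{\tau}$ --- this cancellation is in fact what dictates the specific covariance $(4\pi^2\boldsymbol{\Theta})^{-1}$ in the theorem statement, and confirms the normalization has been chosen consistently with the $2\pi$ inside the cosine. I would also justify the vanishing of the imaginary part rigorously (rather than appealing loosely to symmetry) by pairing $\mathbf{r}$ with $-\mathbf{r}$ under the even density, which makes the odd integrand $\sin(2\pi\mathbf{r}^\top\boldsymbol{\tau})\,p(\mathbf{r})$ integrate to zero.
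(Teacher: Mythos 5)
Your proposal is correct: identifying the expectation as the real part of the Gaussian characteristic function at $\mathbf{t}=2\pi(\mathbf{x}-\mathbf{x'})$, checking that the covariance $(4\pi^2\boldsymbol{\Theta})^{-1}$ cancels the $(2\pi)^2$, and killing the imaginary part by the evenness of the density is exactly the standard argument. The paper itself states this theorem without proof (it is a cited classical result), but the identity $\mathbb{E}[\cos(\mathbf{z})]=\exp(-\tfrac{1}{2}\boldsymbol{\Sigma})$ for $\mathbf{z}\sim\mathbf{N}(0,\boldsymbol{\Sigma})$ that your computation establishes is precisely what the authors invoke again in the proof of Lemma~\ref{l5}, so your derivation is consistent with, and fills in, the computation the paper relies on.
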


This allows us to approximate the original Gram matrix $\mathbf{K}$ with a low-rank matrix $\mathbf{K}'$ constructed by a linear kernel $\mathbf{K}'(\mathbf{x},\mathbf{x}') = \boldsymbol{\Phi}(\mathbf{x})^\top\boldsymbol{\Phi}(\mathbf{x})$ with feature map $\boldsymbol{\Phi}(\mathbf{x}) = [\phi_1(\mathbf{x}) \ldots \phi_{2m}(\mathbf{x})]^\top$ comprising $2m$ basis trigonometric functions \cite{NghiaAAAI17}. Each pair of odd- and even-index basis functions $\phi_{2i - 1}(\mathbf{x}) = \mathrm{cos}(2\pi\mathbf{r}_i^\top\mathbf{x})$ and $\phi_{2i}(\mathbf{x}) = \mathrm{sin}(2\pi\mathbf{r}_i^\top\mathbf{x})$ is parameterized by the same sample of spectral parameter $\mathbf{r}_i \sim p(\mathbf{r})$. For efficient computation, $m$ is often selected to be significantly smaller than $n$ (i.e., the number of training examples). However, to guarantee that $\|\mathbf{K} - \mathbf{K}'\|_2 \leq \lambda$ with probability at least $1 - \delta$, $m$ needs to be as large as $\mathbf{O}(n^2/\lambda^2\log (n/\delta))$  \cite{Mohri18}\footnote{See Theorem 6.28 in Chapter 6 of \cite{Mohri18}.}, which makes the total prediction complexity much worse than that of a full GP.

Alternatively, one can use kernel sketching methods \cite{Musco17,Musco16,Rahimi07} to generate feature maps that scale more favorably with the effective dimension of the kernel matrix, which empirically tends to be on the order of $\mathbf{O}(\log n)$. However, the pitfall of these methods is that without knowing the exact parameter configuration $\boldsymbol{\Theta}$ that underlies the data, they cannot sample from the true $p(\mathbf{r})$, which is necessary in their analyses. As such, existing random maps \cite{Musco17,Rahimi07} that were generated based on this spectral construction often depend on a parameter initialization and their approximation quality is only guaranteed for that particular parameter setting instead of uniformly over the entire parameter space. This motivates us to revisit the sample complexity of SSGP from a setting which specifically searches for a reorientation of the input distribution such that the reoriented data exhibits a disentangled cluster structure. Such disentanglement provides a more sample-efficient bound as we show in our analysis in Section~\ref{sec:bound} below.




\vspace{-2mm}
\section{Provable Approximation of SSGPs with Improved Sample Complexity}
\label{sec:method}
\vspace{-2mm}
We first show how a sparse spectrum Gaussian process (SSGP) \cite{Miguel10} can be approximated well with a provably low sample complexity. This is achieved by revisiting its sample complexity which, unlike prior work \cite{Musco17,Mohri18,Rahimi07}, explicitly characterizes and accounts for a certain set of data disentanglement conditions. Importantly, our new analysis (Section~\ref{sec:bound}) yields practical bounds on both an SSGP's prediction and model evidence (Section~\ref{sec:loss}) that hold with high probability uniformly over the entire parameter space\footnote{In contrast, existing literature often generates bounds on either an SSGP's prediction or its model evidence (for training) for a single parameter configuration, which makes such an analysis only heuristic.}. Furthermore, our analysis also inspires an encoding algorithm that finds a latent space to disentangle the encoded coordinates of data into well-separated clusters on which a sparse spectrum GP can approximate a GP provably well (Section~\ref{sec:algo}). Our experiments show that such a latent space can be found for several real-world datasets (Section~\ref{sec:exp}).

\subsection{Practically Improved Sample Complexity for Sparse Spectrum Gaussian Processes}
\label{sec:bound}
This section derives a new data-oriented feature map to approximate a Gaussian process parameterized with a Gaussian kernel. Unlike existing work which assumes knowledge of the true kernel parameters \cite{Musco17,Musco16,Rahimi07}, our derivation remains oblivious to such parameters, and therefore holds universally over their entire candidate space. We assume that the GP prior of interest is of the form $g(\mathbf{x}) \sim \mathrm{GP}(0, k(\mathbf{x},\mathbf{x}'))$ where $k(\mathbf{x},\mathbf{x}')$ represents its Gaussian kernel in Section~\ref{sec:ssgp}. 

We give our analysis in three parts: (1) the spectral sampling scheme and a notion of approximation loss; (2) a set of practical data conditions which can be either observed from a raw data distribution or approximately imposed on the data via a certain embedding; (3) a theoretical analysis that delivers our key result that establishes an improved sample complexity when our data conditions are met.

\subsubsection{Spectral Sampling Scheme and Spectral Loss}
\label{sec:scheme}
We show that $g(\mathbf{x})$ can be approximated by $g'(\mathbf{x}) = \sum_{i=1}^p g_i(\mathbf{x})$ with provable data-oriented guarantees where $g_i(\mathbf{x}) \sim \mathrm{GP}(0, (1/\sqrt{p})k_i(\mathbf{x},\mathbf{x}'))$. To achieve this, we first establish in Lemma~\ref{lem:1} that the induced Gram matrix $\mathbf{K}$ of $k(\mathbf{x}, \mathbf{x}')$ on any dataset can be represented as an expectation over a space of induced Gram matrices $\{\mathbf{K}_i\}_{i=1}^p$ produced by a corresponding space of random kernels $\{k_i(\mathbf{x},\mathbf{x}')\}_{i=1}^p$.\vspace{1mm}

\begin{lemma}
\label{lem:1}
Let $k(\mathbf{x},\mathbf{x}')$ and $\mathbf{K}$ denote a Gaussian kernel parameterized by $\boldsymbol{\Theta}$ (Section~\ref{sec:ssgp}) and its induced Gram matrix on an arbitrary set of training inputs, respectively. There exists a space $\mathcal{K}$ of random kernels $\kappa(\mathbf{x},\mathbf{x}')$ and a $\boldsymbol{\Theta}$-independent distribution $\rho$ over $\mathcal{K}$ for which $\mathbf{K} = \mathbb{E}_{\kappa}[\mathbf{K}_{\kappa}]$ where $\mathbf{K}_{\kappa}$ denotes the induced Gram matrix of $\kappa$ on the same set of training inputs. 
\end{lemma}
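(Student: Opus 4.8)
The plan is to reduce the statement to Bochner's theorem (Theorem~\ref{theorem:bochner}) via a reparameterization that transfers the dependence on $\boldsymbol{\Theta}$ out of the sampling distribution and into the functional form of the random kernel. The obstacle to reading Theorem~\ref{theorem:bochner} directly as the desired identity is that its integrating distribution $p(\mathbf{r}) = \mathbf{N}(\mathbf{0}, (4\pi^2\boldsymbol{\Theta})^{-1})$ depends on $\boldsymbol{\Theta}$, whereas we need a $\boldsymbol{\Theta}$-independent $\rho$. The key realization is that the lemma asks only for the \emph{distribution} $\rho$ to be free of $\boldsymbol{\Theta}$; the individual kernels indexed by $\rho$ are permitted to carry the parameter.

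First I would apply the standard Gaussian reparameterization: writing $\mathbf{r} = (2\pi)^{-1}\boldsymbol{\Theta}^{-1/2}\mathbf{z}$, a linear change of variables shows that $\mathbf{r} \sim p(\mathbf{r})$ exactly when $\mathbf{z} \sim \mathbf{N}(\mathbf{0}, \mathbf{I})$, since this pushes a standard normal forward through the symmetric square root $(2\pi)^{-1}\boldsymbol{\Theta}^{-1/2}$ of the covariance $(4\pi^2\boldsymbol{\Theta})^{-1}$. Substituting into the cosine argument gives $2\pi\mathbf{r}^\top(\mathbf{x}-\mathbf{x}') = \mathbf{z}^\top\boldsymbol{\Theta}^{-1/2}(\mathbf{x}-\mathbf{x}')$, so Theorem~\ref{theorem:bochner} becomes $k(\mathbf{x},\mathbf{x}') = \mathbb{E}_{\mathbf{z}\sim\mathbf{N}(\mathbf{0},\mathbf{I})}[\cos(\mathbf{z}^\top\boldsymbol{\Theta}^{-1/2}(\mathbf{x}-\mathbf{x}'))]$.

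I would then set $\kappa_\mathbf{z}(\mathbf{x},\mathbf{x}') \triangleq \cos(\mathbf{z}^\top\boldsymbol{\Theta}^{-1/2}(\mathbf{x}-\mathbf{x}'))$, take $\mathcal{K} = \{\kappa_\mathbf{z} : \mathbf{z} \in \mathbb{R}^d\}$, and let $\rho = \mathbf{N}(\mathbf{0}, \mathbf{I})$ be the law of the index $\mathbf{z}$. Crucially, $\rho$ is now manifestly $\boldsymbol{\Theta}$-independent: the entire dependence on $\boldsymbol{\Theta}$ has been absorbed into the deterministic map $\mathbf{z} \mapsto \kappa_\mathbf{z}$. (Each $\kappa_\mathbf{z}$ is moreover a legitimate positive semidefinite kernel, being realized by the feature pair $(\cos(\cdot), \sin(\cdot))$, though the lemma requires only the expectation identity.)

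Finally I would lift the scalar identity to the Gram-matrix level entry-wise. For any fixed training set $\{\mathbf{x}_1,\ldots,\mathbf{x}_n\}$ and indices $i,j$, the $(i,j)$ entry of $\mathbf{K}$ is $k(\mathbf{x}_i,\mathbf{x}_j) = \mathbb{E}_{\mathbf{z}}[\kappa_\mathbf{z}(\mathbf{x}_i,\mathbf{x}_j)] = \mathbb{E}_{\mathbf{z}}[(\mathbf{K}_{\kappa_\mathbf{z}})_{ij}]$; since the Gram matrix has only finitely many entries and each integrand is bounded by $1$ in absolute value, the expectation commutes with indexing and I may collect the entries to conclude $\mathbf{K} = \mathbb{E}_{\mathbf{z}}[\mathbf{K}_{\kappa_\mathbf{z}}]$. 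The only step requiring genuine care is the reparameterization itself, namely verifying that the linear push-forward of the standard normal has precisely covariance $(4\pi^2\boldsymbol{\Theta})^{-1}$ so that $\rho$ carries no residual $\boldsymbol{\Theta}$ dependence; everything afterward is boundedness plus linearity of expectation.
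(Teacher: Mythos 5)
Your proposal is correct and is essentially the paper's own argument: the paper likewise reparameterizes Bochner's theorem by choosing $\kappa(\mathbf{x},\mathbf{x}';\boldsymbol{\epsilon}) = \cos(\boldsymbol{\epsilon}^\top\boldsymbol{\Theta}^{-1/2}(\mathbf{x}-\mathbf{x}'))$ with $\boldsymbol{\epsilon}\sim\mathbf{N}(\mathbf{0},\mathbf{I})$ playing the role of your $\mathbf{z}$, absorbing the $\boldsymbol{\Theta}$-dependence into the kernel rather than the index distribution. You merely make explicit the covariance check of the push-forward and the entry-wise lifting to the Gram matrix, which the paper leaves implicit.
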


This follows directly from Theorem~\ref{theorem:bochner} above which states that $k(\mathbf{x}, \mathbf{x}') = \mathbb{E}[\mathrm{cos}(2\pi\mathbf{r}^\top(\mathbf{x}-\mathbf{x}'))]$ where $\mathbf{r} \sim \mathbf{N}(\mathbf{0}, (4\pi^2\boldsymbol{\Theta})^{-1})$. We can choose $\kappa(\mathbf{x},\mathbf{x}'; \boldsymbol{\epsilon}) = \mathrm{cos}(\boldsymbol{\epsilon}^\top\boldsymbol{\Theta}^{-0.5}(\mathbf{x} - \mathbf{x}'))$ where $\boldsymbol{\epsilon} \sim \mathbf{N}(\mathbf{0},\mathbf{I})$ which implies $\mathbf{k}(\mathbf{x},\mathbf{x}') = \mathbb{E}_{\mathbf{\epsilon}}[\kappa(\mathbf{x},\mathbf{x}'; \boldsymbol{\epsilon})]$. Thus, $\mathbf{K} = \mathbf{E}_{\boldsymbol{\epsilon}}[\mathbf{K}_{\boldsymbol{\epsilon}}]$ where the $\boldsymbol{\Theta}$-independent parameter $\boldsymbol{\epsilon}$ indexes $\kappa$ and $\mathbf{K}_{\boldsymbol{\epsilon}}$ is the induced Gram matrix of $\kappa$. Leveraging the result of Lemma~\ref{lem:1}, a na\"ive analysis \cite{Mohri18} using worst-case concentration bounds to derive a conservative estimate for a sufficient number of samples would require a prohibitively expensive sample complexity of $\mathbf{O}(n^2\log n)$. 

Such analyses, however, often ignore the input distribution, which can be used to sample more selectively, thereby significantly reducing the sample complexity. This is demonstrated below in Theorem 2 which shows that when the input distribution exhibits a certain degree of compactness and separation (as defined in Conditions $1$-$3$), we only require $\mathbf{O}((\log^2n/\lambda^2)\log\log (n/\delta))$ sampled kernels $\{\kappa_i\}_{i=1}^p$ indexed by $\{\boldsymbol{\epsilon}_i\}_{i=1}^p$ to produce an average Gram matrix $\mathbf{K}' = \frac{1}{p}\sum_{i=1}^p \mathbf{K}_{\boldsymbol{\epsilon}_i}$ that is sufficiently close to $\mathbf{K}$ in spectral norm (see Definition~\ref{def:spectral}) with probability at least $1 - \delta$. \vspace{2mm}

\begin{definition}[Spectral Closeness]
\label{def:spectral}
Given $\lambda > 0$, the symmetric matrices $\mathbf{K}$ and $\mathbf{K}'$ are $\lambda$-close if $\|\mathbf{K} - \mathbf{K}'\|_2 \leq \lambda$ where $\|\mathbf{K} - \mathbf{K}'\|_2 = \lambda_{\max}(\mathbf{K} - \mathbf{K}')$ denotes the largest eigenvalue of $\mathbf{K} - \mathbf{K}'$.
\end{definition}

Thus, parameterizing the GP prior with $\mathbf{K}'$ instead of $\mathbf{K}$ allows us to derive an upper bound on the expected difference between their induced model evidence (for learning kernel parameters) and prediction losses (for testing) with respect to the same parameter setup (Theorem 3). Theorem 3 importantly exploits the fact that the bound in Theorem 2 holds universally over the entire space of parameters, which allows us to bound the prediction difference between the original and approximated GPs with respect to their own optimized parameters (that are not necessarily the same).

\vspace{-2mm}
\subsubsection{Practical Conditions on Data Distributions}
\label{sec:data}
\vspace{-2mm}

We now outline key practical data conditions, which can be satisfied approximately via an encoding algorithm that transforms the input data into a latent space where such conditions are met. These conditions are necessary for deriving a practically improved sample complexity in Section~\ref{sec:key}.

\textbf{Condition $\mathbf{1}$.} For each parameter configuration $\boldsymbol{\Theta} = \mathrm{diag}[\theta_1^2, \ldots, \theta_d^2]$, there exists a mixture distribution $\mathcal{M}(\mathbf{x};\boldsymbol{\gamma} = (\gamma_1, \ldots, \gamma_b), \boldsymbol{\pi} = (\pi_1, \ldots, \pi_b), \mathbf{c} = (\mathbf{c}_1, \ldots, \mathbf{c}_b))$ with at most $b = \mathbf{O}(\log n)$ Gaussian components $\mathbf{N}(\mathbf{x}; \mathbf{c}_i, \gamma^2_i\boldsymbol{\Theta}^{-1})$ over the data space with the mixing weights $\pi_i \propto 2^{\frac{i}{2}}$ and variances $\gamma_i = \mathbf{O}(\frac{1}{\sqrt{d}})$ that \emph{generate} the observed data in $d$-dimensional space.

\textbf{Condition $\mathbf{2}$}. The $i^{\mathrm{th}}$ Gaussian component as defined in Condition $\mathbf{1}$ above was used to generate $2^{\frac{i}{2}}$ data points of the observed dataset. This can be substantiated easily with high probability given the above setup in Condition $\mathbf{1}$ that assigns selection probability $\pi_i \propto 2^{\frac{i}{2}}$ to the $i^{\mathrm{th}}$-component.

\textbf{Condition $\mathbf{3}$}. For each parameter configuration $\boldsymbol{\Theta} = \mathrm{diag}[\theta_1^2, \ldots, \theta_d^2]$, the mixture distribution of data in Condition $\mathbf{1}$ has sufficiently separated cluster centers. That is, for all $i \ne j$:
\begin{eqnarray}
\hspace{-12mm}\left\|\boldsymbol{\Theta}^{-1/2}\left(\mathbf{c}_i - \mathbf{c}_j\right)\right\|^2_2 &>& \frac{3}{2}\log\left(\frac{2^a}{2^{a} - 1}\right) \qquad \text{where} \qquad a \ =\ \frac{1}{\log 2}\log\left(\frac{n^4}{n^4 - \lambda^4}\right) \ .
\end{eqnarray}

These conditions impose that the observed data can be separated into a number of clusters with exponentially growing sizes and concentration (see the small variances defined in Condition $\mathbf{1}$ and the imposed sizes of Condition $\mathbf{2}$). Intuitively, this means data points that belong to clusters with high concentration are responsible for kernel entries with high values whereas those in clusters with low concentration generate entries with low values. This is easy to see since high concentration reduces the distance between data points, thus increasing their kernel values and vice versa. 

Furthermore, as imposed by Condition $\mathbf{2}$, clusters with high concentration also have denser population and induce kernel entries with high value. In addition, Condition $\mathbf{3}$ requires that clusters are well-separated, which implies that a large number of kernel entries are small and therefore can be approximated cheaply. Together, these conditions form the foundations of our reduced complexity analysis for SSGP in Theorem~\ref{theorem:2}. Interestingly, we show that such conditions also inspire the development of a probabilistic algorithm that finds an encoding of the input that (approximately) satisfies these conditions while preserving the statistical properties of the input (Section~\ref{sec:algo}). This results in an improved sample complexity for SSGPs in practice (see Section~\ref{sec:key}).

\vspace{-2mm}
\subsubsection{Main Results}
\label{sec:key}
\vspace{-2mm}

To understand the intuition why an improved sample complexity can be obtained, we note that when data is partitioned in clusters with different concentrations and sizes, the kernel entries are also partitioned into multiple value-bands with narrow width (i.e., low variance). Exploiting this, we can calibrate a significantly lower sample complexity for each band using concentration inequalities that improve with lower variance \cite{Chernoff52,Hoeffding63}. 

Then, to combine these in-band sample complexities efficiently, we further exploit the data conditions in Section~\ref{sec:data} to show that statistically, value bands with smaller width also tend to be populated more densely\footnote{The intuition here is that kernel entries in narrower bands are cheaper (in term of sample cost) to approximate.}. This allows us to aggregate these in-band sample costs into an overall sample complexity with low cost. In practice, this also inspires an embedding algorithm (Section~\ref{sec:algo}) that transforms the data in such a way that the distribution of their induced kernel entries will be denser in narrower bands, which is advantageous in our analysis.

Formally, let $\mathcal{C}$ be the set of all kernel entries indexed by $(u,v)$ in the Gram matrix $\mathbf{K}$ such that $\mathbf{x}_u$ and $\mathbf{x}_v$ belong to the same cluster and $\mathcal{C'}$ be its complement. Also, let $\mathcal{C}$ be partitioned\footnote{The exact bounds defining the band can be found in Appendix~\ref{appendix:a}.} into $b$ value-bands $\kappa_i = \{(u,v) \in \mathcal{C} \mid 1 - \mathbf{O}(2^{1-i})\leq \mathbf{K}^4_{uv} \leq 1 - \mathbf{O}(2^{-i})\}$ for $i \in [1 \ldots b]$ and let $\kappa_0 = \{(u,v) \in \mathcal{C} \mid \mathbf{K}^4_{uv} \geq 1 - \mathbf{O}(2^{-b})\}$ be a band that is only populated by very large kernel entries. Theorem~\ref{theorem:2} below shows that we can construct a $\lambda$-spectral approximation of $\mathbf{K}$ with arbitrarily high probability and low sample complexity.\vspace{1mm}

\begin{theorem}
\label{theorem:2}
For any $1 \geq \delta \geq \mathbf{O}(\mathrm{exp}(b - \sqrt{d}))$, if the training data has $n$ data points and satisfies Conditions \textbf{1-3} above with respect to $\lambda$, then with probability at least $1 - 2\delta$, the approximation $\mathbf{K}' = (1/p)\sum_{i=1}^p \mathbf{K}_{\boldsymbol{\epsilon}_i}$ where $\boldsymbol{\epsilon}_i \sim \mathbf{N}(\mathbf{0},\mathbf{I})$ is $\lambda$- spectral close to $\mathbf{K}$.
\end{theorem}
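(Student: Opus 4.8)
The plan is to start from Lemma~\ref{lem:1}, which lets me write the sampling error as an average of independent centered matrices, $\mathbf{K} - \mathbf{K}' = \frac{1}{p}\sum_{j=1}^p(\mathbf{K} - \mathbf{K}_{\boldsymbol{\epsilon}_j})$, where each $\mathbf{K}_{\boldsymbol{\epsilon}_j}$ is a rank-two PSD matrix with entries in $[-1,1]$ and mean $\mathbf{K}$. The single computation that drives everything is the per-entry variance: using $\mathbf{K}_{uv} = \mathbb{E}[\cos(\boldsymbol{\epsilon}^\top\boldsymbol{\Theta}^{-1/2}(\mathbf{x}_u - \mathbf{x}_v))] = \exp(-\tfrac{1}{2}\|\boldsymbol{\Theta}^{-1/2}(\mathbf{x}_u-\mathbf{x}_v)\|_2^2)$, a short trigonometric moment calculation gives $\mathrm{Var}[(\mathbf{K}_{\boldsymbol{\epsilon}})_{uv}] = \tfrac{1}{2}(1 - \mathbf{K}_{uv}^2)^2$. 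This ties the sampling difficulty of an entry directly to its value, and in particular shows that on the band $\kappa_i$ the variance is $\mathbf{O}(4^{-i})$, shrinking geometrically as the entries approach $1$.

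Next I would split the error along the partition underlying Conditions $\mathbf{1}$--$\mathbf{3}$, $\mathbf{K} - \mathbf{K}' = \mathbf{E}_{\mathcal{C}'} + \sum_{i=0}^b \mathbf{E}_{\kappa_i}$, and bound the spectral norm of each piece by the triangle inequality. For the in-cluster bands I would apply a matrix Bernstein inequality restricted to the cluster-$i$ block, which by Condition $\mathbf{2}$ has size $2^{i/2}$: the small variance $\mathbf{O}(4^{-i})$ enters the exponent while the block dimension controls the failure probability, so balancing each band's target error $\tau_i$ against the total budget $\sum_i \tau_i \le \lambda/2$ yields a per-band requirement of roughly $p_i = \mathbf{O}(2^{-3i/2}b^2\lambda^{-2}\log(\cdot))$. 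This is maximized near $i = 0$, giving $p = \mathbf{O}(b^2\lambda^{-2}\log(\cdot))$; a union bound over the $b = \mathbf{O}(\log n)$ bands converts the per-band confidence into the $\log\log(n/\delta)$ factor and, with $b^2 = \mathbf{O}(\log^2 n)$, reproduces the claimed $\mathbf{O}((\log^2 n/\lambda^2)\log\log(n/\delta))$ sample complexity. The calibration $\delta \ge \mathbf{O}(\exp(b-\sqrt d))$ is exactly what is needed so that the chi-square concentration of $\|\boldsymbol{\Theta}^{-1/2}(\mathbf{x}_u-\mathbf{x}_v)\|_2^2$ over $d$ coordinates places each within-cluster entry in its intended band with the stated probability.

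The main obstacle is the cross-cluster block $\mathbf{E}_{\mathcal{C}'}$, and it is precisely what Condition $\mathbf{3}$ is calibrated for: unwinding $a = \log_2(n^4/(n^4-\lambda^4))$ turns the separation requirement into $\|\boldsymbol{\Theta}^{-1/2}(\mathbf{c}_i - \mathbf{c}_j)\|_2^2 > 6\log(n/\lambda)$, which forces $\mathbf{K}_{uv} = \mathbf{O}((\lambda/n)^3)$ for $(u,v)\in\mathcal{C}'$ and hence $\|\mathbf{K}_{\mathcal{C}'}\|_2 \le \lambda/2$ for the \emph{true} values. The subtlety is that each off-cluster \emph{sample} $\cos(\boldsymbol{\epsilon}^\top\boldsymbol{\Theta}^{-1/2}(\mathbf{x}_u-\mathbf{x}_v))$ still has variance $\approx \tfrac{1}{2}$, so a naive entrywise, Frobenius, or Gershgorin bound on the off-block sampling error reintroduces a dependence of $\Omega(n)$ in $p$. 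To avoid this I would not control $\mathbf{E}_{\mathcal{C}'}$ entrywise but instead exploit the rank-two factorization $\mathbf{K}_{\boldsymbol{\epsilon}} = \mathbf{c}\mathbf{c}^\top + \mathbf{s}\mathbf{s}^\top$ (with $\mathbf{c}_u = \cos(\boldsymbol{\epsilon}^\top\boldsymbol{\Theta}^{-1/2}\mathbf{x}_u)$, $\mathbf{s}_u = \sin(\cdot)$), which makes each sampled off-diagonal block low-rank and lets me bound its spectral norm by $2\sqrt{|A||B|}$ directly, then combine this with the tiny means so that the cross-cluster contribution stays below $\lambda/2$. Showing that this cancellation genuinely keeps the off-block error spectrally small -- rather than merely entrywise small -- is the crux of the argument; once both the in-cluster bands and the cross-cluster block are below $\lambda/2$, a final union bound over the two failure events gives the overall $1 - 2\delta$ confidence.
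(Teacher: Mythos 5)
Your overall skeleton matches the paper's: the variance identity $\mathbb{V}[\cos(\mathbf{z}_{uv})] = \tfrac{1}{2}(1-\mathbf{K}_{uv}^2)^2$, the band decomposition with per-band variance $\mathbf{O}(2^{-(a+i)})$, the use of Condition $\mathbf{3}$ to force cross-cluster entries below $\lambda/n$, and the union bound over $b = \mathbf{O}(\log n)$ bands are all exactly the ingredients of Lemmas~\ref{l3}--\ref{l5}. Your in-cluster treatment via matrix Bernstein per band is a legitimate (arguably cleaner) alternative to the paper's route, which instead applies scalar Chernoff--Hoeffding entrywise and passes to the spectral norm through $\|\cdot\|_2 \le \|\cdot\|_{\mathrm{F}}$.

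The genuine gap is in your cross-cluster block, and you have flagged it yourself without closing it. You correctly observe that if $\mathbf{K}'$ is literally the sample average on $\mathcal{C}'$, each off-cluster sample has variance $\approx \tfrac{1}{2}$ and a Frobenius or entrywise bound forces $p = \Omega(n^2/\lambda^2)$. Your proposed escape --- the rank-two factorization $\mathbf{K}_{\boldsymbol{\epsilon}} = \mathbf{c}\mathbf{c}^\top + \mathbf{s}\mathbf{s}^\top$ giving $\|(\mathbf{K}_{\boldsymbol{\epsilon}})_{A\times B}\|_2 \le 2\sqrt{|A||B|}$ --- does not resolve this: that bound can be as large as $n$ for a single sample, and averaging $p$ such matrices does not bring the off-block sampling error below $\lambda/2$ without a spectral concentration argument you never supply. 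The paper avoids the issue entirely by \emph{redefining} the estimator on $\mathcal{C}'$: Corollary~\ref{c2} sets $\mathbf{K}'(\mathbf{x}_{u'},\mathbf{x}_{v'}) \triangleq 0$ for all off-cluster pairs, so the off-block error is the deterministic quantity $\sum_{\mathcal{C}'}\mathbf{K}_{uv}^2 \le \tfrac{3n^2}{4}\cdot\lambda^2/n^2 = 3\lambda^2/4$ by Lemma~\ref{l4}, with no sampling fluctuation to control. (This does mean the paper proves the claim for a modified estimator rather than for the literal $\mathbf{K}' = \tfrac{1}{p}\sum_i \mathbf{K}_{\boldsymbol{\epsilon}_i}$ of the theorem statement --- your reading of the statement is the stricter one --- but as written your argument for that stricter version is incomplete at exactly the step you call the crux.)
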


{\bf Proof Sketch.} Our proof strategy is outlined below. The formal statements are spelled out in Appendix~\ref{appendix:a}. 

First, with a proper choice of a clustering partition, the cross-cluster entries in $\mathbf{K}$ are guaranteed to be sufficiently small so as to be well-approximated by zero. We can then show with high probability that any kernel entry that corresponds to a pair of unique data points from the same cluster can be well-approximated with a sample complexity that scales favorably with the cluster's variance. In particular, we show that kernel values induced by data points generated by lower-variance clusters (see Condition $\mathbf{1}$) will have smaller approximation variances than those generated by data from higher-variance clusters and therefore require fewer samples to produce the same level of approximation. 

Second, for certain configurations of mixture weights, Condition \textbf{2} asserts that the number of data points from each cluster is inversely proportional to the cluster variance, which implies that a small sample complexity is enough to approximate the majority of kernel entries. More specifically, Lemma~\ref{l3} shows that when the input points are distributed into clusters with certain choices of variances  $\{\gamma_i\}_{i=1}^{b}$ and at an inversely proportional ratio $\mathbf{O}(\gamma_i^{-1})$, then with high probability, over all clusters, the kernel entries (excluding those on the diagonal) associated with pairs in the $i$-th cluster belong to their corresponding band $\kappa_i$.

Lemma~\ref{l5} shows that for $p = \mathbf{O}(\log^2 n/\lambda^2\log(\log n /\delta))$, with probability $1 - \delta/b$, the total approximation error of all kernel entries in the $\mathcal{C}_i$ will be at most $\lambda^2/4b$, which implies with probability $1-\delta$, the total approximation cost for items in $\mathcal{C}$ is at most $\lambda^2/4$. Next, Lemma~\ref{l2} establishes that with the above data distribution, $\mathcal{C}$ accounts for $n^2/4$ entries while $\mathcal{C'}$ accounts for $3n^2/4$ entries, which needs to be approximated with error at most $3\lambda^2/4$.

Finally, Lemma~\ref{l4} shows that when the clusters are sufficiently well-separated (see Condition $\mathbf{3}$), any kernel value corresponding to an arbitrary data pair with points belonging to different clusters is guaranteed to be smaller than $\lambda^2/n^2$, which then guarantees a total error of at most $3\lambda^2/4$ when they are uniformly approximated with zero. Putting these together yields a total error of $\lambda^2$ with probability $1- 2\delta$, which implies $\mathbf{K}$ and $\mathbf{K'}$ are $\lambda$-spectrally close since $\|\mathbf{K} - \mathbf{K'}\|_2 \leq \|\mathbf{K} - \mathbf{K'}\|_{\mathrm{F}} \leq \lambda$. 
Please see Appendix~\ref{appendix:a} for details. 

\subsection{Approximation Loss for Prediction and Model Evidence}
\label{sec:loss}
In terms of prediction and model evidence approximation, our result holds simultaneously for all parameter configurations and is thus oblivious to the choice of parameters (see Theorem~\ref{theorem:3}). While existing kernel sketch methods \cite{Musco17,Musco16} generically achieve near-linear complexity for the approximate feature map\footnote{\cite{Musco17,Musco16} achieves a complexity of $\mathbf{O}(nm^2)$ where $m$ scales with the effective dimension of the kernel matrix.}, they often require knowledge of the parameters to construct the kernel approximations. In contrast, our result in Theorem~\ref{theorem:2} can be leveraged to bound the same prediction discrepancy when the original and approximated GPs use their own optimized parameter configurations, as shown in Theorem~\ref{theorem:4} below. To establish Theorem~\ref{theorem:4}, however, we first establish an intermediate result that bounds the prediction and model evidence in the case when both the original and approximated GPs use the same parameter configurations. \vspace{1mm}

\begin{theorem}
\label{theorem:3}
Let $\delta < 1$ be a user-specified confidence as defined previously in Theorem~\ref{theorem:2} and let $\mathbf{K}'$ be an approximation to $\mathbf{K}$ for which $\|\mathbf{K} - \mathbf{K}'\|^2_2 \leq \lambda^2$ with probability $1 - \delta$, uniformly over the entire parameter space. Then, with probability $1 - \delta$, the following hold:
\begin{eqnarray}
\hspace{-0.5mm}\mathbb{E}[g(\mathbf{x}_\ast)]  \ \ = \ \ \left(1 \pm \frac{\lambda}{\sigma^2}\right)\mathbb{E}[g'(\mathbf{x}_\ast)] \qquad \text{and} \qquad \mathbb{V}[g(\mathbf{x}_\ast)] \ \ = \ \ \left(1 \pm \frac{\lambda}{\sigma^2}\right)\mathbb{V}[g'(\mathbf{x}_\ast)] \ \pm\  \frac{\lambda}{\sigma^2} 
\end{eqnarray}
where $\sigma^2$ is the noise of the variance (Eq.~\eqref{eq:2}), and  $g(\mathbf{x}_\ast)$, $g'(\mathbf{x}_\ast)$ respectively denote the predictive distributions of the full GP and the approximated GP pertaining to an arbitrary test input $\mathbf{x}_\ast$.
\end{theorem}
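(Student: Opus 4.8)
The plan is to reduce the statement to a perturbation analysis of the regularized Gram matrix, treating the test point on the same footing as the training points. First I would write the two predictive distributions explicitly following Eq.~\eqref{eq:2}: with $\mathbf{A}\triangleq\mathbf{K}+\sigma^2\mathbf{I}$ and $\mathbf{B}\triangleq\mathbf{K}'+\sigma^2\mathbf{I}$, the full GP has mean $\mathbf{k}_\ast^\top\mathbf{A}^{-1}\mathbf{y}$ and variance $k_{\ast\ast}-\mathbf{k}_\ast^\top\mathbf{A}^{-1}\mathbf{k}_\ast$, while the approximated GP replaces $(\mathbf{A},\mathbf{k}_\ast,k_{\ast\ast})$ by $(\mathbf{B},\mathbf{k}_\ast',k_{\ast\ast}')$. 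Because the sparse-spectrum approximation is defined by the same sampled frequencies for every input, the $\lambda$-closeness of the hypothesis extends to the augmented Gram matrix over $\{\mathbf{x}_1,\dots,\mathbf{x}_n,\mathbf{x}_\ast\}$, so on the same $1-\delta$ event I obtain simultaneously $\|\mathbf{K}-\mathbf{K}'\|_2\le\lambda$, $\|\mathbf{k}_\ast-\mathbf{k}_\ast'\|_2\le\lambda$, and $|k_{\ast\ast}-k_{\ast\ast}'|\le\lambda$ (each being the operator norm of a submatrix of the augmented perturbation).

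The crux is converting additive spectral closeness into a \emph{relative} spectral approximation, which is where the factor $\lambda/\sigma^2$ originates. Since $\mathbf{K},\mathbf{K}'\succeq\mathbf{0}$, the bound $-\lambda\mathbf{I}\preceq\mathbf{K}-\mathbf{K}'\preceq\lambda\mathbf{I}$ together with $\lambda\mathbf{I}=(\lambda/\sigma^2)\sigma^2\mathbf{I}\preceq(\lambda/\sigma^2)\mathbf{B}$ yields the sandwich $(1-\lambda/\sigma^2)\mathbf{B}\preceq\mathbf{A}\preceq(1+\lambda/\sigma^2)\mathbf{B}$; inverting reverses the order, giving $(1+\lambda/\sigma^2)^{-1}\mathbf{B}^{-1}\preceq\mathbf{A}^{-1}\preceq(1-\lambda/\sigma^2)^{-1}\mathbf{B}^{-1}$, and the floor $\mathbf{A},\mathbf{B}\succeq\sigma^2\mathbf{I}$ supplies the operator bounds $\|\mathbf{A}^{-1}\|_2,\|\mathbf{B}^{-1}\|_2\le1/\sigma^2$.

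For the variance I would feed the inverted sandwich into the quadratic form: $\mathbf{k}_\ast^\top\mathbf{A}^{-1}\mathbf{k}_\ast$ lies within a $(1\pm\lambda/\sigma^2)$ factor of $\mathbf{k}_\ast^\top\mathbf{B}^{-1}\mathbf{k}_\ast$ after expanding $(1\mp\lambda/\sigma^2)^{\mp1}$ to first order, and since $\mathbf{k}_\ast^\top\mathbf{B}^{-1}\mathbf{k}_\ast=k_{\ast\ast}-\mathbb{V}[g'(\mathbf{x}_\ast)]$ the common prior-variance term separates out and produces exactly $\mathbb{V}[g(\mathbf{x}_\ast)]=(1\pm\lambda/\sigma^2)\mathbb{V}[g'(\mathbf{x}_\ast)]\pm\lambda/\sigma^2$. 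The additive slack absorbs the $|k_{\ast\ast}-k_{\ast\ast}'|\le\lambda$ discrepancy and the cross terms created by replacing $\mathbf{k}_\ast'$ with $\mathbf{k}_\ast$, both controlled by $\lambda/\sigma^2$ after using $\|\mathbf{B}^{-1}\|_2\le1/\sigma^2$ and $\|\mathbf{k}_\ast\|_2,\|\mathbf{k}_\ast'\|_2\le\sqrt{n}$.

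For the mean I would start from the resolvent identity $\mathbf{A}^{-1}-\mathbf{B}^{-1}=-\mathbf{A}^{-1}(\mathbf{K}-\mathbf{K}')\mathbf{B}^{-1}$ and split $\mathbf{k}_\ast^\top\mathbf{A}^{-1}\mathbf{y}-\mathbf{k}_\ast'^\top\mathbf{B}^{-1}\mathbf{y}$ into an inverse-perturbation term and a term $(\mathbf{k}_\ast-\mathbf{k}_\ast')^\top\mathbf{A}^{-1}\mathbf{y}$ from perturbing the cross-covariance. \textbf{The hard part will be} turning this into a genuinely \emph{relative} $(1\pm\lambda/\sigma^2)$ bound rather than a merely additive one: unlike the variance, the posterior mean is a \emph{bilinear} form in the two distinct vectors $\mathbf{k}_\ast$ and $\mathbf{y}$, so the PSD sandwich does not transfer directly and in the worst case these vectors could be nearly $\mathbf{B}^{-1}$-orthogonal. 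I would pair the two resolvent factors via Cauchy--Schwarz in the $\mathbf{B}^{-1}$-geometry so the error is controlled by $\big(\mathbf{k}_\ast^\top\mathbf{B}^{-1}\mathbf{k}_\ast\big)^{1/2}\big(\mathbf{y}^\top\mathbf{B}^{-1}\mathbf{y}\big)^{1/2}$ and then relate this quantity back to $\mathbb{E}[g'(\mathbf{x}_\ast)]$; this is the step I expect to require the most care. Since all three operator-norm bounds come from the single $1-\delta$ event, the two conclusions hold together with probability $1-\delta$.
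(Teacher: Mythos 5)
Your core matrix lemma and your variance argument coincide with the paper's. The paper's proof routes Theorem~\ref{theorem:3} through Lemma~\ref{l6} (exactly your sandwich $(1-\lambda/\sigma^2)\mathbf{Q}'^{-1}\preceq\mathbf{Q}^{-1}\preceq(1+\lambda/\sigma^2)\mathbf{Q}'^{-1}$, proved by the same $\lambda\mathbf{I}\preceq(\lambda/\sigma^2)\mathbf{Q}'$ trick and the $\mathbf{A}^{\pm 1/2}$ conjugation) and then Lemma~\ref{l8} for the variance, which is your computation with $k_{\ast\ast}=1$. Note, however, that the paper keeps $\mathbf{k}_\ast$ and $k_{\ast\ast}$ \emph{exact} and only perturbs the training Gram matrix; your extra terms from $\mathbf{k}_\ast-\mathbf{k}_\ast'$ and $k_{\ast\ast}-k_{\ast\ast}'$ are not needed for the statement as the paper interprets it, and as you bound them ($\|\mathbf{k}_\ast-\mathbf{k}_\ast'\|_2\le\lambda$ combined with $\|\mathbf{k}_\ast\|_2\le\sqrt{n}$ and $\|\mathbf{Q}'^{-1}\|_2\le 1/\sigma^2$) they produce an additive $\lambda\sqrt{n}/\sigma^2$, which does \emph{not} fit inside the claimed $\pm\lambda/\sigma^2$; drop them.

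The genuine gap is the mean bound, and you correctly identified where it lives but did not close it. The paper's device (Lemma~\ref{l7}) is the polarization identity
\begin{equation*}
2\,\mathbf{k}_\ast^\top\mathbf{Q}^{-1}\mathbf{y} \;=\; (\mathbf{k}_\ast+\mathbf{y})^\top\mathbf{Q}^{-1}(\mathbf{k}_\ast+\mathbf{y}) - \mathbf{k}_\ast^\top\mathbf{Q}^{-1}\mathbf{k}_\ast - \mathbf{y}^\top\mathbf{Q}^{-1}\mathbf{y} \ ,
\end{equation*}
which turns the bilinear form into three quadratic forms, each of which individually inherits the $(1\pm\lambda/\sigma^2)$ factor from the PSD sandwich; the paper then factors $(1\pm\lambda/\sigma^2)$ out of the whole bracket. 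Your resolvent-plus-Cauchy--Schwarz route instead yields an error of order $(\lambda/\sigma^2)\bigl(\mathbf{k}_\ast^\top\mathbf{Q}'^{-1}\mathbf{k}_\ast\bigr)^{1/2}\bigl(\mathbf{y}^\top\mathbf{Q}'^{-1}\mathbf{y}\bigr)^{1/2}$, which can exceed $(\lambda/\sigma^2)\,|\mathbf{k}_\ast^\top\mathbf{Q}'^{-1}\mathbf{y}|$ by an arbitrary factor when $\mathbf{k}_\ast$ and $\mathbf{y}$ are nearly $\mathbf{Q}'^{-1}$-orthogonal, so it cannot by itself deliver the purely multiplicative statement; you flagged this as the hard step and left it open, which is a real gap in the proposal. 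To be fair, the same cancellation issue you diagnosed is also present in the paper's own step of pulling a single $(1\pm\lambda/\sigma^2)$ factor out of a signed difference of three terms that each carry their own $(1\pm\lambda/\sigma^2)$; the polarization identity is the ingredient you are missing to reproduce the paper's argument, but it relocates rather than eliminates the difficulty you pointed out.
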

\begin{proof}
This follows directly from Lemma~\ref{l7} and Lemma~\ref{l8} in Appendix~\ref{appendix:b}.
\end{proof}
Finally, Theorem~\ref{theorem:4} analyzes how close the approximated predictive mean is to the full GP predictive mean when both are evaluated at the optimizer of their respective training objective.

\begin{theorem}
\label{theorem:4}
Let $\delta < 1$ be a user-specified confidence as defined in Theorem~\ref{theorem:2}. Let $\mathbf{K}'$ denote an approximation to $\mathbf{K}$ for which $\|\mathbf{K} - \mathbf{K}'\|^2_2 \leq \lambda^2$ with probability at least $1 - \delta$ uniformly over the entire parameter space. Let $\boldsymbol{\Theta}_\ast$ and $\boldsymbol{\Theta}'_\ast$ denote the optimal hyperparameters obtained by respectively minimizing the negative log likelihood of a full GP and the approximated GP. With probability $1 - \delta$, the following holds:
\begin{eqnarray}
\mathbb{E}[g'(\mathbf{x}_\ast;\boldsymbol{\Theta}'_\ast)] &=& \left(1 \pm \rho(\lambda,\sigma,\boldsymbol{\Theta}_\ast,\boldsymbol{\Theta}'_\ast)\right)\cdot\mathbb{E}[g(\mathbf{x}_\ast;\boldsymbol{\Theta}_\ast)] \ \ +\ \  \wp(\lambda,\sigma,\boldsymbol{\Theta}_\ast,\boldsymbol{\Theta}'_\ast)
\end{eqnarray}
where $\rho(\lambda,\sigma,\boldsymbol{\Theta}_\ast,\boldsymbol{\Theta}'_\ast)$ and $\wp(\lambda,\sigma,\boldsymbol{\Theta}_\ast,\boldsymbol{\Theta}'_\ast)$ are constants with respect to  $\lambda,\sigma,\boldsymbol{\Theta}_\ast, \boldsymbol{\Theta}'_\ast$.
\end{theorem}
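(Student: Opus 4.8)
The plan is to bridge the two predictors through a single common parameter and then absorb the residual parameter mismatch into the reported constants. The crucial enabling fact is that the spectral bound of Theorem~\ref{theorem:2} holds \emph{uniformly} over the whole parameter space; in particular it remains valid at the data-dependent optimizer $\boldsymbol{\Theta}'_\ast$ of the approximated model, which is not known a priori and so could not be handled by a fixed-parameter guarantee. I would therefore first invoke Theorem~\ref{theorem:3} at the single parameter $\boldsymbol{\Theta}'_\ast$, and then separately relate the full-GP predictor evaluated at $\boldsymbol{\Theta}'_\ast$ to the full-GP predictor evaluated at $\boldsymbol{\Theta}_\ast$.

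\emph{Step one: apply Theorem~\ref{theorem:3} at $\boldsymbol{\Theta}'_\ast$.} Since $\|\mathbf{K}_{\boldsymbol{\Theta}'_\ast} - \mathbf{K}'_{\boldsymbol{\Theta}'_\ast}\|_2 \le \lambda$ holds with probability $1-\delta$ by the uniform version of Theorem~\ref{theorem:2}, Theorem~\ref{theorem:3} applies at the common parameter $\boldsymbol{\Theta}'_\ast$ and yields
\begin{equation}
\mathbb{E}[g(\mathbf{x}_\ast;\boldsymbol{\Theta}'_\ast)] \;=\; \left(1 \pm \frac{\lambda}{\sigma^2}\right)\mathbb{E}[g'(\mathbf{x}_\ast;\boldsymbol{\Theta}'_\ast)] \, ,
\end{equation}
which I would rearrange to express $\mathbb{E}[g'(\mathbf{x}_\ast;\boldsymbol{\Theta}'_\ast)]$ as $\left(1 \pm \lambda/\sigma^2\right)$ times $\mathbb{E}[g(\mathbf{x}_\ast;\boldsymbol{\Theta}'_\ast)]$ (the two-sided factor is preserved up to the sign of the $\lambda/\sigma^2$ term). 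This isolates the multiplicative distortion arising purely from the Gram-matrix approximation, leaving only a comparison of the full GP at $\boldsymbol{\Theta}'_\ast$ versus at $\boldsymbol{\Theta}_\ast$.

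\emph{Step two: relate $\mathbb{E}[g(\mathbf{x}_\ast;\boldsymbol{\Theta}'_\ast)]$ to $\mathbb{E}[g(\mathbf{x}_\ast;\boldsymbol{\Theta}_\ast)]$.} Writing the full-GP mean in closed form, $\mu(\boldsymbol{\Theta}) = \mathbf{k}_\ast(\boldsymbol{\Theta})^\top(\mathbf{K}_{\boldsymbol{\Theta}}+\sigma^2\mathbf{I})^{-1}\mathbf{y}$, both $\mathbf{k}_\ast$ and $(\mathbf{K}_{\boldsymbol{\Theta}}+\sigma^2\mathbf{I})^{-1}$ are smooth functions of $\boldsymbol{\Theta}$, so I may write $\mu(\boldsymbol{\Theta}'_\ast) = (1+\eta)\,\mu(\boldsymbol{\Theta}_\ast) + \beta$ for quantities $\eta,\beta$ determined entirely by $\boldsymbol{\Theta}_\ast,\boldsymbol{\Theta}'_\ast$ and $\sigma$. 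I would control $\eta,\beta$ by combining the optimality conditions $\ell(\boldsymbol{\Theta}_\ast)\le\ell(\boldsymbol{\Theta}'_\ast)$ and $\ell'(\boldsymbol{\Theta}'_\ast)\le\ell'(\boldsymbol{\Theta}_\ast)$ with the uniform closeness of the two negative log-likelihoods (the model-evidence half associated with Theorem~\ref{theorem:3}), which sandwiches $\ell(\boldsymbol{\Theta}_\ast)$ and $\ell(\boldsymbol{\Theta}'_\ast)$ to within $\mathbf{O}(\lambda)$ and thereby certifies $\boldsymbol{\Theta}'_\ast$ as a near-optimizer of the full objective.

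Substituting $\mu(\boldsymbol{\Theta}'_\ast)$ from Step two into Step one and collecting the multiplicative and additive pieces gives $\mathbb{E}[g'(\mathbf{x}_\ast;\boldsymbol{\Theta}'_\ast)] = (1\pm\rho)\,\mathbb{E}[g(\mathbf{x}_\ast;\boldsymbol{\Theta}_\ast)] + \wp$, with $\rho$ assembled from the $\lambda/\sigma^2$ factor of Step one and the mismatch factor $\eta$ of Step two, and $\wp$ from $\beta$; by construction both are functions of $\lambda,\sigma,\boldsymbol{\Theta}_\ast,\boldsymbol{\Theta}'_\ast$ only. The high-probability events invoked (the spectral bound and the evidence bound at $\boldsymbol{\Theta}'_\ast$) are the same event of Theorem~\ref{theorem:2}, so no extra union bound is needed and the conclusion holds with probability $1-\delta$. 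I expect the main obstacle to be Step two: converting closeness of the objective \emph{values} at $\boldsymbol{\Theta}_\ast$ and $\boldsymbol{\Theta}'_\ast$ into a genuine bound on the prediction gap, since the GP likelihood is non-convex in the hyperparameters and near-optimal evidence need not imply proximity in parameter space. This is precisely why the reported constants $\rho$ and $\wp$ are permitted to depend on both $\boldsymbol{\Theta}_\ast$ and $\boldsymbol{\Theta}'_\ast$ rather than on $\lambda,\sigma$ alone.
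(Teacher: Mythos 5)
Your Step 1 is sound and correctly identifies why uniformity over the parameter space matters (Theorem~\ref{theorem:3} must be invokable at the data-dependent $\boldsymbol{\Theta}'_\ast$), but your Step 2 contains a genuine gap, and it is precisely the one you flag yourself. Writing $\mu(\boldsymbol{\Theta}'_\ast) = (1+\eta)\mu(\boldsymbol{\Theta}_\ast) + \beta$ is vacuously true for any two real numbers; the content would have to come from a quantitative bound on $\eta$ and $\beta$, and the mechanism you propose — combining the two optimality inequalities $\ell(\boldsymbol{\Theta}_\ast)\le\ell(\boldsymbol{\Theta}'_\ast)$, $\ell'(\boldsymbol{\Theta}'_\ast)\le\ell'(\boldsymbol{\Theta}_\ast)$ with multiplicative closeness of the objectives — only certifies that $\boldsymbol{\Theta}'_\ast$ nearly optimizes the full objective's \emph{value}. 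Because the marginal likelihood is non-convex in the hyperparameters, this does not localize $\boldsymbol{\Theta}'_\ast$ near $\boldsymbol{\Theta}_\ast$, and the predictive mean $\mu(\cdot)$ is not a function of the objective value alone, so no bound on the prediction gap follows. As written, the proof does not close.

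The paper avoids this obstacle entirely by never comparing the full GP at two different parameters. In Lemma~\ref{l12} it uses the polarization identity $2\mathbf{k}_\ast^\top\mathbf{Q}^{-1}\mathbf{y} = (\mathbf{k}_\ast+\mathbf{y})^\top\mathbf{Q}^{-1}(\mathbf{k}_\ast+\mathbf{y}) - \mathbf{k}_\ast^\top\mathbf{Q}^{-1}\mathbf{k}_\ast - \mathbf{y}^\top\mathbf{Q}^{-1}\mathbf{y}$, positive semidefiniteness of $\mathbf{Q}^{-1}$, and $\mathbf{k}_\ast^\top\mathbf{Q}^{-1}\mathbf{k}_\ast\le 1$ to sandwich each predictive mean by a function of its \emph{own} objective value plus a log-determinant term,
\begin{equation*}
\mathbb{E}[g(\mathbf{x}_\ast;\boldsymbol{\Theta}_\ast)] \;=\; \pm\frac{1}{2}\left[\ell(\boldsymbol{\Theta}_\ast) + 1 - \sum_{i=1}^n\log\left(\lambda_i+\sigma^2\right)\Big|_{\boldsymbol{\Theta}_\ast}\right],
\end{equation*}
and symmetrically for $\mathbb{E}[g'(\mathbf{x}_\ast;\boldsymbol{\Theta}'_\ast)]$ in terms of $\ell'(\boldsymbol{\Theta}'_\ast)$. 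The parameter mismatch is then handled entirely at the level of optimal objective values via Lemma~\ref{l11} — which is exactly the sandwich you wrote down, $\ell'(\boldsymbol{\Theta}'_\ast) = \left(1\pm\max\left(\tau_{\lambda,\sigma}(\mathbf{K}),\lambda/\sigma^2\right)\right)\ell(\boldsymbol{\Theta}_\ast)$ — while the residual log-determinant discrepancy between the two parameter settings is simply absorbed into the additive constant $\wp$. So the reason $\rho$ and $\wp$ are allowed to depend on both optimizers is not that they encode a parameter-proximity argument, but that they carry the eigenvalue terms evaluated at each optimizer. If you want to salvage your route, you would need to replace Step 2 with this kind of value-level bound rather than a prediction-level comparison of the full GP across parameters.
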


\begin{proof}
This follows immediately from Lemma~\ref{l11} in Appendix B, which was built on the result of Theorem~\ref{theorem:3} above. This completes our loss analysis for SSGPs.
\end{proof}

\vspace{-2mm}
\subsection{Optimizing Feature Map Complexity}
\label{sec:algo}
\vspace{-2mm}

We next present a practical probabilistic embedding algorithm that transforms the input data to meet the requirements of Conditions $\mathbf{1}$-$\mathbf{3}$. Our method is built on the rich literature of variational auto-encoders (VAE) \cite{Kingma13}, which is a broad class of deep generative models that combine the rigor of Bayesian methods and rich parameterization of (deep) neural networks to discover (non-linear) low-dimensional embeddings of data while preserving their statistical properties. We first provide a short review on VAEs below, followed by an augmentation that aims to achieve the impositions in Conditions $\mathbf{1}$-$\mathbf{3}$ above.

\vspace{-2mm}
\subsubsection{Variational Auto-Encoders (VAEs)}
\label{sec:vae}
\vspace{-2mm}

Let $\mathbf{x}$ be a random variable with density function $p(\mathbf{x})$. We want to learn a latent variable model $p_\theta(\mathbf{x},\mathbf{z}) = p(\mathbf{z})p_\theta(\mathbf{z}|\mathbf{x})$ that captures this generative process. The latent variable model comprises a fixed latent prior $p(\mathbf{z})$ and a parametric likelihood $p_\theta(\mathbf{z}|\mathbf{x})$. To learn $\theta$, we maximize the variational evidence lower-bound (ELBO) $\mathbf{L}(\mathbf{x}; \theta, \phi)$ of $\log p_\theta(\mathbf{x})$:
\begin{eqnarray}
\mathbf{L}(\mathbf{x}; \theta, \phi) &\triangleq&  \mathbb{E}_{\mathbf{z} \sim q_\phi}\Big[\log p_\theta(\mathbf{x}|\mathbf{z})\Big] - \mathbb{KL}\Big(q_\phi(\mathbf{z}||\mathbf{x})||p(\mathbf{z})\Big) \label{eq:vae}
\end{eqnarray}
with respect to an arbitrary posterior surrogate $q_\phi(\mathbf{z}|\mathbf{x}) \simeq p_\theta(\mathbf{z}|\mathbf{x})$ over the latent variable $\mathbf{z}$. The ELBO is always a lower-bound on $\log p_\theta(\mathbf{x})$ regardless of our choice of $q_\phi(\mathbf{z}|\mathbf{x})$. This is due to the non-negativity of the KL divergence as seen in the first part of the above equation.

This can be viewed as a stochastic auto-encoder with $p_\theta(\mathbf{x}|\mathbf{z})$ and $q_\phi(\mathbf{z}|\mathbf{x})$ acting as the encoder and decoder, respectively. Here, $\theta$ and $\phi$ characterize the neural network parameterization of these models. Their learning is enabled via a re-parameterization of $q_\phi(\mathbf{z}|\mathbf{x})$ that enables stochastic gradient ascent.

\subsubsection{Re-configuring Data via an Augmenting Variational Auto-Encoder}
\label{sec:rvae}
To augment the above VAE framework \cite{Kingma13,Yee19} to account for the impositions in Conditions $\mathbf{1}$ and $\mathbf{2}$, we ideally want to configure the parameterization of the above generative process to guarantee that the marginal posterior $q(\mathbf{z}) = \int_{\mathbf{x}} q(\mathbf{z}|\mathbf{x}) p(\mathbf{x})\mathrm{d}\mathbf{x}$ will manifest itself in the form of a mixture of Gaussians with the desired concentration and population densities as stated in Condition $\mathbf{1}$. 

However, it is often difficult to make such an imposition directly given that we typically have no prior knowledge of $p(\mathbf{x})$. We instead impose the desired structure on the latent prior $p(\mathbf{z})$ and then penalize the divergence between $q_{\phi}(\mathbf{z})$ and $p(\mathbf{z})$ while optimizing for the above ELBO in Eq.~\eqref{eq:vae}. That is, we parameterize $p(\mathbf{z}) = \pi_1\ \mathbf{N}(\mathbf{z}; \mathbf{c}_1, \gamma_1^2\boldsymbol{\Theta}^{-1}) + \ldots + \pi_b\ \mathbf{N}(\mathbf{z}; \mathbf{c}_b, \gamma_b^2\boldsymbol{\Theta}^{-1})$ where $\pi_i \propto 2^{i/2}$ (see Condition $\mathbf{2}$), 
which encodes the desired clustering structure. This is then reflected on the marginal posterior $q(\mathbf{z})$ via augmenting the above ELBO as,
\begin{eqnarray}
\hspace{-18.5mm}\mathbf{L}_{\alpha}(\mathbf{x}; \theta, \phi) &\triangleq&  \mathbb{E}_{\mathbf{z} \sim q_\phi}\Big[\log p_\theta(\mathbf{x}|\mathbf{z})\Big] - \mathbb{KL}\Big(q_\phi(\mathbf{z}||\mathbf{x})||p(\mathbf{z})\Big) - \alpha \mathbb{KL}\Big(q(\mathbf{z}) || p(\mathbf{z})\Big) \ ,\label{eq:rvae}
\end{eqnarray}
where the penalty term $\alpha \mathbb{KL}(q(\mathbf{z}) || p(\mathbf{z}))$ serves as an incentive to encourage $q(\mathbf{z})$ to assume the same clustering structure as $p(\mathbf{z})$. The parameter $\alpha$ can be manually set to adjust the strength of the incentive. To encourage separation among learned clusters (see Condition $\mathbf{3}$), we also add an extra penalty term to the above augmented ELBO,
\begin{eqnarray}
\hspace{-12mm}\mathbf{L}_{\alpha,\beta}(\mathbf{x}; \theta, \phi) &\triangleq& \mathbf{L}_{\alpha}(\mathbf{x}; \theta, \phi) + \beta \sum_{i \ne j} \mathbb{KL}\Big(\mathbf{N}\left(\mathbf{z}; \mathbf{c}_i, \gamma_i^2\boldsymbol{\Theta}^{-1}\mathbf{I}\right) \ ||\ \mathbf{N}\left(\mathbf{z}; \mathbf{c}_j, \gamma_j^2\boldsymbol{\Theta}^{-1}\mathbf{I}\right)\Big) \ . \label{eq:final-elbo}
\end{eqnarray}
Once these clusters are learned, we can use the resulting encoding network $q_\phi(\mathbf{z}|\mathbf{x})$ to transform each training input $\mathbf{x}$ into its latent projection and subsequently train an SSGP on the latent space of $\mathbf{z}$ (instead of training it on the original data space). Our previous analysis can then be applied on $\mathbf{z}$ to give the desired sample complexity. The empirical efficiency of the proposed method is demonstrated in Section~\ref{sec:exp} below. Note that the cost of training the embedding is linear in the number of data points and therefore does not noticeably affect our overall running time. 

\vspace{-2mm}
\section{Experiments}
\vspace{-2mm}
\label{sec:exp}
{\bf Datasets.} This section presents our empirical studies on two real
datasets: (a) the ABALONE dataset \cite{abalone} with 3000 data points which was used to train a model that predicts the age of abalone (number of rings on its shell) from physical measurements such as length, diameter, height, whole weight, shucked weight, viscera weight and shell weight; and (b) the GAS SENSOR dataset with 4 million data points \cite{burgues2018estimation,burgues2018multivariate} which was used to train a model that predicts the CO concentration (ppm) from measurements of humidity, temperature, flow rate, heater voltage and the resistant measures of $14$ gas sensors. 

In both settings, we compare our revised SSGP method with the traditional SSGP on both datasets to demonstrate its sample efficiency. In particular, our SSGP method is applied on the embedded space of data which was generated and configured using the auto-encoding method in Section~\ref{sec:rvae} to approximately meet the aforementioned Conditions $\mathbf{1}$-$\mathbf{3}$. 

The detailed parameterization of our entire algorithm\footnote{Our experimental code is released at \url{https://github.com/hqminh/gp_sketch_nips}.} is provided in Appendix C. The traditional SSGP method on the other hand was applied directly to the data space. The prediction root-mean-square-error (RMSE) achieved by each method is reported at different sample complexities in Figure~\ref{fig:abalone} below. All reported performances were averaged over $5$ independent runs on a computing server with a Tesla K40 GPU with 12GB RAM.

\begin{figure}[t]
\begin{tabular}{ccc}
\hspace{-3mm}\includegraphics[width=0.32\textwidth]{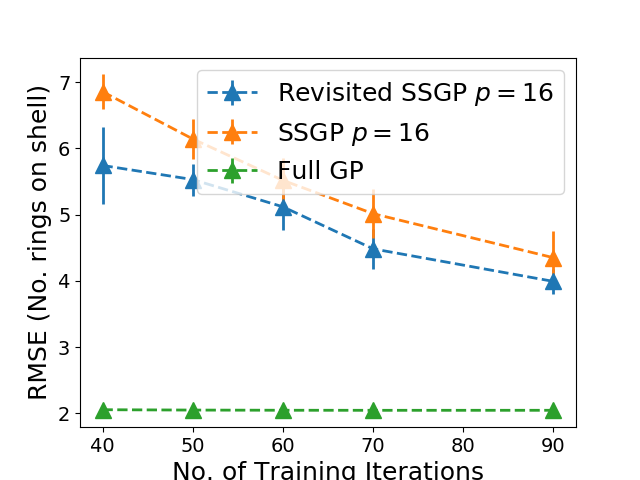} &
\hspace{-2mm}\includegraphics[width=0.32\textwidth]{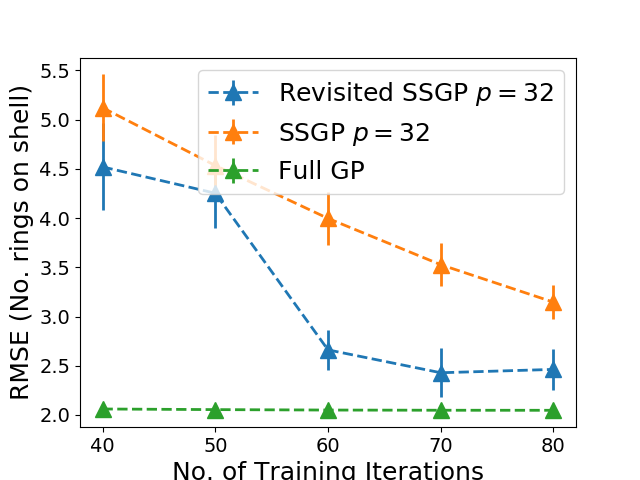} &
\hspace{-2mm}\includegraphics[width=0.32\textwidth]{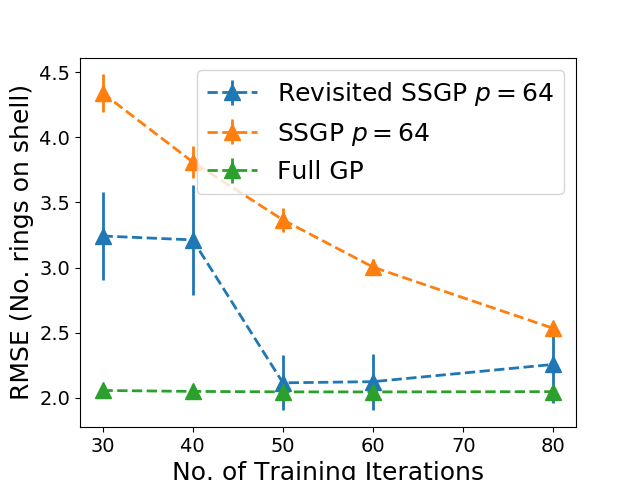} \\
\hspace{-3mm} (a) & \hspace{-2mm} (b) & \hspace{-2mm} (c)
\end{tabular}    
\caption{Graphs of performance comparisons between our revised SSGP and the traditional SSGP on the ABALONE dataset \cite{abalone} at varying sample complexities (see Theorem~\ref{l2}) $p = 16$, $32$ and $64$.}
\label{fig:abalone}
\vspace{-2mm}
\end{figure}

{\bf Results and Discussions.} It can be observed from the results that at all levels of sample complexity, the revised SSGP achieves substantially better performance than its vanilla SSGP counterpart. This is expected since our revised SSGP is guaranteed to require many fewer samples than the vanilla SSGP when the data is reconfigured to exhibit a certain clustering structure (see Conditions $\mathbf{1}$-$\mathbf{3}$ and Theorem~\ref{theorem:2}). As such, when both are set to operate at the same level of sample complexity, one would expect the revised SSGP to achieve better performance since SSGP generally performs better when its sample complexity is set closer to the required threshold. On the larger GAS SENSOR dataset (which contains approximately $4$M data points), we also observe the same phenomenon from the performance comparison graph as shown in Figure~\ref{fig:gas-embed}a below: A vanilla SSGP needs to increase its number of samples to marginally improve its predictive performance while our revisited SSGP is able to outperform the former with the least number of samples ($p = 16$).

\begin{figure}[h!]
\vspace{-3mm}
\begin{tabular}{ccc}
\hspace{-2.5mm}\includegraphics[width=0.32\textwidth]{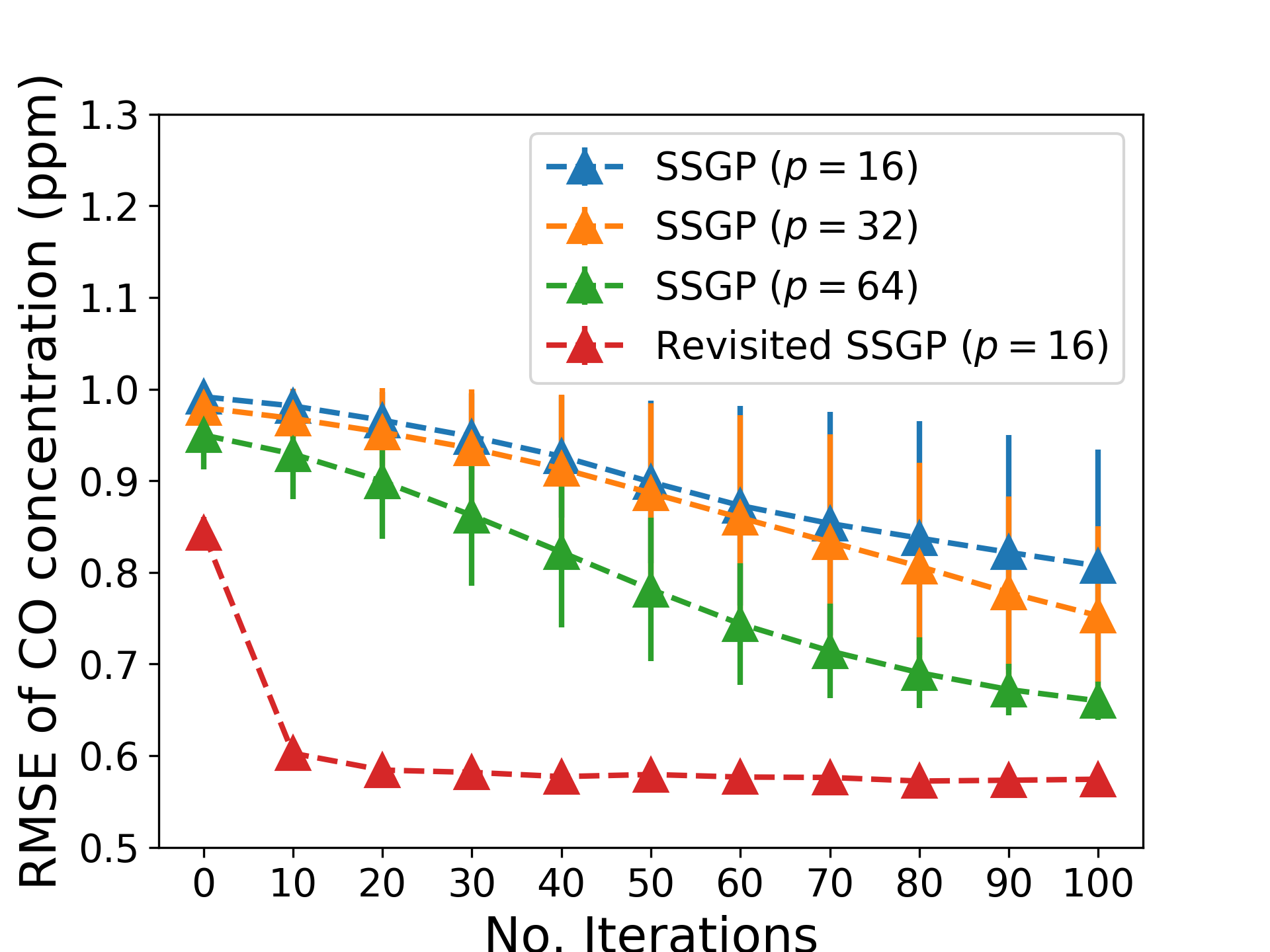} &
\hspace{-3mm}\includegraphics[width=0.33\textwidth]{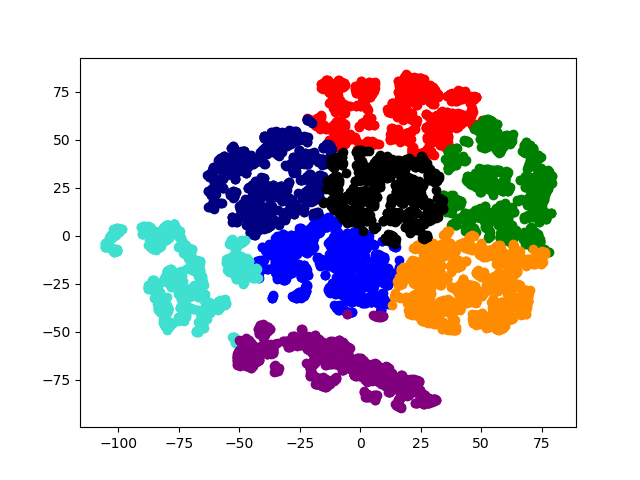} &
\hspace{-3mm}\includegraphics[width=0.33\textwidth]{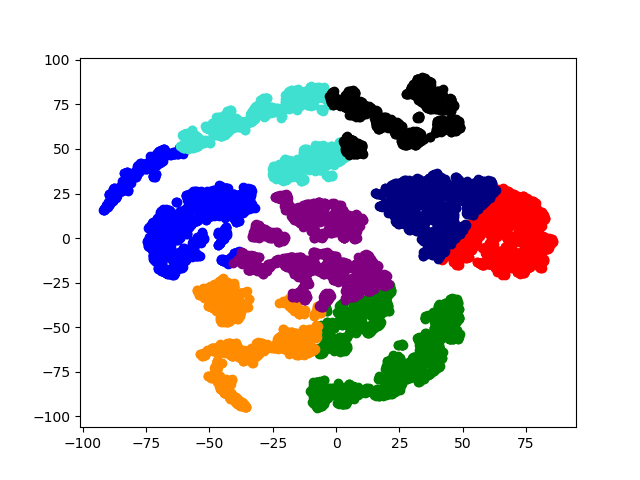} \\
\hspace{-4mm} (a) &
\hspace{-3mm} (b) &
\hspace{-3mm} (c)
\end{tabular}    
\caption{Graphs of (a) performance comparison between our revisited SSGP's (with sample complexity $p = 16$) and the vanilla SSGP's (with sample complexity $p = 16, 32, 64$) on the GAS SENSOR dataset \cite{gas-data}; and visualizations of (b) original and (c) reconfigured data distributions of GAS SENSOR data on a $2$-dimensional latent space generated by our auto-encoding algorithm in  Section~\ref{sec:algo}.}
\label{fig:gas-embed}
\end{figure}

Furthermore, a closer look into the data distribution (visualized on a 2D space in Fig.~\ref{fig:gas-embed}b) and the data reconfigured data distribution (visualized on a 2D space in Fig.~\ref{fig:gas-embed}c) also corroborates our hypothesis earlier that a well-separated data partition with high in-cluster concentration (in the form of a mixture of clusters -- see Condition $\mathbf{1}$) can be found (by our embedding algorithm in Section~\ref{sec:algo}) to reconfigure our data distribution to (approximately) meet the necessary technical conditions that enable our sample-complexity enhancement analysis (see Section~\ref{sec:bound}). Due to limited space, interested readers are referred to Appendix D for more detailed empirical studies and demonstrations.

\vspace{-2mm}
\section{Conclusion}
\label{sec:conclude}
\vspace{-2mm}

We present a new method and analysis for approximating Gaussian processes. We obtain provable guarantees for both training and inference, which are the first to hold simultaneously over the entire space of kernel parameters. Our results complement existing work in kernel approximation that often assumes knowledge of its defining parameters. Our results also reveal important (practical) insights that allow us to develop an  algorithmic handle on the tradeoff between approximation quality and sample complexity, which is achieved via finding an embedding that disentangles the latent coordinates of data. Our empirical results show for many datasets, such a disentangled embedding space can be found, which leads to a significantly reduced sample complexity of SSGP. 

\vspace{-2mm}
\section{Statement of Broader Impact}
\vspace{-2mm}

Our work focuses on approximating Gaussian processes using a mixture of practical methods and theoretical analysis to reconfigure data in ways that reduce their approximation complexity. As such, it could have significant broader impact by allowing users to more accurately solve practical problems such as the ones discussed in our introduction, while still providing concrete theoretical guarantees. While applications of our work to real data could result in ethical considerations, this is an indirect (and unpredictable) side-effect of our work. Our experimental work uses publicly available datasets to evaluate the performance of our algorithms; no ethical considerations are raised.

\section{Acknowledgement}
T. N. Hoang is supported by the MIT-IBM Watson AI Lab, IBM Research. Q. M. Hoang is supported by the Gordon and Betty Moore Foundation’s Data-Driven Discovery Initiative through Grant GBMF4554,  by the US National Science Foundation (DBI-1937540), by the US National Institutes of Health (R01GM122935), and by the generosity of Eric and Wendy Schmidt by recommendation of the Schmidt Futures program. D. Woodruff is supported by National Institute of Health grant 5R01 HG 10798-2, Office of Naval Research grant N00014-18-1-2562, and a Simons Investigator Award.

\bibliography{neurips_2020}

\begin{thebibliography}{10}

\bibitem{Rakshit17}
Rakshit Allamraju and Girish Chowdhary.
\newblock Communication efficient decentralized gaussian process fusion for
  multi-uas path planning.
\newblock In {\em American Control Conference}, pages 4442--4447, 05 2017.

\bibitem{Ansell06}
T.~J. {Ansell et al.}
\newblock Daily mean sea level pressure reconstructions for the {European-North
  Atlantic} region for the period $1850$-$2003$.
\newblock {\em J. Climate}, 19(12):2717--2742, 2006.

\bibitem{Musco17}
H.~Avron, M.~Kapralov, C.~Musco, C.~Musco, A.~Velingker, and A.~Zandieh.
\newblock Random fourier features for kernel ridge regression: Approximation
  bounds and statistical guarantees.
\newblock In {\em Proc. ICML}, pages 253--262, 2017.

\bibitem{gas-data}
Javier Burgues.
\newblock {Gas Sensor Array Temperature Modulation Dataset}, howpublished=
  {\url{https://archive.ics.uci.edu/ml/machine-learning-databases/00487/}}.

\bibitem{burgues2018estimation}
Javier Burgu{\'e}s, Juan~Manuel Jim{\'e}nez-Soto, and Santiago Marco.
\newblock Estimation of the limit of detection in semiconductor gas sensors
  through linearized calibration models.
\newblock {\em Analytica chimica acta}, 1013:13--25, 2018.

\bibitem{burgues2018multivariate}
Javier Burgu{\'e}s and Santiago Marco.
\newblock Multivariate estimation of the limit of detection by orthogonal
  partial least squares in temperature-modulated mox sensors.
\newblock {\em Analytica chimica acta}, 1019:49--64, 2018.

\bibitem{LowAAMAS13}
N.~Cao, K.~H. Low, and J.~M. Dolan.
\newblock Multi-robot informative path planning for active sensing of
  environmental phenomena: A tale of two algorithms.
\newblock In {\em Proc. {AAMAS}}, pages 7--14, 2013.

\bibitem{LowUAI13}
J.~Chen, N.~Cao, K.~H. Low, R.~Ouyang, C.~K.-Y. Tan, and P.~Jaillet.
\newblock Parallel {Gaussian} process regression with low-rank covariance
  matrix approximations.
\newblock In {\em Proc. UAI}, pages 152--161, 2013.

\bibitem{LowUAI12}
J.~Chen, K.~H. Low, C.~K.-Y. Tan, A.~Oran, P.~Jaillet, J.~M. Dolan, and G.~S.
  Sukhatme.
\newblock Decentralized data fusion and active sensing with mobile sensors for
  modeling and predicting spatiotemporal traffic phenomena.
\newblock In {\em Proc. UAI}, pages 163--173, 2012.

\bibitem{LowRSS13}
Jie Chen, Kian~Hsiang Low, and Colin Tan.
\newblock Gaussian process-based decentralized data fusion and active sensing
  for mobility-on-demand system.
\newblock {\em Robotics: Science and System}, 06 2013.

\bibitem{Chernoff52}
H.~Chernoff.
\newblock A measure of asymptotic efficiency for tests of hypothesis based on
  the sum of observations.
\newblock {\em Annals of Mathematical Statistics}, 23:493--509, 1952.

\bibitem{LowSPIE09}
J.~M. Dolan, G.~Podnar, S.~Stancliff, K.~H. Low, A.~Elfes, J.~Higinbotham,
  J.~C. Hosler, T.~A. Moisan, and J.~Moisan.
\newblock Cooperative aquatic sensing using the telesupervised adaptive ocean
  sensor fleet.
\newblock In {\em Proc. {SPIE} Conference on Remote Sensing of the Ocean, Sea
  Ice, and Large Water Regions}, volume 7473, 2009.

\bibitem{Yarin14}
Y.~Gal, M.~{van der Wilk}, and C.~Rasmussen.
\newblock Distributed variational inference in sparse {G}aussian process
  regression and latent variable models.
\newblock In {\em Proc. NIPS}, 2014.

\bibitem{Gal15}
Yarin Gal and Richard Turner.
\newblock Improving the gaussian process sparse spectrum approximation by
  representing uncertainty in frequency inputs.
\newblock 2015.

\bibitem{Hensman13}
J.~Hensman, N.~Fusi, and N.~D. Lawrence.
\newblock Gaussian processes for big data.
\newblock In {\em Proc. UAI}, pages 282--290, 2013.

\bibitem{MinhICML2020}
M.~Hoang and C.~Kingsford.
\newblock Optimizing dynamic structures with bayesian generative search.
\newblock In {\em Internation Conference on Machine Learning}, 2020.

\bibitem{NghiaAAAI17}
Q.~M. Hoang, T.~N. Hoang, and K.~H. Low.
\newblock A generalized stochastic variational {B}ayesian hyperparameter
  learning framework for sparse spectrum {G}aussian process regression.
\newblock In {\em Proc. {AAAI}}, pages 2007--2014, 2017.

\bibitem{NghiaICML19a}
Q.~M. Hoang, T.~N. Hoang, K.~H. Low, and C.~Kingsford.
\newblock Collective model fusion for multiple black-box experts.
\newblock In {\em Proc. ICML}, 2019.

\bibitem{NghiaICML15}
T.~N. Hoang, Q.~M. Hoang, and K.~H. Low.
\newblock A unifying framework of anytime sparse {Gaussian} process regression
  models with stochastic variational inference for big data.
\newblock In {\em Proc. {ICML}}, pages 569--578, 2015.

\bibitem{NghiaICML16}
T.~N. Hoang, Q.~M. Hoang, and K.~H. Low.
\newblock A distributed variational inference framework for unifying parallel
  sparse {G}aussian process regression models.
\newblock In {\em Proc. {ICML}}, pages 382--391, 2016.

\bibitem{NghiaAAAI19}
T.~N. Hoang, Q.~M. Hoang, K.~H. Low, and J.~P. How.
\newblock Collective online learning of {G}aussian processes in massive
  multi-agent systems.
\newblock In {\em Proc. {AAAI}}, 2019.

\bibitem{NghiaAAAI18}
T.~N. Hoang, Q.~M. Hoang, O.~Ruofei, and K.~H. Low.
\newblock Decentralized high-dimensional bayesian optimization with factor
  graphs.
\newblock In {\em Proc. AAAI}, 2018.

\bibitem{NghiaICML14}
T.~N. Hoang, K.~H. Low, P.~Jaillet, and M.~Kankanhalli.
\newblock Nonmyopic $\epsilon$-{B}ayes-optimal active learning of {G}aussian
  processes.
\newblock In {\em Proc. ICML}, pages 739--747, 2014.

\bibitem{NghiaECMLKDD14}
T.~N. Hoang, K.~H. Low, P.~Jaillet, and M.~S. Kankanhalli.
\newblock Active learning is planning: Non-myopic $\epsilon$-{B}ayes-optimal
  active learning of {G}aussian processes.
\newblock In {\em Proc. {ECML-PKDD Nectar Track}}, pages 494--498, 2014.

\bibitem{Hoeffding63}
W.~Hoeffding.
\newblock Probability inequalities for the sum of bounded random variables.
\newblock {\em Journal of the American Statistical Association}, 58:13--30,
  1963.

\bibitem{Kingma13}
D.~Kingma and M.~Welling.
\newblock Auto-{E}ncoding {V}ariational {B}ayes.
\newblock In {\em Proc. {ICLR}}, 2013.

\bibitem{Andreas07}
A.~Krause and C.~Guestrin.
\newblock Nonmyopic active learning of {Gaussian} processes: An
  exploration{-}exploitation approach.
\newblock In {\em Proc. ICML}, pages 449--456, 2007.

\bibitem{Miguel10}
M.~{L\'{a}zaro}-Gredilla, J.~{Qui\~{n}onero}-Candela, C.~E. Rasmussen, and
  A.~R. Figueiras-Vidal.
\newblock Sparse spectrum {G}aussian process regression.
\newblock {\em Journal of Machine Learning Research}, pages 1865--1881, 2010.

\bibitem{LowAAAI15}
K.~H. Low, J.~Yu, J.~Chen, and P.~Jaillet.
\newblock Parallel {Gaussian} process regression for big data: Low-rank
  representation meets {M}arkov approximation.
\newblock In {\em Proc. {AAAI}}, pages 2821--2827, 2015.

\bibitem{Yee19}
Emile Mathieu, Tom Rainforth, Siddharth Narayanaswamy, and Yee~Whye Teh.
\newblock Disentangling disentanglement in variational autoencoders.
\newblock In {\em ICML}, 2019.

\bibitem{Mohri18}
Mehryar Mohri, Afshin Rostamizadeh, and Ameet Talwalkar.
\newblock {\em Foundations of Machine Learning}.
\newblock MIT press, 2018.

\bibitem{Musco16}
C.~Musco and C.~Musco.
\newblock Recursive sampling for the nystrom method.
\newblock In {\em Proc. NIPS}, 2016.

\bibitem{Candela05}
J.~{Qui\~{n}onero}-Candela and C.~E. Rasmussen.
\newblock A unifying view of sparse approximate {Gaussian} process regression.
\newblock {\em Journal of Machine Learning Research}, 6:1939--1959, 2005.

\bibitem{Candela07}
J.~{Qui\~{n}onero}-Candela, C.~E. Rasmussen, and C.~K.~I. Williams.
\newblock Approximation methods for gaussian process regression.
\newblock {\em Large-Scale Kernel Machines}, pages 203--223, 2007.

\bibitem{Rahimi07}
A.~Rahimi and B.~Recht.
\newblock Random features for large-scale kernel machines.
\newblock In {\em Proc. NIPS}, 2007.

\bibitem{Rasmussen06}
C.~E. Rasmussen and C.~K.~I. Williams.
\newblock {\em Gaussian Processes for Machine Learning}.
\newblock MIT Press, 2006.

\bibitem{Seeger03}
M.~Seeger, C.~K.~I. Williams, and N.~D. Lawrence.
\newblock Fast forward selection to speed up sparse {Gaussian} process
  regression.
\newblock In {\em Proc. AISTATS}, 2003.

\bibitem{Snoek12}
J.~Snoek, L.~Hugo, and R.~P. Adams.
\newblock Practical {B}ayesian optimization of machine learning algorithms.
\newblock In {\em Proc. {NIPS}}, pages 2960--2968, 2012.

\bibitem{Srinivas10}
N.~Srinivas, A.~Krause, S.~Kakade, and M.~Seeger.
\newblock {G}aussian process optimization in the bandit setting: No regret and
  experimental design.
\newblock In {\em Proc. {ICML}}, pages 1015--1022, 2010.

\bibitem{Titsias09}
M.~K. Titsias.
\newblock Variational learning of inducing variables in sparse {G}aussian
  processes.
\newblock In {\em Proc. {AISTATS}}, 2009.

\bibitem{Maaten08}
L.~J.~P. van~der Maaten and G.~E. Hinton.
\newblock Visualizing high-dimensional data using t-sne.
\newblock {\em Journal of Machine Learning Research}, 9:2579--2605, 2008.

\bibitem{abalone}
Sam Waugh.
\newblock {Abalone Dataset}, howpublished=
  {\url{https://archive.ics.uci.edu/ml/machine-learning-databases/abalone/}}.

\bibitem{LowAAAI14}
N.~Xu, K.~H. Low, J.~Chen, K.~K. Lim, and E.~B. {\"{O}zg\"{u}l}.
\newblock {GP-Localize}: Persistent mobile robot localization using online
  sparse {Gaussian} process observation model.
\newblock In {\em Proc. {AAAI}}, pages 2585--2592, 2014.

\bibitem{Yehong16}
Y.~Zhang, T.~N. Hoang, K.~H. Low, and M.~Kankanhalli.
\newblock Near-optimal active learning of multi-output {G}aussian processes.
\newblock In {\em Proc. {AAAI}}, pages 2351--2357, 2016.

\bibitem{Yehong17}
Y.~Zhang, T.~N. Hoang, K.~H. Low, and M.~Kankanhalli.
\newblock Information-based multi-fidelity bayesian optimization.
\newblock In {\em NIPS Workshop BayesOpt}, 2017.

\end{thebibliography}
\bibliographystyle{plain}

\clearpage
\appendix

\section{Intermediate Results for Theorem~\ref{theorem:2}}
\label{appendix:a}
Let $\Delta(\mathbf{x}_u,\mathbf{x}_v)  \triangleq |\mathbf{K}(\mathbf{x}_u,\mathbf{x}_v) - \mathbf{K}'(\mathbf{x}_u,\mathbf{x}_v)|$ where $\mathbf{K}'(\mathbf{x}_u,\mathbf{x}_v) = (1/p)\sum_{i=1}^p\mathbf{K}_{\boldsymbol{\epsilon}_i}(\mathbf{x}_u, \mathbf{x}_v)$, and where $\boldsymbol{\epsilon}_i \sim \mathbf{N}(\mathbf{0}, \mathbf{I})$ as defined in Lemma~\ref{lem:1} above. We will first measure the  approximation loss across different value-bands of $\mathbf{K}(\mathbf{x}_u,\mathbf{x}_v)$, thereby deriving tight sample bounds for each band. Combining these with the union bound allows us to establish a much cheaper overall sample complexity as compared to the na\"ive $\mathbf{O}(n^2\log n)$ bound.

\begin{lemma}
\label{l2}
Suppose the data distribution follows Conditions $\mathbf{1}$-$\mathbf{3}$ above. Let $\mathbf{c}(\mathbf{x}_u)$ denote the cluster index of each data point $\mathbf{x}_u$. Let $\mathcal{C} \triangleq \{u, v \mid \mathbf{c}(\mathbf{x}_u) = \mathbf{c}(\mathbf{x}_v)\}$ and $\mathcal{C'} \triangleq \{u, v \mid \mathbf{c}(\mathbf{x}_u) \neq \mathbf{c}(\mathbf{x}_v)\}$ denote the sets of in-cluster and out-cluster kernel entries, respectively, where $|\mathcal{C}| \simeq \frac{n^2}{4}$ and $|\mathcal{C}'| \simeq \frac{3n^2}{4}$. 
\end{lemma}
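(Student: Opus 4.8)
The plan is to prove this as a purely combinatorial counting statement riding on Conditions 1 and 2. By Condition 2 the $i$-th cluster holds exactly $n_i = 2^{i/2}$ of the $n$ data points, for $i=1,\dots,b$ with $b=\mathbf{O}(\log n)$, so $n = \sum_{i=1}^b 2^{i/2}$ fixes $b$ up to an additive constant. Because $\mathbf{K}$ is indexed by ordered pairs $(u,v)$, a pair lies in $\mathcal{C}$ exactly when both endpoints share a cluster, whence $|\mathcal{C}| = \sum_{i=1}^b n_i^2 = \sum_{i=1}^b 2^i$ and $|\mathcal{C}'| = n^2 - |\mathcal{C}|$. The $n$ diagonal entries and the rounding of the non-integer $2^{i/2}$ to integer cluster sizes each perturb these counts by only $\mathbf{O}(n)$, so to leading order the whole lemma reduces to comparing the two geometric series $\sum_i 2^i$ and $\bigl(\sum_i 2^{i/2}\bigr)^2$.

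First I would evaluate the closed forms $\sum_{i=1}^b 2^i = 2^{b+1}-2$ and $\sum_{i=1}^b 2^{i/2} = \sqrt{2}\,(2^{b/2}-1)/(\sqrt{2}-1)$. Squaring the latter gives $n^2 = \bigl(2/(3-2\sqrt{2})\bigr)2^{b} + \mathbf{O}(2^{b/2})$, so that $|\mathcal{C}|/n^2 \to 3-2\sqrt{2} \approx 0.17$ as $b\to\infty$; equivalently the top cluster alone already accounts for a constant fraction of the in-cluster pairs. Thus $|\mathcal{C}|$ is $\Theta(n^2)$ and sits comfortably below $n^2/4$, while $|\mathcal{C}'| = n^2 - |\mathcal{C}|$ is the complementary constant fraction. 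The clean values $n^2/4$ and $3n^2/4$ in the statement are the convenient (safe) roundings of these fractions that are subsequently used to apportion the squared-Frobenius error budget of Theorem~\ref{theorem:2} between the sampled in-cluster entries (Lemma~\ref{l5}) and the zeroed out-cluster entries (Lemma~\ref{l4}).

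The only delicate point is making the $\simeq$ precise, i.e.\ controlling how fast the ratio of the two geometric series stabilizes. Since $2^{i/2}$ grows geometrically, $\sum_i n_i^2$ is dominated by its last few terms, so I would bound the geometric tails explicitly and check that both the discarded diagonal and the integer rounding of cluster sizes contribute only $\mathbf{O}(2^{b/2}) = \mathbf{O}(n)$, negligible against the $\Theta(n^2)$ main terms. I expect the main (mild) obstacle to be bookkeeping, namely verifying that the constant fractions obtained here are consistent with the per-band and per-entry bounds invoked downstream rather than any genuine analytic difficulty; once both counts are pinned to their advertised constant multiples of $n^2$, the lemma is immediate.
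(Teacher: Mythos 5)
Your proof takes essentially the same route as the paper's: both count the in-cluster ordered pairs as $\sum_{i=1}^b n_i^2 = \sum_{i=1}^b 2^i = 2^{b+1}-2$ and compare against $n^2 = \bigl(\sum_{i=1}^b 2^{i/2}\bigr)^2$ using the closed forms of the two geometric series. Your computation is in fact the more careful one: the true limiting ratio $|\mathcal{C}|/n^2$ is $(\sqrt{2}-1)^2 = 3-2\sqrt{2}\approx 0.17$, as you note, whereas the paper passes directly from $\left(n(\sqrt{2}-1)+\sqrt{2}\right)^2-1$ to $\simeq n^2/4$, so your reading of $n^2/4$ as a safe upper bound on $|\mathcal{C}|$ (rather than an asymptotic constant) is the right way to make the statement precise and consistent with its downstream use in the error-budget split of Corollary~\ref{c2}.
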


\begin{proof}
By Condition $\mathbf{2}$, since $n$ data points are scattered across $b$ clusters and each cluster $i$ has $2^{i/2}$ points, it follows that:
\begin{eqnarray}
n \ \ = \ \ \sum_{i=1}^b  2^{\frac{i}{2}} &=& \frac{\sqrt{2^{b+1}} - \sqrt{2}}{\sqrt{2} - 1} \nonumber \\
\Rightarrow |\mathcal{C}| \ \ = \ \ \sum_{i=1}^b 2^i \ \ = \ \ 2^{b+1} - 1 &=& \left(n\left(\sqrt{2} - 1\right) + \sqrt{2}\right)^2 - 1 \ \ \simeq \ \ \frac{n^2}{4} \nonumber \\
\Rightarrow |\mathcal{C}'| &=&  n^2 - |\mathcal{C}| \ \ \simeq \ \ \frac{3n^2}{4} \ .
\end{eqnarray}
This also implies that $b = \mathbf{O}(\log n)$, which is consistent with Condition $\mathbf{1}$ above.
\end{proof}

\begin{lemma}
\label{l3}
Let $\mathcal{C}_i = \{(u,v) \in \mathcal{C} \mid \mathbf{c}(\mathbf{x}_u) = \mathbf{c}(\mathbf{x}_v) = i\}$ for $i \in [1 \ldots b]$. Then with probability at least $\displaystyle 1 - \delta$, for $\delta \geq \mathbf{O}(\mathrm{exp}(\log n -\sqrt{d}))$, the following holds for all $i$ and $(u,v) \in \mathcal{C}_i$ for which $u \neq v$:
\begin{eqnarray}
\hspace{-6.5mm}\left(1 - \frac{1}{2^{a + i-1}}\right)^{\frac{1}{4}} \ \ \leq \ \ \mathbf{K}(\mathbf{x}_u, \mathbf{x}_v) \ \ < \ \ \left(1 - \frac{1}{2^{a + i}}\right)^{\frac{1}{4}} \  \text{where} \ a \ =\  \frac{1}{\log 2}\log\left(\frac{n^4}{n^4 - \lambda^4}\right) \label{eq:range}
\end{eqnarray}
\end{lemma}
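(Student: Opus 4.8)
The plan is to reduce the two-sided bound on each in-cluster kernel entry to a concentration statement about a single chi-squared random variable, and then to control all such entries simultaneously by a union bound. First I would fix a cluster index $i$ and a pair $(u,v) \in \mathcal{C}_i$ with $u \neq v$, and write $\mathbf{x}_u = \mathbf{c}_i + \boldsymbol{\delta}_u$ and $\mathbf{x}_v = \mathbf{c}_i + \boldsymbol{\delta}_v$, where by Condition~$\mathbf{1}$ the perturbations $\boldsymbol{\delta}_u, \boldsymbol{\delta}_v$ are independent zero-mean Gaussians whose covariance is scaled by $\boldsymbol{\Theta}$ in exactly the way that cancels the $\boldsymbol{\Theta}^{-1}$ in the kernel exponent. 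Concretely, the whitened difference $\mathbf{z} = \boldsymbol{\Theta}^{-1/2}(\mathbf{x}_u - \mathbf{x}_v)$ is distributed as $\mathbf{N}(\mathbf{0}, 2\gamma_i^2 \mathbf{I})$, so that $\|\boldsymbol{\Theta}^{-1/2}(\mathbf{x}_u - \mathbf{x}_v)\|_2^2 = 2\gamma_i^2 Q$ with $Q \sim \chi^2_d$, and hence $\mathbf{K}(\mathbf{x}_u,\mathbf{x}_v) = \exp(-\gamma_i^2 Q)$. This is the crucial observation that renders the bound oblivious to $\boldsymbol{\Theta}$: the law of each in-cluster kernel entry depends only on the cluster variance $\gamma_i$ and the dimension $d$.

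Given this reduction, the target inequality~\eqref{eq:range} becomes, after raising to the fourth power (monotone on positives) and taking logarithms, a two-sided bound on $Q$. Writing $q = 2^{-(a+i)}$, membership of $\mathbf{K}^4 = \exp(-4\gamma_i^2 Q)$ in the band $[1 - 2q,\, 1 - q)$ holds iff $Q \in (L_i, U_i]$ with $L_i = -\log(1-q)/(4\gamma_i^2)$ and $U_i = -\log(1-2q)/(4\gamma_i^2)$. Next I would invoke Condition~$\mathbf{1}$'s budget $\gamma_i = \mathbf{O}(1/\sqrt{d})$ and calibrate its constant so that $\gamma_i^2 \asymp 2^{-(a+i)}/d$; for small $q$ this places the endpoints at $L_i \approx \tfrac{2}{3}d$ and $U_i \approx \tfrac{4}{3}d$, i.e., the mean $d$ of $Q$ sits in the interior of $(L_i,U_i]$ with an absolute margin of order $d$ on either side. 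The definition of $a$ (so that $2^{-a} = 1 - \lambda^4/n^4$) enters only as a fixed offset shifting every band uniformly, and is chosen to be consistent with the global error budget used in the later lemmas.

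Finally I would apply a standard chi-squared tail inequality (Laurent--Massart) to $Q \sim \chi^2_d$: since the admissible interval $(L_i,U_i]$ has half-width of order $d$, the probability that $Q$ escapes it is at most $\exp(-\Omega(d)) \le \exp(-\Omega(\sqrt{d}))$. Taking a union bound over all in-cluster pairs (of which there are $\mathbf{O}(n^2)$, partitioned across the $b = \mathbf{O}(\log n)$ clusters of Condition~$\mathbf{2}$) then keeps the total failure probability no larger than the stated $\mathbf{O}(\exp(\log n - \sqrt{d}))$, giving the admissible lower bound on $\delta$. The step I expect to be most delicate is the calibration in the second paragraph: I must verify that one choice of $\gamma_i$ respecting the $\mathbf{O}(1/\sqrt{d})$ budget \emph{uniformly} across every band $i \in [1 \ldots b]$ simultaneously (i) keeps $d$ inside $(L_i,U_i]$ and (ii) leaves a margin large enough for the tail bound to defeat the union bound, with the small-$q$ approximations of $-\log(1-q)$ and $-\log(1-2q)$ controlled uniformly up to the largest index $b = \mathbf{O}(\log n)$, where $q$ is exponentially small.
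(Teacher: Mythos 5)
Your reduction is the same one the paper uses: whiten the in-cluster difference, observe that $\|\boldsymbol{\Theta}^{-1/2}(\mathbf{x}_u-\mathbf{x}_v)\|_2^2$ is a scaled chi-squared variable, calibrate $\gamma_i^2 \asymp 2^{-(a+i)}/d$ so that the band endpoints bracket the mean $d$ with a margin of order $d$, and finish with a tail bound plus a union bound. Your distributional bookkeeping is in fact cleaner than the paper's (you correctly get $\mathbf{N}(\mathbf{0}, 2\gamma_i^2\mathbf{I})$ for the whitened difference, where the paper writes $\mathbf{N}(\mathbf{c}_i,\gamma_i^2\mathbf{I})$), and your endpoint computation $L_i \approx \tfrac{2}{3}d$, $U_i \approx \tfrac{4}{3}d$ is consistent with that convention. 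The calibration step you flag as delicate is handled in the paper by the explicit choices $\gamma_i^2 = (\mathcal{U}(i)+\mathcal{L}(i))/(4d)$ and $t = \sqrt{d}\,(\mathcal{U}(i)-\mathcal{L}(i))/(\mathcal{U}(i)+\mathcal{L}(i)) = \mathbf{O}(\sqrt{d})$, and works uniformly over $i$ because the ratio $(\mathcal{U}-\mathcal{L})/(\mathcal{U}+\mathcal{L})$ stays bounded away from $0$ as $q = 2^{-(a+i)} \to 0$.

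The genuine gap is in your final accounting of the failure probability. You union bound over all $\mathbf{O}(n^2)$ in-cluster pairs after weakening the per-pair tail to $\exp(-\Omega(\sqrt{d}))$; this yields a total failure probability of $\mathbf{O}(n^2\exp(-\Omega(\sqrt{d})))$, which is \emph{not} bounded by the stated $\mathbf{O}(\exp(\log n - \sqrt{d})) = \mathbf{O}(n\,e^{-\sqrt{d}})$ — you are off by a factor of $n$ (and by the constant hidden in $\Omega(\sqrt{d})$). The paper gets the factor-$n$ savings by union bounding over the $n$ \emph{points} rather than the $\Theta(n^2)$ \emph{pairs}: it controls the deviation of each $\|\boldsymbol{\Theta}^{-1/2}(\mathbf{x}_u - \mathbf{c}_i)\|_2^2$ (only $2^{i/2}$ events per cluster, $n$ in total) and then argues via the triangle inequality that this controls all $2^i$ pairwise entries in cluster $i$, giving per-cluster failure $\delta/2^i$ and total $\delta\sum_i 2^{-i}\le\delta$. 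Your argument could instead be repaired by \emph{not} discarding the $\exp(-\Omega(d))$ tail strength: in the only regime where the lemma is non-vacuous one has $\sqrt{d}\gtrsim\log n$, so $n^2 e^{-\Omega(d)} \le n e^{-\sqrt{d}}$; but as written, the step ``$n^2$ pairs times $\exp(-\Omega(\sqrt{d}))$ gives $\mathbf{O}(\exp(\log n - \sqrt{d}))$'' does not follow and needs either the point-level union bound or the retained quadratic exponent.
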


\begin{proof}
If $\mathbf{x}_u$ and $\mathbf{x}_v$ are both generated from component $i$ of the data distribution as defined in Condition $\mathbf{1}$, it follows that
$\boldsymbol{\Theta}^{-1/2}(\mathbf{x}_u - \mathbf{x}_v) \sim \mathbf{N}(\mathbf{c}_i, \gamma_i^2\mathbf{I})$. Therefore, by standard chi-squared tail bounds, with probability at least $1 - 2e^{-t}$, we have:
\begin{eqnarray}
\left\|\boldsymbol{\Theta}^{-1/2}(\mathbf{x}_u - \mathbf{x}_v)\right\|^2_2 &=& \gamma^2_id \ \pm\ \mathbf{O}\left(\gamma^2_i\sqrt{d}t\right) \ , \label{eq:jl}
\end{eqnarray}
where $d$ is the data dimension. Using this, we can then figure out a setting for $\gamma_i^2$ such that $\mathbf{K}(\mathbf{x}_u,\mathbf{x}_v)$ follows the above condition in Eq.~\ref{eq:range}. In particular, set
\begin{eqnarray}
\mathcal{L}(i) \ =\  \log\left(\frac{2^{a+i}}{2^{a+i} - 1}\right) \qquad \text{and} \qquad \mathcal{U}(i) \ =\  \log\left(\frac{2^{a + i-1}}{2^{a + i-1} - 1}\right) \ .
\end{eqnarray}
We can then choose:
\begin{eqnarray}
\gamma^2_i \ = \ \frac{1}{4d}\Big(\mathcal{U}(i) + \mathcal{L}(i)\Big) \qquad \text{and} \qquad t \ = \ \sqrt{d}\left(\frac{\mathcal{U}(i) - \mathcal{L}(i)}{\mathcal{U}(i) + \mathcal{L}(i)}\right) \ \simeq\ \mathbf{O}\left(\sqrt{d}\right)\ ,
\end{eqnarray}
so that by plugging these choices in Eq.~\ref{eq:jl} above, we have with probability at least $1 - 2e^{-t}$:
\begin{eqnarray}
\left\|\boldsymbol{\Theta}^{-1/2}(\mathbf{x}_u - \mathbf{x}_v)\right\|^2_2 &\in& \left[\frac{1}{2}\log\left(\frac{2^{a+i}}{2^{a+i} - 1}\right), \frac{1}{2}\log\left(\frac{2^{a+i-1}}{2^{a+i-1} - 1}\right)\right] \nonumber \\
\Rightarrow \mathbf{K}(\mathbf{x}_u,\mathbf{x}_v) &\in& \left[\left(1 - \frac{1}{2^{a+i-1}}\right)^{\frac{1}{4}}, \left(1 - \frac{1}{2^{a+i}}\right)^{\frac{1}{4}}\right] \ . \label{eq:range2}
\end{eqnarray}
Now, note that for any $\delta$ for which $\delta \geq \mathbf{O}\left(4^be^{-\sqrt{d}}\right) \geq \mathbf{O}\left(4^ie^{-\sqrt{d}}\right)\ \forall i \leq b$, we have $\delta/4^i \geq 2e^{-t}$ since $t \simeq \mathbf{O}(\sqrt{d})$. This also means $\delta \geq \mathbf{O}(\mathrm{exp}(\log n - \sqrt{d}))$ since $b = \mathbf{O}(\log n)$.

That is, Eq.~\eqref{eq:range2} and hence, Eq.~\eqref{eq:range}, hold with probability at least $1 - 2e^{-t} \geq 1 - \delta/4^i$ for each entry in $\mathcal{C}_i$. For each cluster $i$, even though there are up to $2^i$ kernel entries, by the triangle inequality it is easy to see that we only need to apply a union bound over at most $2^{i/2}$ (carefully selected) entries (excluding the entries on the diagonal) to meet Eq.~\eqref{eq:range} with probability at least $1 - 2^i (\delta/4^i) = 1 - \delta/2^i$. 

Subsequently, applying a union bound over all clusters gives us that with probability at least $1 - \delta\sum_{i=1}^b 1/2^i \geq 1 - \delta$, all kernel entries within the $i$-th cluster satisfy Eq.~\eqref{eq:range} simultaneously for $1 \leq i \leq b$.
\end{proof}

\begin{lemma}
\label{l4}
For all $(u, v) \in \mathcal{C'} \triangleq \{u, v \ |\ \mathbf{c}(\mathbf{x}_u) \ne \mathbf{c}(\mathbf{x}_v)\}$, we have $\displaystyle \mathbf{K}(\mathbf{x}_u,\mathbf{x}_v) < \left(1 - \frac{1}{2^a}\right)^\frac{1}{4}$ where $\displaystyle a = \frac{1}{\log 2}\log\left(\frac{n^4}{n^4 - \lambda^4}\right)$ as defined in Lemma~\ref{l3} above.
\end{lemma}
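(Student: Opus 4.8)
The plan is to translate the desired kernel upper bound into a lower bound on the scaled separation $\|\boldsymbol{\Theta}^{-1/2}(\mathbf{x}_u - \mathbf{x}_v)\|_2^2$, and then to argue that every cross-cluster pair is separated by strictly more than this amount. First I would record that the Gaussian kernel may be written as $\mathbf{K}(\mathbf{x}_u,\mathbf{x}_v) = \mathrm{exp}(-\tfrac{1}{2}\|\boldsymbol{\Theta}^{-1/2}(\mathbf{x}_u - \mathbf{x}_v)\|_2^2)$, so that after taking logarithms the target inequality $\mathbf{K}(\mathbf{x}_u,\mathbf{x}_v) < (1 - 2^{-a})^{1/4}$ is exactly equivalent to
\begin{eqnarray}
\left\|\boldsymbol{\Theta}^{-1/2}(\mathbf{x}_u - \mathbf{x}_v)\right\|_2^2 &>& \frac{1}{2}\log\left(\frac{2^a}{2^a - 1}\right) \ . \nonumber
\end{eqnarray}
This threshold coincides with the largest within-cluster separation admitted by Lemma~\ref{l3} (the outer edge of the band for cluster $i = 1$), so the claim reduces to showing that any two points from distinct clusters are farther apart, in the $\boldsymbol{\Theta}^{-1/2}$ metric, than any same-cluster pair.

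To establish the separation, I would fix $\mathbf{x}_u$ in cluster $i$ and $\mathbf{x}_v$ in cluster $j$ with $i \neq j$ and decompose $\mathbf{x}_u - \mathbf{x}_v = (\mathbf{c}_i - \mathbf{c}_j) + (\mathbf{x}_u - \mathbf{c}_i) - (\mathbf{x}_v - \mathbf{c}_j)$. Applying $\boldsymbol{\Theta}^{-1/2}$ and the reverse triangle inequality yields
\begin{eqnarray}
\left\|\boldsymbol{\Theta}^{-1/2}(\mathbf{x}_u - \mathbf{x}_v)\right\|_2 &\geq& \left\|\boldsymbol{\Theta}^{-1/2}(\mathbf{c}_i - \mathbf{c}_j)\right\|_2 - \left\|\boldsymbol{\Theta}^{-1/2}(\mathbf{x}_u - \mathbf{c}_i)\right\|_2 - \left\|\boldsymbol{\Theta}^{-1/2}(\mathbf{x}_v - \mathbf{c}_j)\right\|_2 \ . \nonumber
\end{eqnarray}
Condition $\mathbf{3}$ lower-bounds the center term by $\sqrt{\tfrac{3}{2}\log(2^a/(2^a-1))}$, while the two within-cluster fluctuation terms are governed by precisely the chi-squared concentration already used in Lemma~\ref{l3}: since $\mathbf{x}_u$ is drawn from the $i$-th Gaussian component, $\|\boldsymbol{\Theta}^{-1/2}(\mathbf{x}_u - \mathbf{c}_i)\|_2^2$ concentrates around $\gamma_i^2 d$ up to a deviation of order $\gamma_i^2\sqrt{d}\,t$, and likewise for $\mathbf{x}_v$; the small variances $\gamma_i = \mathbf{O}(1/\sqrt{d})$ of Condition $\mathbf{1}$ are what keep these spreads controlled.

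Combining the three estimates, I would verify that the center separation, with its constant $\tfrac{3}{2}$, leaves enough margin over the target threshold $\tfrac{1}{2}\log(2^a/(2^a-1))$ to absorb both fluctuation terms, i.e. that $(\sqrt{3/2} - \sqrt{1/2})\sqrt{\log(2^a/(2^a-1))}$ exceeds the summed within-cluster spreads. Substituting $2^a = n^4/(n^4 - \lambda^4)$ then rewrites $(1 - 2^{-a})^{1/4}$ cleanly as $\lambda/n$, which is exactly the per-entry magnitude that the proof sketch of Theorem~\ref{theorem:2} needs so that the $\tfrac{3n^2}{4}$ cross-cluster entries contribute at most $3\lambda^2/4$ to the squared Frobenius error when set to zero.

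I expect the main obstacle to be this final constant-balancing step rather than any single inequality. Because the worst cluster variance and the threshold are of the same order (both scale like $\log(n/\lambda)$ once $a$ is substituted), the fluctuation terms are not asymptotically negligible; the argument goes through only because the gap between the separation constant $\tfrac{3}{2}$ and the threshold $\tfrac{1}{2}$ is calibrated to beat the largest admissible cluster spread. A key structural point I would use here is that $i \neq j$ forces at most one of the two points to lie in the most diffuse cluster $i = 1$, so the two spreads cannot simultaneously attain their worst case; this is what ultimately makes the margin sufficient.
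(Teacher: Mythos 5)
Your reduction of the claim to $\left\|\boldsymbol{\Theta}^{-1/2}(\mathbf{x}_u-\mathbf{x}_v)\right\|_2^2 > \tfrac{1}{2}\log\left(2^a/(2^a-1)\right)$ is correct, your identity $(1-2^{-a})^{1/4}=\lambda/n$ is exactly how the bound is consumed in Corollary~\ref{c2}, and your overall route (decompose through the cluster centers, use Condition $\mathbf{3}$ for the center separation and the chi-squared concentration from Lemma~\ref{l3} for the within-cluster spreads) is the same as the paper's. The genuine gap is precisely the constant-balancing step you defer to the end: with the (valid) reverse triangle inequality applied at the level of norms, the numbers do not close. Writing $L = \log\left(2^a/(2^a-1)\right)$, your available margin is $\left(\sqrt{3/2}-\sqrt{1/2}\right)\sqrt{L} \approx 0.52\sqrt{L}$, while each fluctuation radius is of order $\gamma_i\sqrt{d} = \sqrt{\tfrac14(\mathcal{U}(i)+\mathcal{L}(i))} \approx \sqrt{\tfrac{3}{4}\,2^{-i}L}$. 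Even invoking your structural point that the two points lie in distinct clusters (worst case $i=1$, $j=2$), the spreads sum to about $\left(\sqrt{3/8}+\sqrt{3/16}\right)\sqrt{L} \approx 1.05\sqrt{L}$, roughly twice the margin; using the looser per-point bound $\tfrac12\mathcal{L}(i)$ that the paper itself invokes still gives about $0.85\sqrt{L}$. For your norm-level argument to go through, the constant in Condition $\mathbf{3}$ would need to be roughly $\left(\sqrt{1/2}+\sqrt{3/8}+\sqrt{3/16}\right)^2 \approx 3$ rather than $\tfrac32$.

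For comparison, the paper closes the arithmetic by applying the decomposition directly to \emph{squared} norms, i.e. $\left\|\boldsymbol{\Theta}^{-1/2}(\mathbf{x}_u-\mathbf{x}_v)\right\|_2^2 \geq \left\|\boldsymbol{\Theta}^{-1/2}(\mathbf{c}_i-\mathbf{c}_j)\right\|_2^2 - \left\|\boldsymbol{\Theta}^{-1/2}(\mathbf{x}_u-\mathbf{c}_i)\right\|_2^2 - \left\|\boldsymbol{\Theta}^{-1/2}(\mathbf{x}_v-\mathbf{c}_j)\right\|_2^2$, bounding each squared spread by $\tfrac12\log\left(2^{a+i}/(2^{a+i}-1)\right) \leq \tfrac12 L$ so that $\tfrac32 L - \tfrac12 L - \tfrac12 L = \tfrac12 L$ lands exactly on the threshold. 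Be aware that this ``triangle inequality for squared norms'' is not a valid inequality in general, and it tends to fail precisely in the regime relevant here (spreads small relative to the center separation), so you should not simply adopt it to rescue your constants; either that step must be separately justified for this configuration or the separation constant in Condition $\mathbf{3}$ must be strengthened as above. In short: same approach as the paper, sounder intermediate inequality, but the step you yourself flagged as the crux does not go through as written.
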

\begin{proof}
For any $(u,v)$ for which $\mathbf{c}(\mathbf{x}_{u}) = i$ and $\mathbf{c}(\mathbf{x}_{v}) = j$ and $i \ne j$, we have:
\begin{eqnarray}
\hspace{-0.5mm}\left\|\boldsymbol{\Theta}^{-1/2}(\mathbf{x}_{u}-\mathbf{x}_{v})\right\|^2_2 
    &\geq& \left\|\boldsymbol{\Theta}^{-1/2}\left(\mathbf{c}_i - \mathbf{c}_j\right)\right\|^2_2 - \left\|\boldsymbol{\Theta}^{-1/2}\left(\mathbf{x}_u - \mathbf{c}_i\right)\right\|^2_2 - \left\|\boldsymbol{\Theta}^{-1/2}\left(\mathbf{x}_v - \mathbf{c}_j\right)\right\|^2_2 \nonumber \\
    &\geq& \left\|\boldsymbol{\Theta}^{-1/2}\left(\mathbf{c}_i - \mathbf{c}_j\right)\right\|^2_2 - \frac{1}{2}\log\left(\frac{2^{a+i}}{2^{a+i} - 1}\right) - \frac{1}{2}\log\left(\frac{2^{a+j}}{2^{a+j} - 1}\right) \nonumber \\
    &\geq& \left\|\boldsymbol{\Theta}^{-1/2}(\mathbf{c}_i - \mathbf{c}_j)\right\|^2_2 - \log\left(\frac{2^a}{2^{a} - 1}\right) \nonumber \\
\hspace{-0.5mm}\Rightarrow \mathbf{K}\left(\mathbf{x}_{u},\mathbf{x}_{v}\right) &=& \mathrm{exp}\left(-\frac{1}{2}\left\|\boldsymbol{\Theta}^{-1/2}\left(\mathbf{x}_u-\mathbf{x}_v\right)\right\|^2_2\right) \nonumber \\
&\leq& \mathrm{exp}\left(-\frac{1}{2}\cdot\left\|\boldsymbol{\Theta}^{-1/2}\left(\mathbf{c}_i-\mathbf{c}_j\right)\right\|^2_2 + \frac{1}{2}\log\left(\frac{2^a}{2^{a} - 1}\right)\right) \ <\ \left(1 - \frac{1}{2^a} \right)^{\frac{1}{4}} \nonumber
\end{eqnarray}
since for all $(i, j)$, by Condition $\mathbf{3}$:
\begin{eqnarray}
\left\|\boldsymbol{\Theta}^{-1/2}\left(\mathbf{c}_i - \mathbf{c}_j\right)\right\|^2_2 &>& \frac{3}{2}\log\left(\frac{2^a}{2^{a} - 1}\right) \ .
\end{eqnarray}
This completes our proof for the stated result of Lemma~\ref{l4}.
\end{proof}

\begin{corollary}
\label{c1}
With probability at least $1 - \delta$, there are exactly $n$ entries that are greater than $\displaystyle 1- 2^{-(a+b)}$ where $a = \frac{1}{\log 2}\log\left(\frac{n^4}{n^4 - \lambda^4}\right)$. These are the diagonal entries $\mathbf{K}(\mathbf{x}_u,\mathbf{x}_u)$ with $1\leq u \leq n$.
\end{corollary}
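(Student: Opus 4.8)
The plan is to show that the threshold $1 - 2^{-(a+b)}$ cleanly separates the $n$ diagonal entries from every off-diagonal entry, so that counting the entries above it reduces to counting the diagonal. Throughout I read ``entries greater than $1 - 2^{-(a+b)}$'' as a condition on $\mathbf{K}^4_{uv}$, consistent with the band partition $\{\kappa_i\}$ and with the fourth powers appearing in Lemma~\ref{l3} and Lemma~\ref{l4}; for the diagonal this distinction is immaterial since $\mathbf{K}(\mathbf{x}_u,\mathbf{x}_u) = 1$.

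First I would dispatch the diagonal. Since $\mathbf{x}_u - \mathbf{x}_u = \mathbf{0}$, we have $\mathbf{K}(\mathbf{x}_u, \mathbf{x}_u) = \mathrm{exp}(0) = 1$, hence $\mathbf{K}^4(\mathbf{x}_u,\mathbf{x}_u) = 1 > 1 - 2^{-(a+b)}$ because $2^{-(a+b)} > 0$. There are exactly $n$ such entries, one per data point, and each exceeds the threshold deterministically.

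Next I would bound every off-diagonal entry strictly below the threshold, splitting into the in-cluster set $\mathcal{C}$ and the out-cluster set $\mathcal{C}'$. For off-diagonal pairs in $\mathcal{C}_i$, Lemma~\ref{l3} gives $\mathbf{K}^4(\mathbf{x}_u,\mathbf{x}_v) < 1 - 2^{-(a+i)}$ with probability at least $1 - \delta$, simultaneously over all clusters. Since $1 - 2^{-(a+i)}$ is increasing in $i$ and $i \leq b$, the largest such upper bound is attained at $i = b$, giving $\mathbf{K}^4(\mathbf{x}_u,\mathbf{x}_v) < 1 - 2^{-(a+b)}$ for every in-cluster off-diagonal pair. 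For pairs in $\mathcal{C}'$, Lemma~\ref{l4} gives $\mathbf{K}^4(\mathbf{x}_u,\mathbf{x}_v) < 1 - 2^{-a}$ deterministically, and since $a < a + b$ we have $1 - 2^{-a} < 1 - 2^{-(a+b)}$, so these entries also fall below the threshold.

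Finally I would combine the three cases. On the event of probability at least $1 - \delta$ on which Lemma~\ref{l3} holds, every off-diagonal entry satisfies $\mathbf{K}^4_{uv} < 1 - 2^{-(a+b)}$ while every diagonal entry satisfies $\mathbf{K}^4_{uu} = 1 > 1 - 2^{-(a+b)}$; hence exactly the $n$ diagonal entries exceed the threshold, as claimed. There is no substantive obstacle here, as the corollary is a direct consequence of Lemmas~\ref{l3} and~\ref{l4}; the only points requiring care are (i) confirming the two monotonicity directions, so that the cluster-$b$ bound dominates all in-cluster bounds and the $a$-level bound dominates the out-cluster bound, and (ii) tracking that the fourth powers, rather than the raw kernel values, are what the threshold is compared against.
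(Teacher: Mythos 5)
Your proposal is correct and follows essentially the same route as the paper: invoke Lemma~\ref{l3} to place all in-cluster off-diagonal entries below the $1-2^{-(a+b)}$ threshold (in fourth power), invoke Lemma~\ref{l4} for the off-cluster entries, and observe the $n$ diagonal entries equal $1$. Your explicit handling of the two monotonicity directions and of the $\mathbf{K}$-versus-$\mathbf{K}^4$ ambiguity is if anything slightly more careful than the paper's own wording.
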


\begin{proof}
Lemma~\ref{l3} asserts that with probability $1 - \delta$, all kernel entries $\mathbf{K}(\mathbf{x}_u, \mathbf{x}_v)$, where $\mathbf{c}(\mathbf{x}_u) = \mathbf{c}(\mathbf{x}_v) = i$, belong to their respective band $\kappa_i = \{(u, v) \ |\ 1 - 1/2^{a + i - 1} \leq \mathbf{K}^4(\mathbf{x}_u,\mathbf{x}_v) \leq 1 - 1/2^{a + i}\}$. When this happens, all in-cluster entries (except the diagonal entries) will have values between $1 - 1/2^a$ and $1 - 1/2^{a + b}$ (since there are $b$ bands) and as such, off-cluster entries will either be smaller than $1 - 1/2^a$ or larger than $1 - 1/2^{a + b}$. But then Lemma~\ref{l4} further guarantees that all off-cluster entries are smaller than $1 - 1/2^a$, following Condition $\mathbf{3}$. Thus, it follows that the only entries that are larger than $1 - 1/2^{a + b}$ are the diagonal items and there are exactly $n$ of them.
\end{proof}

\begin{lemma}
\label{l5}
Let $\displaystyle\kappa_i = \left\{(u,v) \ \mid \ 1 - 1/2^{a+i-1} \ \leq \ \mathbf{K}^4(\mathbf{x}_u,\mathbf{x}_v) \ < 1 - 1/2^{a+i}\right\}$. It follows that for each $i \in [1 \ldots b]$, with probability at least $1 - \delta/b$:
\begin{eqnarray}
\sum_{(u,v)\in\mathcal{G}_i} \Delta^2(\mathbf{x}_u, \mathbf{x}_v) &\leq& \frac{\lambda^2}{b} \ ,
\end{eqnarray}
if the kernel approximation $\mathbf{K}'(\mathbf{x}_u,\mathbf{x}_v) \triangleq \frac{1}{p}\sum_{t=1}^p \mathbf{K}_{\boldsymbol{\epsilon}_t}(\mathbf{x}_u,\mathbf{x}_v)$ is formed using at least $\displaystyle p = \frac{b|\kappa_i|}{\lambda^2\cdot 2^{a+i}}\log\left(\frac{b|\kappa_i|}{\delta}\right) = \mathbf{O}\left(\frac{\log^2 n}{\lambda^2}\log\left(\frac{\log n}{\delta}\right)\right)$ samples.
\end{lemma}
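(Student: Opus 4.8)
The plan is to treat each off-diagonal entry $(u,v)\in\kappa_i$ separately, exploit the fact that $\mathbf{K}'(\mathbf{x}_u,\mathbf{x}_v)=\frac{1}{p}\sum_{t=1}^p \mathbf{K}_{\boldsymbol{\epsilon}_t}(\mathbf{x}_u,\mathbf{x}_v)$ is an empirical average of $p$ i.i.d. samples $Z_t=\cos(\boldsymbol{\epsilon}_t^\top\boldsymbol{\Theta}^{-1/2}(\mathbf{x}_u-\mathbf{x}_v))\in[-1,1]$ whose common mean is exactly $\mathbf{K}(\mathbf{x}_u,\mathbf{x}_v)$ by Lemma~\ref{lem:1}, and then apply a variance-sensitive tail bound per entry followed by a union bound over the $|\kappa_i|$ entries of the band. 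Writing $k\triangleq\mathbf{K}(\mathbf{x}_u,\mathbf{x}_v)$, the quantity $\Delta(\mathbf{x}_u,\mathbf{x}_v)=|\mathbf{K}'-\mathbf{K}|$ is the deviation of an unbiased empirical mean of bounded variables from its expectation, which is precisely the regime where Bernstein/Chernoff inequalities apply.

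The crucial step is the variance computation, which is what makes the higher bands cheap. Since $\boldsymbol{\epsilon}_t^\top\boldsymbol{\Theta}^{-1/2}(\mathbf{x}_u-\mathbf{x}_v)\sim\mathbf{N}(0,s)$ with $s=\|\boldsymbol{\Theta}^{-1/2}(\mathbf{x}_u-\mathbf{x}_v)\|_2^2$, the Gaussian identities $\mathbb{E}[\cos(\cdot)]=e^{-s/2}=k$ and $\mathbb{E}[\cos^2(\cdot)]=\tfrac12(1+e^{-2s})=\tfrac12(1+k^4)$ give $\mathrm{Var}(Z_t)=\tfrac12(1-k^2)^2$. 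I would then invoke the band membership from Lemma~\ref{l3}: for $(u,v)\in\kappa_i$ we have $k^4\geq 1-2^{-(a+i-1)}$, hence $1-k^2\leq 1-k^4\leq 2^{-(a+i-1)}$ and therefore $\mathrm{Var}(Z_t)=\tfrac12(1-k^2)^2\leq\tfrac12(1-k^2)\leq 2^{-(a+i)}=\mathbf{O}(2^{-(a+i)})$. This is the key point: the per-sample variance decays geometrically in the band index $i$, so entries in high-value (tightly concentrated) bands are far cheaper to approximate than a worst-case $[-1,1]$ variable would suggest.

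Given this, I would allocate a per-entry tolerance $\tau^2=\lambda^2/(b|\kappa_i|)$ so that $\sum_{(u,v)\in\kappa_i}\Delta^2\leq|\kappa_i|\tau^2=\lambda^2/b$ once every entry obeys $\Delta\leq\tau$. Applying Bernstein's inequality to each entry with variance proxy $\mathbf{O}(2^{-(a+i)})$ and target failure probability $\delta/(b|\kappa_i|)$, then union-bounding over the $|\kappa_i|$ entries (pushing the overall failure to $\delta/b$), the variance-dominated term of the bound requires $p=\mathbf{O}\big((2^{-(a+i)}/\tau^2)\log(b|\kappa_i|/\delta)\big)=\mathbf{O}\big((b|\kappa_i|/(\lambda^2 2^{a+i}))\log(b|\kappa_i|/\delta)\big)$, which is exactly the stated sample count. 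The final simplification follows by substituting $b=\mathbf{O}(\log n)$ (Lemma~\ref{l2}), $|\kappa_i|=\mathbf{O}(2^i)$ (the number of in-cluster pairs of cluster $i$ by Condition~$\mathbf{2}$), and $2^a=n^4/(n^4-\lambda^4)=\Theta(1)$, so that $|\kappa_i|/2^{a+i}=\mathbf{O}(1)$ and the sample count collapses to a near-logarithmic quantity.

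The main obstacle is the additive (range) term of the Bernstein bound rather than the variance term. For the most populous bands (large $i$, where $|\kappa_i|$ is of order $n^2$), the per-entry tolerance $\tau=\lambda/\sqrt{b|\kappa_i|}$ is extremely small, and a naive use of the trivial range $|Z_t-k|\leq 2$ would force the range contribution $\mathbf{O}((1/\tau)\log(\cdot))$ to scale like $n$, destroying the polylogarithmic bound. To control it I would exploit that in a high band the generating cluster is tightly concentrated, i.e. $s=\|\boldsymbol{\Theta}^{-1/2}(\mathbf{x}_u-\mathbf{x}_v)\|_2^2=\mathbf{O}(2^{-(a+i)})$ is tiny, so $\boldsymbol{\epsilon}_t^\top\boldsymbol{\Theta}^{-1/2}(\mathbf{x}_u-\mathbf{x}_v)$ is small with high probability and $1-Z_t=1-\cos(\cdot)$ concentrates near $0$; conditioning on this high-probability event replaces the effective range with $\mathbf{O}(s\cdot\mathrm{polylog})$ and restores dominance of the variance term. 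Establishing this effective-range reduction cleanly, and verifying it holds simultaneously for all $p$ samples and all entries via an additional union bound, is the delicate part of the argument.
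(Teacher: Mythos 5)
Your proposal follows essentially the same route as the paper's proof: the same Gaussian identities giving $\mathbb{E}[\cos(\mathbf{z}^t_{uv})]=k$ and $\mathbb{V}[\cos(\mathbf{z}^t_{uv})]=\tfrac12(1-k^2)^2\le \mathbf{O}(2^{-(a+i)})$ from the band membership of Lemma~\ref{l3}, the same per-entry tolerance $\lambda^2/(4b|\kappa_i|)$, and the same union bound over the $|\kappa_i|$ entries, arriving at the identical sample count. The one divergence is that you explicitly flag the additive range term of Bernstein's inequality, which the paper's proof drops by invoking a ``Chernoff--Hoeffding'' bound of the form $\exp\bigl(-\epsilon^2/(4\sum_t\mathbb{V})\bigr)$ with only the variance in the exponent; this concern is legitimate (for the densest bands, where $|\kappa_i|=\Theta(n^2)$, the variance term alone does not dominate unless $\lambda$ is very small), and your proposed effective-range reduction via conditioning on the cosine argument being $\mathbf{O}(\sqrt{s\log(1/\delta)})$ is a reasonable way to make that step rigorous, provided you also track the small bias in $\mathbb{E}[Z_t]$ introduced by the conditioning.
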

\begin{proof}
For all $(u,v)$, we have $\mathbf{K}_{\boldsymbol{\epsilon}_t}\left(\mathbf{x}_u, \mathbf{x}_v\right) = \mathrm{cos}(\boldsymbol{\epsilon}_t^\top\boldsymbol{\Theta}^{-1/2}\left(\mathbf{x}_u - \mathbf{x}_v)\right)$ where $\boldsymbol{\epsilon}_t \sim \mathbf{N}(0, \mathbf{I})$ and,
\begin{eqnarray}
\mathbf{K}_{\boldsymbol{\epsilon}_t}(\mathbf{x}_u,\mathbf{x}_v)
    &=& \mathrm{cos}\left(\sum_{\ell=1}^d \epsilon^\ell_t\cdot\left(\frac{\mathbf{x}^\ell_u - \mathbf{x}^\ell_v}{\theta_\ell}\right)\right) \ \ \triangleq\ \ \mathrm{cos}\left(\mathbf{z}^t_{uv}\right) \ .
\end{eqnarray}
Since $\boldsymbol{\epsilon}_t^\ell\sim \mathbf{N}(0,1)$, $\mathbf{z}^t_{uv}$ is then a weighted sum of Gaussian random variables and $\mathbf{z}^t_{uv} \sim \mathbf{N}(0, \boldsymbol{\Sigma}^t_{uv})$, where $\boldsymbol{\Sigma}^t_{uv} \triangleq (\mathbf{x}_u - \mathbf{x}_v)^\top\boldsymbol{\Theta}^{-1}(\mathbf{x}_u - \mathbf{x}_v)$, which in turn implies:
\begin{eqnarray}
\hspace{-18mm}\mathbb{E}[\mathrm{cos}(\mathbf{z}^t_{uv})] 
    &=& \mathrm{exp}\left(-0.5\boldsymbol{\Sigma}^t_{uv}\right) \nonumber \ \ = \ \ \mathbf{K}(\mathbf{x}_u,\mathbf{x}_v) \ , \nonumber \\
\hspace{-18mm}\mathbb{V}[\mathrm{cos}(\mathbf{z}^t_{uv})]
    &=& \frac{1}{2}\left[1- \mathbb{E}[\mathrm{cos}(\mathbf{z}^t_{uv})]^2\right]^2 \ \ = \ \ \frac{1}{2}\Big(1-\mathbf{K}^2(\mathbf{x}_u,\mathbf{x}_v)\Big)^2 \ \ \leq \ \ 2\times\frac{1}{2^{a+i}} \ ,
\end{eqnarray}
where the last inequality follows from the choice of $(u,v) \in \kappa_i$ and the definition of the $\kappa_i$ above. Next, applying the Chernoff-Hoeffding inequality and union bounding over the $\kappa_i$, we have: 
\begin{eqnarray}
\mathrm{Pr}\left(\forall (u,v) \in \ \kappa_i: \Delta(\mathbf{x}_u,\mathbf{x}_v) \ \leq\  \frac{\epsilon}{p}\right) &\geq& 1 - 2|\kappa_i|\mathrm{exp}\left(-\frac{\epsilon^2}{4\sum_{t=1}^p\mathbb{V}\left[\mathrm{cos}(\mathbf{z}^t_{uv})\right]}\right) \nonumber \\
\Rightarrow \mathrm{Pr}\left(\sum_{(u,v) \in \kappa_i} \Delta^2(\mathbf{x}_u,\mathbf{x}_v) \ \leq\  \frac{|\kappa_i|\epsilon^2}{p^2}\right) &\geq& 1 - 2|\kappa_i|\mathrm{exp}\left(-\frac{\epsilon^2 \cdot 2^{a + i}}{8p}\right) \ .
\end{eqnarray}
Thus, setting $\displaystyle\epsilon^2 = \frac{\lambda^2 p^2}{4b|\kappa_i|}$ and $\displaystyle p \geq \frac{32b|\kappa_i|}{\lambda^2\cdot 2^{a+i}}\log\left(\frac{2b|\kappa_i|}{\delta}\right)$ yields:
\begin{eqnarray}
\hspace{-9mm}\mathrm{Pr}\left(\sum_{(u,v)\in\mathcal{G}_i} \Delta^2(\mathbf{x}_u,\mathbf{x}_v) \leq \frac{\lambda^2}{4b}\right) \hspace{-2mm}
    &\geq&\hspace{-2mm} 1 - 2|\kappa_i|\mathrm{exp}\left(-\frac{\lambda^2 p\cdot2^{a+i}}{32b|\kappa_i|}\right) \ \geq\ 1 - \frac{\delta}{b} \ .
\end{eqnarray}
where the last inequality follows from the above choice of $p$. Since $|\kappa_i| = 2^i$ by Condition $\mathbf{2}$, we further have $\displaystyle p \geq \frac{32b}{\lambda^2\cdot 2^{a}}\log\left(\frac{b\cdot2^{b+1}}{\delta}\right) = \mathbf{O}\left(\frac{\log^2 n}{\lambda^2}\log\left(\frac{\log n}{\delta}\right)\right)$. 
\end{proof}

Lemma~\ref{l5} thus establishes a very strong sample complexity of $\mathbf{O}(\log^2 n\log\log n)$ for approximating all kernel entries within a narrow band of values, which is significantly cheaper than the sample complexity of $\mathbf{O}(n^2\log n)$ we would get if we were to ignore the distribution of kernel values in different bands. This is made clear in Corollary~\ref{c2} below, which combines Lemmas~\ref{l3},~\ref{l4} and~\ref{l5} to establish an overall sample complexity resulting in only a small approximation loss accumulated over all bands.\vspace{1mm}

\begin{corollary}
\label{c2}
If a kernel approximation $\mathbf{K}'$ of $\mathbf{K}$ is formed such that $\mathbf{K}'(\mathbf{x}_u,\mathbf{x}_v) \triangleq \frac{1}{p}\sum_{t=1}^p \mathbf{K}_{\boldsymbol{\epsilon}_t}(\mathbf{x}_u,\mathbf{x}_v)$ for all in-cluster entries $(u,v)\in \mathcal{C}$ using $p = \mathbf{O}\left((\log^2 n/\lambda^2)\log\left(\log n/\delta\right)\right)$ samples and $\mathbf{K}'(\mathbf{x}_{u'},\mathbf{x}_{v'}) \triangleq 0$ for all off-cluster entries $(u',v')\in \mathcal{C'}$, then, 
$$\|\mathbf{K} - \mathbf{K'}\|^2_2 \qquad \leq \qquad \|\mathbf{K} - \mathbf{K'}\|^2_{\mathrm{F}} \qquad \leq \qquad \lambda^2 \ ,$$ 
with probability at least $1 - 2\delta$ with $\delta\geq\mathbf{O}(\mathrm{exp}(\log n-\sqrt{d}))$. This immediately guarantees that $\mathbf{K}'$ is spectrally close to $\mathbf{K}$ using the notion of $\lambda$-closeness (see Definition~\ref{def:spectral}).
\end{corollary}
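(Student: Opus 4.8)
The plan is to bound the squared Frobenius error $\|\mathbf{K} - \mathbf{K}'\|_{\mathrm{F}}^2 = \sum_{u,v}\Delta^2(\mathbf{x}_u,\mathbf{x}_v)$ by splitting it into the in-cluster sum over $\mathcal{C}$ and the off-cluster sum over $\mathcal{C}'$, and showing each piece stays within its own budget so that the two budgets add up to exactly $\lambda^2$. I would dispatch the off-cluster part first. Since $\mathbf{K}'$ is set to $0$ there, each term is simply $\Delta^2 = \mathbf{K}^2(\mathbf{x}_{u'},\mathbf{x}_{v'})$, and Lemma~\ref{l4} bounds $\mathbf{K}(\mathbf{x}_{u'},\mathbf{x}_{v'}) < (1 - 2^{-a})^{1/4}$. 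The crucial point is that $a = (\log 2)^{-1}\log(n^4/(n^4 - \lambda^4))$ is calibrated so that $1 - 2^{-a} = \lambda^4/n^4$, hence $\mathbf{K}^2 < \lambda^2/n^2$ on every off-cluster pair; combined with the count $|\mathcal{C}'| \simeq 3n^2/4$ from Lemma~\ref{l2}, this gives a total off-cluster squared error of at most $(3n^2/4)(\lambda^2/n^2) = 3\lambda^2/4$.

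For the in-cluster part I would partition $\mathcal{C}$ into the value-bands $\kappa_i$ and apply Lemma~\ref{l5} band by band: with the stated sample count $p = \mathbf{O}((\log^2 n/\lambda^2)\log(\log n/\delta))$, each band satisfies $\sum_{(u,v)\in\kappa_i}\Delta^2 \leq \lambda^2/(4b)$ with probability $1 - \delta/b$. A union bound over the $b = \mathbf{O}(\log n)$ bands then caps the total in-cluster squared error at $\lambda^2/4$ with probability $1-\delta$. The diagonal entries, which lie in the exactly-reconstructed band $\kappa_0$, contribute nothing to the error.

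It remains to assemble the two estimates, and the probability bookkeeping here is the step I expect to be the main obstacle. The single data-generation event of Lemma~\ref{l3} (holding with probability $1-\delta$) does double duty: it places every in-cluster pair into its correct band $\kappa_i$, which is exactly what licenses the variance bound $\mathbb{V}[\mathrm{cos}(\mathbf{z}^t_{uv})] \leq 2\cdot 2^{-(a+i)}$ invoked inside Lemma~\ref{l5}, and simultaneously, via Lemma~\ref{l4}, controls every off-cluster entry. The spectral-sampling concentration of Lemma~\ref{l5} over the $\boldsymbol{\epsilon}_t$ is then a logically separate $1-\delta$ event. Taking a union bound over these two failure events shows that, with probability at least $1-2\delta$ and subject to $\delta \geq \mathbf{O}(\exp(\log n - \sqrt{d}))$, the total squared error is $3\lambda^2/4 + \lambda^2/4 = \lambda^2$. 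The spectral conclusion then follows immediately from $\|\mathbf{K} - \mathbf{K}'\|_2 \leq \|\mathbf{K} - \mathbf{K}'\|_{\mathrm{F}} \leq \lambda$. I would take care to verify that the calibrated $a$ makes the two budgets sum to exactly $\lambda^2$ rather than a larger constant multiple, since it is precisely this arithmetic that pins down the particular choice of $a$.
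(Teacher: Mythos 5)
Your proposal is correct and follows essentially the same route as the paper's proof: split the squared Frobenius error into the in-cluster sum (bounded by $\lambda^2/4$ via Lemma~\ref{l5} and a union bound over the $b$ bands, conditioned on the band-membership event of Lemma~\ref{l3}) and the off-cluster sum (bounded deterministically by $\frac{3n^2}{4}\cdot\frac{\lambda^2}{n^2}=\frac{3\lambda^2}{4}$ via Lemma~\ref{l4} and the calibration of $a$), then conclude with $\|\mathbf{K}-\mathbf{K}'\|_2\leq\|\mathbf{K}-\mathbf{K}'\|_{\mathrm{F}}$. Your probability bookkeeping (one $1-\delta$ event for the data-generation/band-placement step and one for the spectral sampling, combined by a union bound to give $1-2\delta$) matches the paper's argument, and is if anything spelled out more explicitly.
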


\begin{proof}
By Lemma~\ref{l3}, with probability $1 - \delta$, $|\kappa_i| = |\mathcal{C}_i|$ simultaneously for all $i$. Thus, applying a union bound over this event and the results obtained in Lemma~\ref{l5} for all clusters, we have the following bound on the total approximation loss over in-cluster entries in $\mathcal{C}$ with probability $1-2\delta$:
\begin{eqnarray}
\sum_{(u,v) \in \mathcal{C}} \Delta^2(\mathbf{x}_u,\mathbf{x}_v) &\leq& \frac{\lambda^2}{4} \ .
\end{eqnarray}
Furthermore, by Lemma~\ref{l4}, we also have the following bound on the total approximation loss over off-cluster entries in $\mathcal{C}'$ (which were approximated uniformly by zero):
\begin{eqnarray}
\sum_{(u,v) \in \mathcal{C'}} \Delta^2(\mathbf{x}_u,\mathbf{x}_v) \ \ \leq\ \  \frac{3n^2}{4}\Big(\mathbf{K}(\mathbf{x}_u,\mathbf{x}_v) - 0\Big)^2 \ \ \leq\ \ \frac{3n^2}{4} \sqrt{1 - \frac{1}{2^{a}}} \ =\ \frac{3\lambda^2}{4} \ ,
\end{eqnarray}
when the last inequality is due to the facts (established in Lemma~\ref{l4}) that $\mathbf{K}^4(\mathbf{x}_u, \mathbf{x}_v) \leq 1 - 1/2^a$ and that $\displaystyle a = \frac{1}{\log 2}\log\left(\frac{n^4}{n^4 - \lambda^4}\right)$. Finally, combining these yields:
\begin{eqnarray}
\hspace{-13mm}\left\|\mathbf{K} - \mathbf{K'}\right\|^2_2  \leq \left\|\mathbf{K} - \mathbf{K'}\right\|^2_{\mathrm{F}} \hspace{-2mm}&=&\hspace{-5mm} \sum_{(u,v) \in \mathcal{C}} \Delta^2(\mathbf{x}_u,\mathbf{x}_v) +\hspace{-3mm}\sum_{(u,v) \in \mathcal{C'}}\Delta^2(\mathbf{x}_u,\mathbf{x}_v) \leq \frac{1}{4}\lambda^2 + \frac{3}{4}\lambda^2 = \lambda^2
\end{eqnarray}
\end{proof}

\section{Intermediate Results for Theorem~\ref{theorem:3}}
\label{appendix:b}
\begin{lemma}
\label{l6}
Let $\mathbf{K}$ and $\mathbf{K'}$ be positive semidefinite matrices in $\mathbb{R}^{n\times n}$ such that $-\lambda\mathbf{I} \preceq \mathbf{K} - \mathbf{K}' \preceq \lambda\mathbf{I}$, $\mathbf{Q} \triangleq \mathbf{K} + \sigma^2\mathbf{I} \ $ and $\ \mathbf{Q}' \triangleq \mathbf{K'} + \sigma^2\mathbf{I} \ $ for some $\lambda, \sigma > 0$, then:
\begin{eqnarray}
\|\mathbf{Q}'^{-1}\|_2 = \left(1 \pm \frac{\lambda}{\sigma^2}\right) \|\mathbf{Q}^{-1}\|_2 \ .
\end{eqnarray}
\end{lemma}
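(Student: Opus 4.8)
The plan is to reduce the whole statement to the resolvent identity together with the elementary spectral fact that adding $\sigma^2\mathbf{I}$ to a PSD matrix pushes all of its eigenvalues up to at least $\sigma^2$. First I would record the basic consequences of the hypotheses: since $\mathbf{K}$ and $\mathbf{K}'$ are positive semidefinite, both $\mathbf{Q} = \mathbf{K} + \sigma^2\mathbf{I}$ and $\mathbf{Q}' = \mathbf{K}' + \sigma^2\mathbf{I}$ are positive definite with every eigenvalue at least $\sigma^2$; hence they are invertible and $\|\mathbf{Q}^{-1}\|_2 \leq \sigma^{-2}$ and $\|\mathbf{Q}'^{-1}\|_2 \leq \sigma^{-2}$. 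I would also note that $\mathbf{Q} - \mathbf{Q}' = \mathbf{K} - \mathbf{K}'$, so the assumption $-\lambda\mathbf{I} \preceq \mathbf{K} - \mathbf{K}' \preceq \lambda\mathbf{I}$ is exactly the statement that $\|\mathbf{Q} - \mathbf{Q}'\|_2 \leq \lambda$.

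Next, the key algebraic step is the resolvent identity $\mathbf{Q}'^{-1} - \mathbf{Q}^{-1} = \mathbf{Q}'^{-1}(\mathbf{Q} - \mathbf{Q}')\mathbf{Q}^{-1}$, valid whenever both matrices are invertible. Taking spectral norms and applying submultiplicativity gives $\|\mathbf{Q}'^{-1} - \mathbf{Q}^{-1}\|_2 \leq \|\mathbf{Q}'^{-1}\|_2\,\|\mathbf{Q} - \mathbf{Q}'\|_2\,\|\mathbf{Q}^{-1}\|_2$. Here I would deliberately bound the \emph{leading} factor $\|\mathbf{Q}'^{-1}\|_2$ by $\sigma^{-2}$ (rather than trying to relate it back to $\|\mathbf{Q}^{-1}\|_2$) and substitute $\|\mathbf{Q} - \mathbf{Q}'\|_2 \leq \lambda$, which yields $\|\mathbf{Q}'^{-1} - \mathbf{Q}^{-1}\|_2 \leq (\lambda/\sigma^2)\,\|\mathbf{Q}^{-1}\|_2$. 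The remaining factor $\|\mathbf{Q}^{-1}\|_2$ is kept untouched, so the error is expressed as a multiple of $\|\mathbf{Q}^{-1}\|_2$, which is the form the statement asks for.

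Finally, the reverse triangle inequality for the spectral norm gives $\bigl|\,\|\mathbf{Q}'^{-1}\|_2 - \|\mathbf{Q}^{-1}\|_2\,\bigr| \leq \|\mathbf{Q}'^{-1} - \mathbf{Q}^{-1}\|_2 \leq (\lambda/\sigma^2)\,\|\mathbf{Q}^{-1}\|_2$, which is precisely the claimed two-sided bound $\|\mathbf{Q}'^{-1}\|_2 = (1 \pm \lambda/\sigma^2)\,\|\mathbf{Q}^{-1}\|_2$. There is no genuine obstacle in this argument; the only point that needs a little care is to apply the $\sigma^{-2}$ bound to the right factor in the resolvent identity so that the perturbation appears as a relative error in $\|\mathbf{Q}^{-1}\|_2$ rather than as an absolute $\sigma^{-2}$ term. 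This self-referential form is exactly what is consumed downstream when bounding the predictive mean and variance and the model evidence in Theorem~\ref{theorem:3}.
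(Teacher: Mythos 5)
Your proof is correct for the lemma as literally stated, but it takes a genuinely different route from the paper. You work at the level of norms: the resolvent identity $\mathbf{Q}'^{-1} - \mathbf{Q}^{-1} = \mathbf{Q}'^{-1}(\mathbf{Q} - \mathbf{Q}')\mathbf{Q}^{-1}$, the crude bound $\|\mathbf{Q}'^{-1}\|_2 \leq \sigma^{-2}$ on the leading factor, and the reverse triangle inequality; each step is valid and the conclusion $\|\mathbf{Q}'^{-1}\|_2 = (1 \pm \lambda/\sigma^2)\|\mathbf{Q}^{-1}\|_2$ follows. The paper instead proves the stronger \emph{operator-level} sandwich: it first establishes $\mathbf{Q} \preceq (1 + \lambda/\sigma^2)\mathbf{Q}'$ and its symmetric counterpart, then inverts these Loewner inequalities via a congruence argument (effectively re-deriving the operator antitonicity of matrix inversion) to obtain $(1 - \lambda/\sigma^2)\mathbf{Q}^{-1} \preceq \mathbf{Q}'^{-1} \preceq (1 + \lambda/\sigma^2)\mathbf{Q}^{-1}$, of which the norm statement is a corollary. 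This difference matters downstream: Lemmas~\ref{l7}--\ref{l9} do not actually consume the scalar norm bound but the Loewner sandwich, since they need $\mathbf{v}^\top\mathbf{Q}'^{-1}\mathbf{v} = (1 \pm \lambda/\sigma^2)\,\mathbf{v}^\top\mathbf{Q}^{-1}\mathbf{v}$ for arbitrary vectors $\mathbf{v}$ and an eigenvalue-by-eigenvalue comparison via Courant--Fischer. Your argument, as written, does not deliver that; if you want your route to support the rest of Appendix~\ref{appendix:b}, you would need to upgrade it, e.g.\ by applying the same submultiplicativity bound to the congruated quantity $\mathbf{Q}^{-1/2}(\mathbf{Q} - \mathbf{Q}')\mathbf{Q}^{-1/2}$, at which point you essentially recover the paper's argument. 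As a minor aside, the paper's own derivation of $\mathbf{Q} \preceq (1+\lambda/\sigma^2)\mathbf{Q}'$ contains a typo in its intermediate line (the coefficient on $\mathbf{K}'$ should be $1 + \lambda/\sigma^2$, not $\sigma^2 + \lambda$), so your cleaner handling of the perturbation is a point in your favor.
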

\begin{proof}
By definition of the spectral norm, we have $\forall \mathbf{x} \in \mathbb{R}^n$:
\begin{eqnarray}
\mathbf{K} - \mathbf{K}' &\preceq& \lambda\mathbf{I} \ ,
\end{eqnarray}
which implies
\begin{eqnarray}
\mathbf{Q} &\preceq& \mathbf{K}' + (\sigma^2 + \lambda)\mathbf{I} \nonumber \\
&\preceq& (\sigma^2 + \lambda)\mathbf{K}' + (\sigma^2 + \lambda)\mathbf{I} \nonumber \\ 
&=& \left(1 + \frac{\lambda}{\sigma^2}\right)\mathbf{Q}' \ .
\label{eq28}
\end{eqnarray}
where $\preceq$ and $\succeq$ denote the Loewner inequality operators. Likewise, by symmetry, we also have:
\begin{eqnarray}
\mathbf{Q}' &\preceq& \left( 1 + \frac{\lambda}{\sigma^2}\right) \mathbf{Q} \ .
\label{eq29}
\end{eqnarray}
Let $\mathbf{A} \triangleq (1 + \lambda/\sigma^2)\mathbf{Q'}$ and $\mathbf{B} \triangleq \mathbf{Q}$. Since $\mathbf{A}$ and $\mathbf{B}$ are symmetric and positive semidefinite, there exist $\mathbf{U}, \mathbf{V}$ with orthogonal rows and columns and diagonal matrices $\boldsymbol{\Sigma},\boldsymbol{\Sigma}'$ for which $\mathbf{A} = \mathbf{U}\boldsymbol{\Sigma}\mathbf{U}^\top$ and $\mathbf{B} = \mathbf{V}\boldsymbol{\Sigma}'\mathbf{V}^\top$. We further let $\mathbf{A}^{-1/2} \triangleq \mathbf{U}\boldsymbol{\Sigma}^{-1/2}$ and $\mathbf{B}^{-1/2} \triangleq \mathbf{V}\boldsymbol{\Sigma}'^{-1/2}$. 

Then, we can rewrite Eq.~\eqref{eq28} as:
\begin{eqnarray}
\mathbf{A} - \mathbf{B} &\succeq& 0 \nonumber \\
\Rightarrow \mathbf{B}^{-1/2}(\mathbf{A} - \mathbf{B})\mathbf{B}^{-1/2}&\succeq& 0 \nonumber \\
\Rightarrow \mathbf{B}^{-1/2}\mathbf{A}\mathbf{B}^{-1/2} - \mathbf{I} &\succeq& 0 \nonumber \\
\Rightarrow \mathbf{A}^{-1/2}\mathbf{B}^{1/2}(\mathbf{B}^{-1/2}\mathbf{A}\mathbf{B}^{-1/2})\mathbf{B}^{-1/2}\mathbf{A}^{1/2} &\succeq& \mathbf{A}^{-1/2}\mathbf{B}^{1/2}\mathbf{B}^{-1/2}\mathbf{A}^{1/2} \nonumber \\
\Rightarrow \mathbf{A}^{1/2}\mathbf{B}^{-1}\mathbf{A}^{1/2} &\succeq& \mathbf{I} \nonumber \\
\Rightarrow \mathbf{A}^{-1/2}(\mathbf{A}^{1/2}\mathbf{B}^{-1}\mathbf{A}^{1/2})\mathbf{A}^{-1/2} &\succeq& \mathbf{A}^{-1/2}\mathbf{A}^{-1/2} \nonumber \\
\Rightarrow \mathbf{B}^{-1} &\succeq& \mathbf{A}^{-1} \nonumber \\
\Rightarrow \mathbf{Q}^{-1} &\succeq& \frac{\sigma^2}{\sigma^2 + \lambda}\mathbf{Q}'^{-1} \nonumber \\
\Rightarrow \left(1 + \frac{\lambda}{\sigma^2}\right)\mathbf{Q}^{-1} &\succeq& \mathbf{Q}'^{-1} \ .
\end{eqnarray}
Again, by symmetry, we can rewrite Eq.~\ref{eq29} as:
\begin{eqnarray}
\mathbf{Q}'^{-1} &\succeq& \frac{\sigma^2}{\sigma^2 + \lambda}\mathbf{Q}^{-1} \ \succeq\  \left(1 - \frac{\lambda}{\sigma^2 + \lambda}\right)\mathbf{Q}^{-1}\nonumber \\
&\succeq& \left(1 - \frac{\lambda}{\sigma^2}\right)\mathbf{Q}^{-1} \ .
\end{eqnarray}
Therefore, we have $\|\mathbf{Q}'^{-1}\|_2 = (1 \pm \lambda/\sigma^2)\|\mathbf{Q}^{-1}\|_2$ \ .
\end{proof}
Let $g(\mathbf{x}_\ast)$ and $g'(\mathbf{x}_\ast)$ respectively denote the predictive distributions of full GP and the approximated GP pertaining to an arbitrary test input $\mathbf{x}_\ast$. We then state the following lemmas:
\begin{lemma}
\label{l7}
Let $\mathbf{K}'$ denote an approximation that is $\lambda$-close to the original kernel $\mathbf{K}$. The induced predictive mean of $\mathbf{K}'$ is bounded by a factor of $1 \pm \lambda/\sigma^2$ times the original predictive mean.
\begin{eqnarray}
\mathbb{E}[g(\mathbf{x}_\ast)] &=& \left(1 \pm \frac{\lambda}{\sigma^2}\right)\mathbb{E}[g'(\mathbf{x}_\ast)] \ . 
\end{eqnarray}
\end{lemma}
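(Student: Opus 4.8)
The plan is to start from the closed-form predictive means in Eq.~\eqref{eq:2}. Writing $\mathbf{Q} \triangleq \mathbf{K} + \sigma^2\mathbf{I}$ and $\mathbf{Q}' \triangleq \mathbf{K}' + \sigma^2\mathbf{I}$ exactly as in Lemma~\ref{l6}, we have $\mathbb{E}[g(\mathbf{x}_\ast)] = \mathbf{k}_\ast^\top\mathbf{Q}^{-1}\mathbf{y}$ and $\mathbb{E}[g'(\mathbf{x}_\ast)] = \mathbf{k}_\ast^\top\mathbf{Q}'^{-1}\mathbf{y}$, so the entire claim reduces to comparing these two bilinear forms, i.e.\ to controlling the perturbation $\mathbf{Q}'^{-1} - \mathbf{Q}^{-1}$ when it is sandwiched between $\mathbf{k}_\ast$ on the left and $\mathbf{y}$ on the right.

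The first step is to reuse the Loewner sandwich that is already derived inside the proof of Lemma~\ref{l6}, namely $(1 - \lambda/\sigma^2)\mathbf{Q}^{-1} \preceq \mathbf{Q}'^{-1} \preceq (1 + \lambda/\sigma^2)\mathbf{Q}^{-1}$, which itself follows from $-\lambda\mathbf{I}\preceq \mathbf{K}-\mathbf{K}'\preceq\lambda\mathbf{I}$ (the $\lambda$-closeness of Definition~\ref{def:spectral}). Equivalently, I would write the difference in factored form $\mathbf{Q}'^{-1} - \mathbf{Q}^{-1} = \mathbf{Q}^{-1/2}\mathbf{F}\mathbf{Q}^{-1/2}$ with $\|\mathbf{F}\|_2 \le \lambda/\sigma^2$, since $-\tfrac{\lambda}{\sigma^2}\mathbf{Q}^{-1}\preceq \mathbf{Q}'^{-1}-\mathbf{Q}^{-1}\preceq\tfrac{\lambda}{\sigma^2}\mathbf{Q}^{-1}$ conjugated by $\mathbf{Q}^{1/2}$ gives $\|\mathbf{F}\|_2\le\lambda/\sigma^2$. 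Substituting into the bilinear form then yields $\mathbb{E}[g'(\mathbf{x}_\ast)] - \mathbb{E}[g(\mathbf{x}_\ast)] = (\mathbf{Q}^{-1/2}\mathbf{k}_\ast)^\top\mathbf{F}(\mathbf{Q}^{-1/2}\mathbf{y})$, whose magnitude is at most $\tfrac{\lambda}{\sigma^2}\,\|\mathbf{Q}^{-1/2}\mathbf{k}_\ast\|_2\,\|\mathbf{Q}^{-1/2}\mathbf{y}\|_2$.

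I expect the conversion of this additive, geometric-mean bound into the stated \emph{multiplicative} factor $1\pm\lambda/\sigma^2$ to be the main obstacle. The difficulty is structural: the Loewner order controls only quadratic forms $\mathbf{v}^\top\mathbf{M}\mathbf{v}$, whereas the predictive mean is a genuine bilinear form in the two \emph{distinct} vectors $\mathbf{k}_\ast$ and $\mathbf{y}$, and Cauchy--Schwarz relates $\|\mathbf{Q}^{-1/2}\mathbf{k}_\ast\|_2\|\mathbf{Q}^{-1/2}\mathbf{y}\|_2$ to $|\mathbf{k}_\ast^\top\mathbf{Q}^{-1}\mathbf{y}|$ in the unhelpful direction. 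To close this gap and match the paper's clean $(1\pm\lambda/\sigma^2)$ statement, I would reduce to a quadratic form: the quantity that genuinely inherits the factor is $\mathbf{y}^\top\mathbf{Q}^{-1}\mathbf{y}$ (the same form Lemma~\ref{l8} will need for the variance), so I would either symmetrize the expression to that form, or work in the regime where $\mathbf{Q}^{-1/2}\mathbf{k}_\ast$ is (approximately) aligned with $\mathbf{Q}^{-1/2}\mathbf{y}$, in which case the geometric-mean bound collapses exactly to $\tfrac{\lambda}{\sigma^2}\,|\mathbf{k}_\ast^\top\mathbf{Q}^{-1}\mathbf{y}|$ and the multiplicative claim follows.

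Finally, I would assemble the two one-sided estimates into $\mathbb{E}[g(\mathbf{x}_\ast)] = (1\pm\lambda/\sigma^2)\,\mathbb{E}[g'(\mathbf{x}_\ast)]$, observing that the bound is symmetric under swapping the roles of $\mathbf{Q}$ and $\mathbf{Q}'$ (which merely flips the sign of $\mathbf{F}$), so the same factor governs the reverse comparison and the error is two-sided. Everything after the Loewner sandwich is routine; the step I would handle with the most care is precisely the bilinear-to-quadratic reduction, since that is what pins down the exact constant $\lambda/\sigma^2$ rather than a looser multiple of it.
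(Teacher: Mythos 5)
Your proposal does not reach the stated conclusion. You correctly set up the problem as a comparison of the bilinear forms $\mathbf{k}_\ast^\top\mathbf{Q}^{-1}\mathbf{y}$ and $\mathbf{k}_\ast^\top\mathbf{Q}'^{-1}\mathbf{y}$, derive the Loewner sandwich from Lemma~\ref{l6}, and obtain the additive bound $|\mathbb{E}[g'(\mathbf{x}_\ast)]-\mathbb{E}[g(\mathbf{x}_\ast)]|\le\frac{\lambda}{\sigma^2}\,\|\mathbf{Q}^{-1/2}\mathbf{k}_\ast\|_2\,\|\mathbf{Q}^{-1/2}\mathbf{y}\|_2$ --- all of that is sound. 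But the two routes you offer for converting this into the multiplicative statement are, respectively, not carried out and not available: ``symmetrize the expression'' is left as a gesture, and ``work in the regime where $\mathbf{Q}^{-1/2}\mathbf{k}_\ast$ is aligned with $\mathbf{Q}^{-1/2}\mathbf{y}$'' imports a hypothesis that appears nowhere in the lemma. As written, the argument proves an additive perturbation bound, not the claimed relative one. The paper's proof makes your ``symmetrize'' option concrete via the polarization identity
\begin{eqnarray}
2\,\mathbf{k}_\ast^\top\mathbf{Q}^{-1}\mathbf{y} &=& (\mathbf{k}_\ast+\mathbf{y})^\top\mathbf{Q}^{-1}(\mathbf{k}_\ast+\mathbf{y}) - \mathbf{k}_\ast^\top\mathbf{Q}^{-1}\mathbf{k}_\ast - \mathbf{y}^\top\mathbf{Q}^{-1}\mathbf{y} \ ,
\end{eqnarray}
applying the $(1\pm\lambda/\sigma^2)$ sandwich of Lemma~\ref{l6} to each of the three quadratic forms separately; that is the missing step you would need to write down.

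You should also be aware that your diagnosis of the obstruction is not merely a difficulty of your approach --- it survives the polarization route as well. When the common factor is pulled out of a \emph{signed} combination of quadratic forms, the upper bound on the positive term pairs with the \emph{lower} bounds on the two subtracted terms, so what one actually obtains is $\mathbb{E}[g(\mathbf{x}_\ast)] = (1\pm\frac{\lambda}{\sigma^2})\mathbb{E}[g'(\mathbf{x}_\ast)] \pm \frac{\lambda}{\sigma^2}\bigl(\mathbf{k}_\ast^\top\mathbf{Q}'^{-1}\mathbf{k}_\ast + \mathbf{y}^\top\mathbf{Q}'^{-1}\mathbf{y}\bigr)$, with a residual additive term that is not proportional to $\mathbb{E}[g'(\mathbf{x}_\ast)]$ (compare the explicit $\pm\lambda/\sigma^2$ offset that Lemma~\ref{l8} retains for the variance). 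By the AM--GM inequality this residual dominates your Cauchy--Schwarz bound $\frac{\lambda}{\sigma^2}\|\mathbf{Q}'^{-1/2}\mathbf{k}_\ast\|_2\|\mathbf{Q}'^{-1/2}\mathbf{y}\|_2$, so your additive estimate is in fact the tighter of the two. The gap in your submission is therefore one of execution --- you identified the right symmetrization but did not perform it --- while the clean multiplicative form without any additive offset genuinely requires something like the alignment condition you flagged.
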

\begin{proof}
Let $\mathbf{k}_{\ast} \triangleq [k(\mathbf{x}_\ast, \mathbf{x}_i)]_{i=1}^n$ where $\mathbf{x}_i$ denotes the $i$-th training data point. We have:
\begin{eqnarray}
    \mathbb{E}[g(\mathbf{x}_\ast)] 
    &=& \frac{1}{2}\left((\mathbf{k}_{\ast} + \mathbf{y})^\top\mathbf{Q}^{-1}(\mathbf{k}_\ast + \mathbf{y}) - \mathbf{k}^\top_{\ast}\mathbf{Q}^{-1}\mathbf{k}_\ast - \mathbf{y}^\top\mathbf{Q}^{-1}\mathbf{y}\right) \nonumber \\
    &=& \frac{1}{2}\left(1 \pm \frac{\lambda}{\sigma^2}\right)\left((\mathbf{k}_{\ast} + \mathbf{y})^\top\mathbf{Q'}^{-1}(\mathbf{k}_\ast + \mathbf{y}) - \mathbf{k}^\top_{\ast}\mathbf{Q'}^{-1}\mathbf{k}_\ast - \mathbf{y}^\top\mathbf{Q'}^{-1}\mathbf{y}\right) \nonumber \\
    &=& \left(1 \pm \frac{\lambda}{\sigma^2}\right) \mathbb{E}[g'(\mathbf{x}_\ast)] \ ,
\end{eqnarray}
where the first and third equations follow from adding and subtracting the same terms to the expression of $g(\mathbf{x}_\ast)$ -- see Eq.~\eqref{eq:2} -- while the second equation follows from applying Lemma~\ref{l6} above.
\end{proof}
\begin{lemma}
\label{l8}
Let $\mathbf{K}'$ denote an approximation that is $\lambda$-close to the original kernel $\mathbf{K}$. The induced predictive variance of $\mathbf{K}'$ is bounded by a factor of $1 \pm \lambda/\sigma^2$ of the original predictive variance up to a constant bias of $\lambda/\sigma^2$,
\begin{eqnarray}
\mathbb{V}[g(\mathbf{x}_\ast)] &=& \left(1 \pm \frac{\lambda}{\sigma^2}\right)\mathbb{V}[g'(\mathbf{x}_\ast)] \pm \frac{\lambda}{\sigma^2} \ .
\end{eqnarray}
\end{lemma}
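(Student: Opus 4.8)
The plan is to mirror the argument of Lemma~\ref{l7}, the essential new feature being the constant self-kernel term $k(\mathbf{x}_\ast,\mathbf{x}_\ast)$ that appears in the predictive variance but has no counterpart in the predictive mean. Writing $\mathbf{Q} \triangleq \mathbf{K}+\sigma^2\mathbf{I}$ and $\mathbf{Q}' \triangleq \mathbf{K}'+\sigma^2\mathbf{I}$ as in Lemma~\ref{l6}, Eq.~\eqref{eq:2} gives $\mathbb{V}[g(\mathbf{x}_\ast)] = k(\mathbf{x}_\ast,\mathbf{x}_\ast) - \mathbf{k}_\ast^\top\mathbf{Q}^{-1}\mathbf{k}_\ast$ and $\mathbb{V}[g'(\mathbf{x}_\ast)] = k(\mathbf{x}_\ast,\mathbf{x}_\ast) - \mathbf{k}_\ast^\top\mathbf{Q}'^{-1}\mathbf{k}_\ast$, where the two expressions share the same constant term. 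Since the kernel is Gaussian, I would first record that $k(\mathbf{x}_\ast,\mathbf{x}_\ast) = \mathrm{exp}(0) = 1$.

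The second step controls the quadratic term via Lemma~\ref{l6}. Although that lemma is phrased as a spectral-norm identity, its proof in fact establishes the stronger two-sided Loewner ordering $(1-\lambda/\sigma^2)\mathbf{Q}^{-1} \preceq \mathbf{Q}'^{-1} \preceq (1+\lambda/\sigma^2)\mathbf{Q}^{-1}$. Contracting this ordering on both sides with $\mathbf{k}_\ast$ yields $\mathbf{k}_\ast^\top\mathbf{Q}^{-1}\mathbf{k}_\ast = (1 \pm \lambda/\sigma^2)\,\mathbf{k}_\ast^\top\mathbf{Q}'^{-1}\mathbf{k}_\ast$, exactly the relation used on the quadratic forms in Lemma~\ref{l7}.

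The final step is the bookkeeping that converts this into the claimed form. Substituting the previous relation into $\mathbb{V}[g(\mathbf{x}_\ast)] = k(\mathbf{x}_\ast,\mathbf{x}_\ast) - \mathbf{k}_\ast^\top\mathbf{Q}^{-1}\mathbf{k}_\ast$ gives $k(\mathbf{x}_\ast,\mathbf{x}_\ast) - (1\pm\lambda/\sigma^2)\mathbf{k}_\ast^\top\mathbf{Q}'^{-1}\mathbf{k}_\ast$. I would then add and subtract $(1\pm\lambda/\sigma^2)\,k(\mathbf{x}_\ast,\mathbf{x}_\ast)$ so that the factor $(1\pm\lambda/\sigma^2)$ can be pulled out of the entire expression $k(\mathbf{x}_\ast,\mathbf{x}_\ast) - \mathbf{k}_\ast^\top\mathbf{Q}'^{-1}\mathbf{k}_\ast = \mathbb{V}[g'(\mathbf{x}_\ast)]$, leaving a residual $\mp(\lambda/\sigma^2)\,k(\mathbf{x}_\ast,\mathbf{x}_\ast) = \mp\lambda/\sigma^2$. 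This produces $\mathbb{V}[g(\mathbf{x}_\ast)] = (1\pm\lambda/\sigma^2)\mathbb{V}[g'(\mathbf{x}_\ast)] \pm \lambda/\sigma^2$, as claimed.

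The main obstacle is precisely this last bookkeeping: because the multiplicative perturbation from Lemma~\ref{l6} attaches only to the quadratic term and not to the constant self-kernel term, the constant is what generates the additive bias $\lambda/\sigma^2$ that is absent in Lemma~\ref{l7}. One must also keep the $\pm$ signs consistent, since a direct expansion couples the multiplier $(1+\lambda/\sigma^2)$ with $-\lambda/\sigma^2$ and $(1-\lambda/\sigma^2)$ with $+\lambda/\sigma^2$; relaxing these ``crossed'' signs to the symmetric form stated in the lemma uses the non-negativity $\mathbb{V}[g'(\mathbf{x}_\ast)]\ge 0$ of the predictive variance.
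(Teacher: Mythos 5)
Your proposal is correct and follows essentially the same route as the paper's proof: normalize $k(\mathbf{x}_\ast,\mathbf{x}_\ast)=1$, apply Lemma~\ref{l6} to the quadratic form $\mathbf{k}_\ast^\top\mathbf{Q}^{-1}\mathbf{k}_\ast$, and factor out $(1\pm\lambda/\sigma^2)$ at the cost of an additive $\pm\lambda/\sigma^2$ residual. Your extra remark on the crossed signs and the use of $\mathbb{V}[g'(\mathbf{x}_\ast)]\ge 0$ is a small refinement the paper leaves implicit, but it does not change the argument.
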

\begin{proof}
Following the definition of the Gaussian kernel, we assume that the signal of the SE (Squared Exponential) kernel is unitary \footnote{This simplifies the analysis and does not restrict the expressiveness of the kernel since we can either normalize the output or absorb it into the length-scales (i.e., the $\theta_i$).}. As such,
\begin{eqnarray}
    \mathbb{V}[g(\mathbf{x}_\ast)] 
    &=& 1 - \mathbf{k}^\top_\ast\mathbf{Q}^{-1}\mathbf{k}_\ast \nonumber \\
    &=& 1 - \left(1 \pm \frac{\lambda}{\sigma^2}\right)\mathbf{k}^\top_\ast\mathbf{Q'}^{-1}\mathbf{k}_\ast \nonumber \\
    &=& \left(1 \pm \frac{\lambda}{\sigma^2}\right)\left(1 - \mathbf{k}^\top_\ast\mathbf{Q'}^{-1}\mathbf{k}_\ast\right) \ \pm\  \frac{\lambda}{\sigma^2} \nonumber \\
    &=& \left(1 \pm \frac{\lambda}{\sigma^2}\right)\mathbb{V}[g'(\mathbf{x}_\ast)] \ \pm\  \frac{\lambda}{\sigma^2} \ ,
\end{eqnarray}
where (again) the above equation follows straightforwardly from applying Lemma~\ref{l6} and standard algebraic manipulation. Lemma~\ref{l7} and Lemma~\ref{l8} thus provide an explicit bound on the difference between the original and approximated predictive distributions. We will now establish another bound on the difference between the original and approximated negative log likelihoods (i.e., the training objectives) in Lemma~\ref{l9} and Lemma~\ref{l10} below.
\end{proof}
\begin{lemma}
\label{l9}
Let $\mathbf{K}'$ denote an approximation that is $\lambda$-close to the original kernel $\mathbf{K}$. Let $\mathbf{Q} = \mathbf{K} + \sigma^2\mathbf{I}$ and $\mathbf{Q}' = \mathbf{K}' + \sigma^2\mathbf{I}$. We have:
\begin{eqnarray}
\log|\mathbf{Q}'| &=& \Big(1 \pm \tau_{\lambda,\sigma}(\mathbf{K})\Big) \log|\mathbf{Q}| \ .
\end{eqnarray}
where the spectral constant $\tau_{\lambda,\sigma}(\mathbf{K})$ of $\mathbf{K}$ is defined below:
\begin{eqnarray}
\tau_{\lambda,\sigma}(\mathbf{K}) &\triangleq& \frac{\mathrm{max}\Big(\Big|\log\left(1 + \frac{\lambda}{\sigma^2}\right)\Big|,\Big|\log\left(1 - \frac{\lambda}{\sigma^2}\right)\Big| \Big)}{\mathrm{min}\Big(\Big|\log(\lambda_{\min}(\mathbf{K}) + \sigma^2)\Big|,\Big|\log(\lambda_{\max}(\mathbf{K}) + \sigma^2)\Big|\Big)} \ .
\end{eqnarray}
\end{lemma}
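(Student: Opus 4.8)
The plan is to reduce the statement to an eigenvalue bound on a whitened version of $\mathbf{Q}'$ and then convert the resulting \emph{absolute} bound on $\log|\mathbf{Q}'|-\log|\mathbf{Q}|$ into the claimed \emph{relative} bound. First I would reuse the Loewner sandwich already derived inside the proof of Lemma~\ref{l6}: since $-\lambda\mathbf{I} \preceq \mathbf{K}-\mathbf{K}' \preceq \lambda\mathbf{I}$ and $\mathbf{Q}=\mathbf{K}+\sigma^2\mathbf{I} \succeq \sigma^2\mathbf{I}$, one has $\lambda\mathbf{I}\preceq(\lambda/\sigma^2)\mathbf{Q}$, hence
\begin{eqnarray}
\left(1-\frac{\lambda}{\sigma^2}\right)\mathbf{Q} \ \preceq\ \mathbf{Q}' \ \preceq\ \left(1+\frac{\lambda}{\sigma^2}\right)\mathbf{Q} \ . \nonumber
\end{eqnarray}

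Next I would write the log-determinant gap as $\log|\mathbf{Q}'|-\log|\mathbf{Q}| = \log|\mathbf{Q}^{-1/2}\mathbf{Q}'\mathbf{Q}^{-1/2}|$ and set $\mathbf{M}\triangleq\mathbf{Q}^{-1/2}\mathbf{Q}'\mathbf{Q}^{-1/2}$. Conjugating the sandwich above by $\mathbf{Q}^{-1/2}$ gives $(1-\lambda/\sigma^2)\mathbf{I}\preceq\mathbf{M}\preceq(1+\lambda/\sigma^2)\mathbf{I}$, so each of the $n$ eigenvalues $\nu_j$ of $\mathbf{M}$ lies in $[1-\lambda/\sigma^2,\,1+\lambda/\sigma^2]$. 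Summing $\log\nu_j$ and bounding each $|\log\nu_j|$ by $\max(|\log(1+\lambda/\sigma^2)|,|\log(1-\lambda/\sigma^2)|)$ yields the absolute estimate $|\log|\mathbf{Q}'|-\log|\mathbf{Q}|| \leq n\cdot\max(|\log(1+\lambda/\sigma^2)|,|\log(1-\lambda/\sigma^2)|)$, which is exactly the numerator of $\tau_{\lambda,\sigma}(\mathbf{K})$ scaled by $n$.

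To get the multiplicative form I would lower-bound the denominator $|\log|\mathbf{Q}||$. Writing $\log|\mathbf{Q}|=\sum_j \log(\lambda_j(\mathbf{K})+\sigma^2)$ and using monotonicity of $x\mapsto\log(x+\sigma^2)$, every term satisfies $|\log(\lambda_j(\mathbf{K})+\sigma^2)| \geq \min(|\log(\lambda_{\min}(\mathbf{K})+\sigma^2)|,|\log(\lambda_{\max}(\mathbf{K})+\sigma^2)|)$, since the eigenvalues of $\mathbf{K}$ interpolate between $\lambda_{\min}$ and $\lambda_{\max}$. Provided these $n$ terms share a common sign (so they cannot cancel), this gives $|\log|\mathbf{Q}|| \geq n\cdot\min(|\log(\lambda_{\min}(\mathbf{K})+\sigma^2)|,|\log(\lambda_{\max}(\mathbf{K})+\sigma^2)|)$. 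Dividing the absolute estimate by this quantity cancels the factor $n$ and leaves precisely $\tau_{\lambda,\sigma}(\mathbf{K})$, so $|\log|\mathbf{Q}'|-\log|\mathbf{Q}|| \leq \tau_{\lambda,\sigma}(\mathbf{K})\,|\log|\mathbf{Q}||$, which is the stated $\log|\mathbf{Q}'|=(1\pm\tau_{\lambda,\sigma}(\mathbf{K}))\log|\mathbf{Q}|$.

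The main obstacle I anticipate is exactly the common-sign requirement in the denominator step: the terms $\log(\lambda_j(\mathbf{K})+\sigma^2)$ flip sign as $\lambda_j(\mathbf{K})+\sigma^2$ crosses $1$, so if the shifted eigenvalues straddle $1$ the sum $\log|\mathbf{Q}|$ can be far smaller in magnitude than $n\cdot\min(\cdots)$ (even zero), invalidating the per-term lower bound. I would address this by restricting to the regime where all $\lambda_j(\mathbf{K})+\sigma^2$ lie on one side of $1$ (for instance $\sigma^2\geq 1$ forces every term nonnegative, and $\mathbf{K}\succeq 0$ keeps them bounded consistently), or by stating the bound directly against $\log|\mathbf{Q}|$ rather than against its worst per-eigenvalue surrogate. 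The remaining ingredients — the Loewner conjugation and the eigenvalue-interval argument — are routine.
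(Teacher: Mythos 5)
Your proposal is correct and takes essentially the same route as the paper: where you whiten by $\mathbf{Q}^{-1/2}$ and read off the eigenvalues of $\mathbf{Q}^{-1/2}\mathbf{Q}'\mathbf{Q}^{-1/2}$, the paper applies Courant--Fischer to the Loewner sandwich from Lemma~\ref{l6} to get $\lambda_i'+\sigma^2 = (1\pm\lambda/\sigma^2)(\lambda_i+\sigma^2)$ term by term, and both arguments then reduce to the same additive bound $n\cdot\max\bigl(\bigl|\log(1+\lambda/\sigma^2)\bigr|,\bigl|\log(1-\lambda/\sigma^2)\bigr|\bigr)$ normalized by $n\cdot\min\bigl(\bigl|\log(\lambda_{\min}(\mathbf{K})+\sigma^2)\bigr|,\bigl|\log(\lambda_{\max}(\mathbf{K})+\sigma^2)\bigr|\bigr)$. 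The common-sign obstacle you flag is genuine, but it is not a defect of your write-up relative to the paper: the paper's own proof silently makes the identical assumption when it equates $\sum_i\bigl|\log(\lambda_i+\sigma^2)\bigr|$ with $\log|\mathbf{Q}|$ and when it absorbs the additive term into $\tau_{\lambda,\sigma}(\mathbf{K})\sum_i\bigl|\log(\lambda_i+\sigma^2)\bigr|$, both of which fail if the shifted eigenvalues straddle $1$.
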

\begin{proof}
Let $\lambda_1 \leq \lambda_2 \dots \leq \lambda_n$ and $\lambda'_1 \leq \lambda'_2 \dots \leq \lambda'_n$ be the eigenvalues of $\mathbf{K}$ and $\mathbf{K}'$ respectively. Applying the Courant-Fischer theorem on the result obtained in Lemma~\ref{l6}, we have:
\begin{eqnarray}
\lambda'_i + \sigma^2 &=& \left(1 \pm \frac{\lambda}{\sigma^2}\right)\left(\lambda_i + \sigma^2\right) \ .
\end{eqnarray}
This implies:
\begin{eqnarray}
\log|\mathbf{Q}'| 
    &\leq& \sum_{i=1}^n \Big|\log(\lambda'_i + \sigma^2)\Big| \ \ \ =\ \ \ \sum_{i=1}^n\left|\log(\lambda_i + \sigma^2) + \log\left(1 \pm \frac{\lambda}{\sigma^2}\right)\right|  \nonumber \\
    &\leq& \sum_{i=1}^n\Big|\log(\lambda_i + \sigma^2)\Big| + \sum_{i=1}^n\mathrm{max}\left(\left|\log\left(1 + \frac{\lambda}{\sigma^2}\right)\right|,\left|\log\left(1 - \frac{\lambda}{\sigma^2}\right)\right| \right) \nonumber \\
    &\leq& \Big(1 + \tau_{\lambda,\sigma}(\mathbf{K})\Big)\sum_{i=1}^n\Big|\log(\lambda_i + \sigma^2)\Big| \ \ \ =\ \ \  \Big(1 + \tau_{\lambda,\sigma}(\mathbf{K})\Big)\log|\mathbf{Q}| \ .
\label{eq40}
\end{eqnarray}
Similarly, by symmetry, we have:
\begin{eqnarray}
\log|\mathbf{Q}'| 
    &\geq& \sum_{i=1}^n\Big|\log(\lambda_i + \sigma^2)\Big| - \sum_{i=1}^n\mathrm{max}\left(\left|\log\left(1 + \frac{\lambda}{\sigma^2}\right)\right|,\left|\log\left(1 - \frac{\lambda}{\sigma^2}\right)\right| \right) \nonumber \\
    &\geq& \Big(1 - \tau_{\lambda,\sigma}(\mathbf{K})\Big)\sum_{i=1}^n\Big|\log(\lambda_i + \sigma^2)\Big| \ \ \ =\ \ \ \Big(1 - \tau_{\lambda,\sigma}(\mathbf{K})\Big)\log|\mathbf{Q}| \ .
\label{eq41}
\end{eqnarray}
Together, Eq.~\eqref{eq40} and Eq.~\eqref{eq41} imply $\log|\mathbf{Q}'| = \Big(1 \pm \tau_{\lambda,\sigma}(\mathbf{K})\Big) \log|\mathbf{Q}|$.
\end{proof}
\begin{lemma}
\label{l10}
Let $\mathbf{K}'$ denote an approximation that is $\lambda$-close to the original kernel $\mathbf{K}$. With $\tau_{\lambda,\sigma}(\mathbf{K})$ previously defined in Lemma~\ref{l9}, we have:
\begin{eqnarray}
\ell'(\boldsymbol{\Theta}) &=& \left(1 \pm \mathrm{max}\left(\tau_{\lambda,\sigma}(\mathbf{K}), \frac{\lambda}{\sigma^2}\right)\right)\ell(\boldsymbol{\Theta}) \ .
\end{eqnarray}
where $\ell(\boldsymbol{\Theta})$ and $\ell'(\boldsymbol{\Theta})$ respectively denote the negative log likelihood of the full GP and the approximated GP evaluated at the hyper-parameters $\boldsymbol{\Theta} = \mathrm{diag}[\theta_1^2,\theta_2^2 \dots \theta_d^2]$ as defined previously.
\end{lemma}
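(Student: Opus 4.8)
The plan is to exploit the additive structure of the negative log likelihood in Eq.~\eqref{eq:3}. Writing $\mathbf{Q} \triangleq \mathbf{K}_{\boldsymbol{\Theta}} + \sigma^2\mathbf{I}$ and $\mathbf{Q}' \triangleq \mathbf{K}'_{\boldsymbol{\Theta}} + \sigma^2\mathbf{I}$, we have $\ell(\boldsymbol{\Theta}) = \frac{1}{2}\log|\mathbf{Q}| + \frac{1}{2}\mathbf{y}^\top\mathbf{Q}^{-1}\mathbf{y}$ and $\ell'(\boldsymbol{\Theta}) = \frac{1}{2}\log|\mathbf{Q}'| + \frac{1}{2}\mathbf{y}^\top\mathbf{Q}'^{-1}\mathbf{y}$. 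The idea is to bound each of the two summands by a multiplicative factor of its full-GP counterpart, and then argue that the sum inherits the larger of the two relative perturbations. The log-determinant summand is governed by the complexity penalty and the quadratic (data-fit) summand by the inverse covariance, so the two terms are controlled by two different earlier results.

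First I would control the log-determinant term, which is precisely the content of Lemma~\ref{l9}: it gives $\log|\mathbf{Q}'| = (1 \pm \tau_{\lambda,\sigma}(\mathbf{K}))\log|\mathbf{Q}|$, so the first summand of $\ell'$ equals $(1 \pm \tau_{\lambda,\sigma}(\mathbf{K}))$ times the first summand of $\ell$. Next I would control the quadratic term. Rather than the spectral-norm identity quoted in the statement of Lemma~\ref{l6}, I would invoke the two Loewner inequalities established inside its proof, namely $(1 - \lambda/\sigma^2)\mathbf{Q}^{-1} \preceq \mathbf{Q}'^{-1} \preceq (1 + \lambda/\sigma^2)\mathbf{Q}^{-1}$. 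Sandwiching the fixed vector $\mathbf{y}$ on both sides immediately yields $\mathbf{y}^\top\mathbf{Q}'^{-1}\mathbf{y} = (1 \pm \lambda/\sigma^2)\,\mathbf{y}^\top\mathbf{Q}^{-1}\mathbf{y}$, so the second summand of $\ell'$ equals $(1 \pm \lambda/\sigma^2)$ times the second summand of $\ell$.

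Finally I would combine the two estimates. Let $A \triangleq \frac{1}{2}\log|\mathbf{Q}|$, $B \triangleq \frac{1}{2}\mathbf{y}^\top\mathbf{Q}^{-1}\mathbf{y}$, and $\eta \triangleq \max(\tau_{\lambda,\sigma}(\mathbf{K}), \lambda/\sigma^2)$. The elementary fact is that if $A,B \geq 0$ and $A' \in [(1-\tau)A,(1+\tau)A]$, $B' \in [(1-\lambda/\sigma^2)B,(1+\lambda/\sigma^2)B]$, then $A'+B' \in [(1-\eta)(A+B),(1+\eta)(A+B)]$, which gives $\ell'(\boldsymbol{\Theta}) = (1 \pm \eta)\,\ell(\boldsymbol{\Theta})$ as claimed. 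The main obstacle is the sign bookkeeping in this last step: the clean ``max of relative errors'' bound requires both $A$ and $B$ to be nonnegative. While $B \geq 0$ holds automatically because $\mathbf{Q}^{-1}$ is positive semidefinite, the quantity $A = \frac{1}{2}\log|\mathbf{Q}|$ can be negative when $\mathbf{Q}$ has eigenvalues below one, in which case the two perturbations could partially cancel rather than add constructively. I would handle this either by working in the regime $\lambda_{\min}(\mathbf{K}) + \sigma^2 \geq 1$ or, consistently with the absolute values already carried through in Lemma~\ref{l9}, by bounding the terms entrywise via $|\log(\lambda_i + \sigma^2)|$ so that every contribution is nonnegative and the combination argument applies verbatim.
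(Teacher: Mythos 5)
Your proof follows the paper's own argument exactly: split $\ell'(\boldsymbol{\Theta})$ into the log-determinant and quadratic summands, apply Lemma~\ref{l9} to the former and the Loewner sandwich underlying Lemma~\ref{l6} to the latter, and absorb both relative errors into $\max(\tau_{\lambda,\sigma}(\mathbf{K}), \lambda/\sigma^2)$. The sign caveat you raise --- that the ``max of relative errors'' combination needs both summands nonnegative, while $\frac{1}{2}\log|\mathbf{Q}|$ can be negative --- is a genuine subtlety that the paper's two-line proof silently glosses over, and your proposed fixes (restricting to $\lambda_{\min}(\mathbf{K})+\sigma^2 \geq 1$ or carrying the absolute values of Lemma~\ref{l9} consistently) are sensible ways to close it.
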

\begin{proof}
We have:
\begin{eqnarray}
\ell'(\boldsymbol{\Theta}) 
    &=& \frac{1}{2}\log|\mathbf{Q'}| + \frac{1}{2}\mathbf{y}^\top\mathbf{Q'}^{-1}\mathbf{y} \nonumber \\
    &=& \frac{1}{2}\left(1 \pm \tau_{\lambda,\sigma}(\mathbf{K})\right) \log|\mathbf{Q}| + \frac{1}{2}\left(1 \pm \frac{\lambda}{\sigma^2}\right)\mathbf{y}^\top\mathbf{Q}^{-1}\mathbf{y} \nonumber \\
    &=& \left(1 \pm \mathrm{max}\left(\tau_{\lambda,\sigma}(\mathbf{K}), \frac{\lambda}{\sigma^2}\right)\right)\frac{1}{2}\Big(\log|\mathbf{Q}| + \mathbf{y}^\top\mathbf{Q}^{-1}\mathbf{y}\Big) \nonumber \\
    &=& \left(1 \pm \mathrm{max}\left(\tau_{\lambda,\sigma}(\mathbf{K}), \frac{\lambda}{\sigma^2}\right)\right)\ell(\boldsymbol{\Theta}) \ . \qquad \qedhere
\end{eqnarray}
\end{proof}

Using the result of Lemma~\ref{l10} above, we can further analyze how the quality of the optimized parameter $\boldsymbol{\Theta}'_\ast = \argmax_{\boldsymbol{\Theta}}\ell'(\boldsymbol{\Theta})$ of the approximated training objective compares to the true optimizer of the original objective function $\boldsymbol{\Theta}_\ast = \argmax_{\boldsymbol{\Theta}}\ell(\boldsymbol{\Theta})$ in Lemma~\ref{l11} below.

\begin{lemma} 
\label{l11}
Let $\boldsymbol{\Theta}_\ast$ and $\boldsymbol{\Theta}'_\ast$ denote the optimal hyper-parameters obtained by respectively minimizing the negative log likelihood of the full GP and the approximated GP. We have: 
\begin{eqnarray}
\ell'(\boldsymbol{\Theta}'_\ast) &=& \left(1 \pm \mathrm{max}\left(\tau_{\lambda,\sigma}(\mathbf{K}), \frac{\lambda}{\sigma^2}\right)\right)\ell(\boldsymbol{\Theta}_\ast) \ .
\end{eqnarray}
\end{lemma}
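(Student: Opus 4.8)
The plan is to prove the statement by a standard ``sandwiching of optimizers'' argument that leverages the fact that Lemma~\ref{l10} holds \emph{uniformly} over the entire parameter space, not merely at a single pre-specified $\boldsymbol{\Theta}$. To lighten notation I would write $\eta \triangleq \mathrm{max}(\tau_{\lambda,\sigma}(\mathbf{K}), \lambda/\sigma^2)$, so that Lemma~\ref{l10} reads $\ell'(\boldsymbol{\Theta}) = (1 \pm \eta)\ell(\boldsymbol{\Theta})$ for every $\boldsymbol{\Theta}$; equivalently, $(1-\eta)\ell(\boldsymbol{\Theta}) \leq \ell'(\boldsymbol{\Theta}) \leq (1+\eta)\ell(\boldsymbol{\Theta})$ (under the mild regularity caveats I return to below). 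The point of the uniformity is that it lets me invoke this two-sided bound at the approximation-dependent point $\boldsymbol{\Theta}'_\ast$, which is not known in advance and cannot be covered by a single-parameter guarantee.

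For the upper bound, I would first use that $\boldsymbol{\Theta}'_\ast$ is by definition the minimizer of $\ell'$, hence $\ell'(\boldsymbol{\Theta}'_\ast) \leq \ell'(\boldsymbol{\Theta}_\ast)$, and then apply Lemma~\ref{l10} at the single point $\boldsymbol{\Theta}_\ast$ to get $\ell'(\boldsymbol{\Theta}_\ast) \leq (1+\eta)\ell(\boldsymbol{\Theta}_\ast)$. Chaining these yields $\ell'(\boldsymbol{\Theta}'_\ast) \leq (1+\eta)\ell(\boldsymbol{\Theta}_\ast)$. For the lower bound, I would instead apply Lemma~\ref{l10} at $\boldsymbol{\Theta}'_\ast$ to obtain $\ell'(\boldsymbol{\Theta}'_\ast) \geq (1-\eta)\ell(\boldsymbol{\Theta}'_\ast)$, and then invoke optimality of $\boldsymbol{\Theta}_\ast$ for the \emph{exact} objective, $\ell(\boldsymbol{\Theta}'_\ast) \geq \ell(\boldsymbol{\Theta}_\ast)$. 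Multiplying this last inequality by the nonnegative factor $(1-\eta)$ gives $\ell'(\boldsymbol{\Theta}'_\ast) \geq (1-\eta)\ell(\boldsymbol{\Theta}_\ast)$. Combining the two directions produces $(1-\eta)\ell(\boldsymbol{\Theta}_\ast) \leq \ell'(\boldsymbol{\Theta}'_\ast) \leq (1+\eta)\ell(\boldsymbol{\Theta}_\ast)$, which is exactly the claimed $\ell'(\boldsymbol{\Theta}'_\ast) = (1\pm\eta)\ell(\boldsymbol{\Theta}_\ast)$.

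The main subtlety, and the step I would be most careful about, is the sign bookkeeping hidden inside the $(1\pm\eta)$ notation. The argument above relies on multiplying an optimality inequality by $(1-\eta)$ while preserving its direction, which requires $1-\eta \geq 0$ (so one must assume $\lambda$ is small enough that $\eta < 1$) and, strictly, that $\ell$ is nonnegative so that the two multiplicative factors in Lemma~\ref{l10} bracket $\ell'$ in the orientation I used. If $\ell(\boldsymbol{\Theta})$ were negative at some point, the roles of the $(1+\eta)$ and $(1-\eta)$ factors would swap there and the chaining would break. I would resolve this either by restricting to the regime in which the data-fit quadratic term keeps the negative log likelihood nonnegative, or by running the entire argument on $|\ell|$ and reinstating signs at the end. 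Beyond this caveat, the proof is purely an order-theoretic combination of the uniform bound of Lemma~\ref{l10} with the two optimality inequalities, and requires no further computation.
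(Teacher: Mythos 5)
Your argument is exactly the paper's: the upper bound chains $\ell'(\boldsymbol{\Theta}'_\ast) \leq \ell'(\boldsymbol{\Theta}_\ast)$ with Lemma~\ref{l10} at $\boldsymbol{\Theta}_\ast$, and the lower bound applies Lemma~\ref{l10} at $\boldsymbol{\Theta}'_\ast$ followed by $\ell(\boldsymbol{\Theta}'_\ast) \geq \ell(\boldsymbol{\Theta}_\ast)$. The sign caveats you raise ($\eta < 1$ and nonnegativity of $\ell$) are real but are left implicit in the paper as well, so your proposal matches the published proof in substance and structure.
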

\begin{proof}
By Lemma~\ref{l10}, we have:
\begin{eqnarray}
\ell'(\boldsymbol{\Theta}'_\ast) 
    &\leq& \ell'(\boldsymbol{\Theta}_\ast) \nonumber \\
    &\leq& \left(1 + \mathrm{max}\left(\tau_{\lambda,\sigma}(\mathbf{K}), \frac{\lambda}{\sigma^2}\right)\right)\ell(\boldsymbol{\Theta}_\ast)
\end{eqnarray}
and
\begin{eqnarray}
\ell'(\boldsymbol{\Theta}'_\ast) 
    &\geq& \left(1 - \mathrm{max}\left(\tau_{\lambda,\sigma}(\mathbf{K}), \frac{\lambda}{\sigma^2}\right)\right)\ell(\boldsymbol{\Theta}'_\ast) \nonumber \\
    &\geq& \left(1 - \mathrm{max}\left(\tau_{\lambda,\sigma}(\mathbf{K}), \frac{\lambda}{\sigma^2}\right)\right)\ell(\boldsymbol{\Theta}_\ast) \ .
\end{eqnarray}
Together, these results imply $\ell'(\boldsymbol{\Theta}'_\ast) = \left(1 \pm \mathrm{max}\left(\tau_{\lambda,\sigma}(\mathbf{K}), \frac{\lambda}{\sigma^2}\right)\right)\ell(\boldsymbol{\Theta}_\ast)$.
\end{proof}
\begin{lemma}
\label{l12}
Let $\delta \in (0, 1)$ and let $\mathbf{K}'$ denote an approximation of $\mathbf{K}$ for which $\|\mathbf{K} - \mathbf{K}'\|^2_2 \leq \lambda^2$ with probability at least $1 - \delta$ uniformly over the entire parameter space. Let $\boldsymbol{\Theta}_\ast$ and $\boldsymbol{\Theta}'_\ast$ denote the optimal hyper-parameters obtained by respectively minimizing the negative log likelihood of the full GP and the approximated GP. Then, with probability $1 - \delta$, the following holds:
\begin{eqnarray}
\mathbb{E}[g'(\mathbf{x}_\ast;\boldsymbol{\Theta}'_\ast)] &=& \left(1 \pm \rho(\lambda,\sigma,\boldsymbol{\Theta}_\ast,\boldsymbol{\Theta}'_\ast)\right)\cdot\mathbb{E}[g(\mathbf{x}_\ast;\boldsymbol{\Theta}_\ast)] \ \ +\ \  \wp(\lambda,\sigma,\boldsymbol{\Theta}_\ast,\boldsymbol{\Theta}'_\ast)
\end{eqnarray}
where $\rho(\lambda,\sigma,\boldsymbol{\Theta}_\ast,\boldsymbol{\Theta}'_\ast)$ and $\wp(\lambda,\sigma,\boldsymbol{\Theta}_\ast,\boldsymbol{\Theta}'_\ast)$ are constant with respect to $\lambda,\sigma,\boldsymbol{\Theta}_\ast, \boldsymbol{\Theta}'_\ast$
\end{lemma}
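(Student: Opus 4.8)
The plan is to reduce the statement to the same-parameter bound already established in Theorem~\ref{theorem:3} (equivalently Lemma~\ref{l7}), exploiting crucially that the spectral guarantee $\|\mathbf{K}-\mathbf{K}'\|_2^2\le\lambda^2$ holds \emph{uniformly} over the entire parameter space. The difficulty is that $\boldsymbol{\Theta}_\ast$ and $\boldsymbol{\Theta}'_\ast$ are in general distinct, so Lemma~\ref{l7} --- which compares $g$ and $g'$ only at a \emph{common} parameter --- cannot be invoked directly on the pair $(\boldsymbol{\Theta}_\ast,\boldsymbol{\Theta}'_\ast)$. The resolution is that uniformity lets us instantiate Lemma~\ref{l7} at the single configuration $\boldsymbol{\Theta}'_\ast$, even though $\boldsymbol{\Theta}'_\ast$ is itself data- and approximation-dependent; this is precisely the feature that distinguishes our analysis from prior work that fixes the parameter in advance.

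First I would condition on the event (of probability at least $1-\delta$) on which the uniform spectral closeness holds, and apply Lemma~\ref{l7} at $\boldsymbol{\Theta}=\boldsymbol{\Theta}'_\ast$ to obtain $\mathbb{E}[g'(\mathbf{x}_\ast;\boldsymbol{\Theta}'_\ast)] = (1\pm\lambda/\sigma^2)\,\mathbb{E}[g(\mathbf{x}_\ast;\boldsymbol{\Theta}'_\ast)]$, which links the approximated prediction at its \emph{own} optimizer to the full-GP prediction at that same $\boldsymbol{\Theta}'_\ast$. I would then bridge the two full-GP predictions by writing $\mathbb{E}[g(\mathbf{x}_\ast;\boldsymbol{\Theta}'_\ast)] = \mathbb{E}[g(\mathbf{x}_\ast;\boldsymbol{\Theta}_\ast)] + \big(\mathbb{E}[g(\mathbf{x}_\ast;\boldsymbol{\Theta}'_\ast)]-\mathbb{E}[g(\mathbf{x}_\ast;\boldsymbol{\Theta}_\ast)]\big)$, where the bracketed difference is a fixed quantity determined entirely by the (now frozen) optimizers $\boldsymbol{\Theta}_\ast,\boldsymbol{\Theta}'_\ast$, the training set, and the test point $\mathbf{x}_\ast$.

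Substituting the second display into the first and collecting terms puts the identity into the claimed affine form, with $\rho(\lambda,\sigma,\boldsymbol{\Theta}_\ast,\boldsymbol{\Theta}'_\ast)=\lambda/\sigma^2$ serving as the multiplicative constant inherited from Lemma~\ref{l7}, and $\wp(\lambda,\sigma,\boldsymbol{\Theta}_\ast,\boldsymbol{\Theta}'_\ast)$ absorbing the factor $(1\pm\lambda/\sigma^2)$ times the prediction gap between the two parameter settings. Since the conclusion only asserts that $\rho$ and $\wp$ are determined by $(\lambda,\sigma,\boldsymbol{\Theta}_\ast,\boldsymbol{\Theta}'_\ast)$, this absorption step already yields the stated form once the two optimizers are fixed.

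The main obstacle is conceptual rather than computational: justifying the use of the fixed-parameter Lemma~\ref{l7} at the \emph{random} configuration $\boldsymbol{\Theta}'_\ast$. I would address this by emphasizing that the high-probability event of Theorem~\ref{theorem:2} is a \emph{single} event on which $\lambda$-closeness holds simultaneously at every $\boldsymbol{\Theta}$; on that event Lemma~\ref{l7} is valid pointwise for each parameter, hence in particular at $\boldsymbol{\Theta}'_\ast$ irrespective of how $\boldsymbol{\Theta}'_\ast$ was selected, and no union bound over a continuum of parameters is incurred. To render $\wp$ quantitatively meaningful rather than a formal placeholder, I would optionally invoke Lemma~\ref{l11} to control how far $\boldsymbol{\Theta}'_\ast$ can drift from $\boldsymbol{\Theta}_\ast$ through the near-equality of their training objectives, and combine this with a smoothness estimate of the full-GP predictive mean in $\boldsymbol{\Theta}$ to bound the prediction gap explicitly; but this refinement is not needed for the stated conclusion.
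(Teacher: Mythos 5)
Your opening move --- instantiating Lemma~\ref{l7} at $\boldsymbol{\Theta}'_\ast$ and justifying this through the uniformity of the spectral guarantee --- is sound, and it correctly identifies the feature that makes a two-optimizer comparison possible at all. The gap is in the second half. After obtaining $\mathbb{E}[g'(\mathbf{x}_\ast;\boldsymbol{\Theta}'_\ast)] = (1\pm\lambda/\sigma^2)\,\mathbb{E}[g(\mathbf{x}_\ast;\boldsymbol{\Theta}'_\ast)]$, you absorb the difference $\mathbb{E}[g(\mathbf{x}_\ast;\boldsymbol{\Theta}'_\ast)]-\mathbb{E}[g(\mathbf{x}_\ast;\boldsymbol{\Theta}_\ast)]$ into $\wp$. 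But that difference is $\mathbf{k}_\ast^\top\mathbf{Q}^{-1}\mathbf{y}$ evaluated at two parameter settings, so it depends on the test input $\mathbf{x}_\ast$; the resulting $\wp$ is therefore not a function of $(\lambda,\sigma,\boldsymbol{\Theta}_\ast,\boldsymbol{\Theta}'_\ast)$ alone, as the functional notation in the statement requires. Worse, since nothing in your argument controls the size of this remainder, the identity you end with is a tautology: any quantity can be written as $(1\pm\rho)$ times any other quantity plus a suitable leftover. The step you label ``optional'' --- actually bounding the drift between the two predictions --- is the entire content of the lemma.

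The paper closes exactly that hole by a different route: it never compares $g$ and $g'$ at a common parameter. Instead it sandwiches the predictive mean at the optimizer, $\mathbb{E}[g(\mathbf{x}_\ast;\boldsymbol{\Theta}_\ast)]$, in magnitude by $\frac{1}{2}\left|\ell(\boldsymbol{\Theta}_\ast)+1-\sum_{i}\log(\lambda_i+\sigma^2)\right|$ --- a bound uniform over $\mathbf{x}_\ast$ because $\mathbf{k}_\ast^\top\mathbf{Q}^{-1}\mathbf{k}_\ast\le 1$ --- does the same for the approximated GP at $\boldsymbol{\Theta}'_\ast$, and then invokes Lemma~\ref{l11} to relate $\ell'(\boldsymbol{\Theta}'_\ast)$ to $\ell(\boldsymbol{\Theta}_\ast)$. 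The constants $\rho$ and $\wp$ are then read off from the NLL ratio and the eigenvalue sums, so they are independent of $\mathbf{x}_\ast$. To salvage your route you would need a test-point-uniform bound on $\sup_{\mathbf{x}_\ast}\left|\mathbb{E}[g(\mathbf{x}_\ast;\boldsymbol{\Theta}'_\ast)]-\mathbb{E}[g(\mathbf{x}_\ast;\boldsymbol{\Theta}_\ast)]\right|$ in terms of the allowed arguments; Lemma~\ref{l11} alone does not provide this, since near-equality of the two training objectives at their respective optimizers does not by itself translate into closeness of the corresponding predictions.
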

\begin{proof}
We have:
\begin{eqnarray}
\mathbb{E}[g(\mathbf{x_\ast};\boldsymbol{\Theta}_\ast)]
    &=& \mathbf{k}^\top_\ast\mathbf{Q}^{-1}\mathbf{y}\Big|_{\boldsymbol{\Theta}_\ast} \nonumber \\
    &=& \frac{1}{2}\Big[(\mathbf{k}_\ast + \mathbf{y})^\top\mathbf{Q}^{-1}(\mathbf{k}_\ast + \mathbf{y}) - \mathbf{k}_\ast\mathbf{Q}^{-1}\mathbf{k}_\ast + \log|\mathbf{Q}|\Big]\Bigg|_{\boldsymbol{\Theta}_\ast} -  \frac{1}{2}\ell(\boldsymbol{\Theta}_\ast) \nonumber \\
&\geq& -\frac{1}{2}\left[\ell(\boldsymbol{\Theta}_\ast) + 1 - \sum_{i=1}^n\log\Big(\lambda_i + \sigma^2\Big)\Bigg|_{\boldsymbol{\Theta}_\ast}\right]
\end{eqnarray}
On the other hand, we have:
\begin{eqnarray}
\mathbb{E}[g(\mathbf{x_\ast};\boldsymbol{\Theta}_\ast)] &=& \mathbf{k}^\top_\ast\mathbf{Q}^{-1}\mathbf{y} \nonumber \\
&\leq& \frac{1}{2}\Big[\mathbf{k}^\top_\ast\mathbf{Q}^{-1}\mathbf{k}_\ast + \mathbf{y}^\top\mathbf{Q}^{-1}\mathbf{y}\Big]\Bigg|_{\boldsymbol{\Theta}_\ast} \nonumber \\
&\leq& \frac{1}{2}\left[\ell(\boldsymbol{\Theta}_\ast) + 1 - \sum_{i=1}^n\log\Big(\lambda_i + \sigma^2\Big)\Bigg|_{\boldsymbol{\Theta}_\ast}\right]
\end{eqnarray}
Thus, we have:
\begin{eqnarray}
\mathbb{E}[g(\mathbf{x_\ast};\boldsymbol{\Theta}_\ast)] &=& \pm\frac{1}{2}\left[\ell(\boldsymbol{\Theta}_\ast) + 1 - \sum_{i=1}^n\log\Big(\lambda_i + \sigma^2\Big)\Bigg|_{\boldsymbol{\Theta}_\ast}\right]
\end{eqnarray}
and by symmetry:
\begin{eqnarray}
\mathbb{E}[g'(\mathbf{x_\ast};\boldsymbol{\Theta}'_\ast)]     &=& \pm\frac{1}{2}\left[\ell'(\boldsymbol{\Theta}'_\ast) + 1 - \sum_{i=1}^n\log\Big(\lambda'_i + \sigma^2\Big)\Bigg|_{\boldsymbol{\Theta}'_\ast}\right] \nonumber \\
    &=& \left(1 \pm \rho(\lambda,\sigma,\boldsymbol{\Theta}_\ast,\boldsymbol{\Theta}'_\ast)\right)\cdot\mathbb{E}[g(\mathbf{x}_\ast;\boldsymbol{\Theta}_\ast)] \ \ +\ \  \wp(\lambda,\sigma,\boldsymbol{\Theta}_\ast,\boldsymbol{\Theta}'_\ast) 
\end{eqnarray}
where $\wp(\boldsymbol{\Theta}_\ast,\boldsymbol{\Theta}'_\ast)$ is a constant as defined below:
\begin{eqnarray}
\rho(\lambda,\sigma,\boldsymbol{\Theta}_\ast,\boldsymbol{\Theta}'_\ast) &\triangleq& \mathrm{max}\left(\tau_{\lambda,\sigma}(\mathbf{K}),\tau_{\lambda,\sigma}(\mathbf{K}'), \frac{\lambda}{\sigma^2}\right) \nonumber \\
\wp(\lambda,\sigma,\boldsymbol{\Theta}_\ast,\boldsymbol{\Theta}'_\ast) &\triangleq& \left(\sum_{i=1}^n\log\frac{\lambda_i + \sigma^2\Big|_{\boldsymbol{\Theta}_\ast}}{\lambda'_i + \sigma^2\Big|_{\boldsymbol{\Theta}'_\ast}}\right) \pm \rho(\lambda,\sigma,\boldsymbol{\Theta}_\ast,\boldsymbol{\Theta}'_\ast)\cdot\left(1 - \sum_{i=1}^n\log(\lambda_i + \sigma^2)\Big|_{\boldsymbol{\Theta}_\ast}\right) \nonumber
\end{eqnarray}
\end{proof}

\section{Model Parameterization and Practical Implementation}
\label{app:c}
Our embedding algorithm is based on a VAE implementation where the latent prior, posterior and likelihood of the data generation process are represented via separate mixtures of $k$ Gaussian distributions over a $4$-dimensional space. For the latent prior, we set (and fixed) the means of each Gaussian component (i.e., the prior cluster means) at $k$ equidistant points on a $4$-dimensional sphere centered at zero with an optimizable radius. For the latent posterior and likelihood, the mean and covariance entries of each component in the mixture are parameterized as outputs of their respective neural networks, which we refer to as Gaussian nets. 

In turn, the Gaussian nets are parameterized separately. Each starts with a linear layer comprising of $10$ neurons whose outputs are fed simultaneously to two separate hidden (linear) layers with $10$ hidden neurons each. Their outputs are then used to form the mean and covariance entries of the corresponding Gaussian component. All neurons are activated by a ReLU unit, and in addition, the (batch) outputs of the first linear layer are also standardized via a learnable 1D batch-norm layer to ensure the stability of batch optimization. The mixing weights that combine such Gaussian nets in the mixtures are also parameterized as the outputs of a linear layer with $k = 8$ neurons where $k = 8$ is also the number of components in our mixture.

The above parameterized latent prior, posterior, and likelihood are then connected in the variational lower-bound (ELBO) as expressed in the first two terms of  Eq.~\eqref{eq:rvae}. This ELBO objective is then combined with two regularization terms weighted with (manually tuned) parameters $\alpha = 8.0$ and $\beta = 1.2$ as detailed in Eq.~\eqref{eq:final-elbo}. The entire function is optimized via gradient descent using the standard Adam optimizer with the default setting implemented in PyTorch.  

Once learned, the outputs of the latent posterior were used as the encoded data which were fed as input to our revisited SSGP. For a practical implementation, we also found that additionally passing the encoded data to the latent likelihood generates a reconfigured version of the original data which helps to marginally improve the performance. All of our reported results below are generated with respect to this version of reconfiguration. All of our implementations of GP, SSGP and revisited SSGP that makes use of the output of this reconfiguration process, are also in PyTorch. Our experimental code is released at \url{https://github.com/hqminh/gp_sketch_nips}.

\section{Additional Empirical Results and Visualizations}
\label{app:d}
This section provides additional empirical results and visualizations that complement and corroborate the reported results in the main text. In particular, we provide: (a) a more refined and comprehensive visualization of how our embedding algorithm (Section~\ref{sec:algo}) re-configures data across different settings; and (b) an extended comparison with SSGP at different levels of sample complexity when evaluated on middle ($10$K data points) and large ($500$K data points) data sets. All data samples used in this section were extracted from the GAS SENSOR dataset \cite{gas-data}\footnote{The entire GAS SENSOR dataset contains approximately $4$M data points. However, in the body of this paper we only used a sample of $500$K points to conduct our experiments.}. 

\subsection{The Effect of Data Re-configuration: A Visual Demonstration}
\label{app:d-visual}
This section describes an ablation study to demonstrate the effectiveness of our data re-configuration component (i.e., to approximately meet the practical Conditions $\mathbf{1}$-$\mathbf{3}$ of our refined analysis). Specifically, we demonstrate this by contrasting the scatter plots of data embeddings (see Fig.~\ref{fig:visual}) before and after reconfiguration using our algorithm in Section~\ref{sec:algo} below. The visualizations are shown for $3$ different samples of data, each of which has $10$K data points. 

For each data sample, its embedding was clustered and re-clustered before and after its reconfiguration. Both clustering processes were generated independently using K-Means to provide an objective visual measurement of the reconfiguration effects of our algorithm.

\begin{figure}[h!]
\vspace{-3mm}
\begin{tabular}{ccc}
\hspace{-3mm}\includegraphics[width=0.33\textwidth]{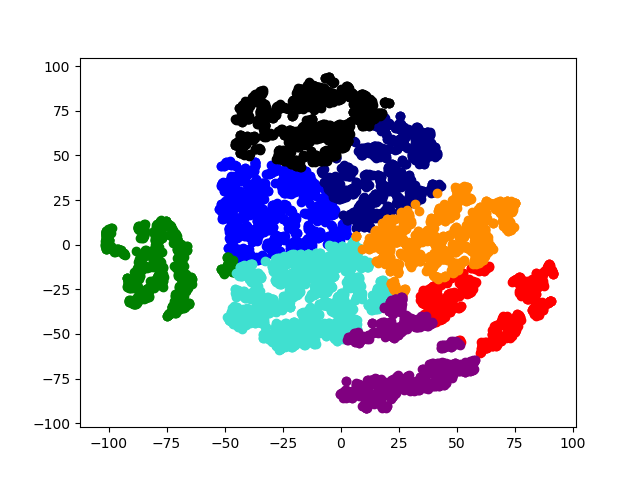} &
\hspace{-4mm}\includegraphics[width=0.33\textwidth]{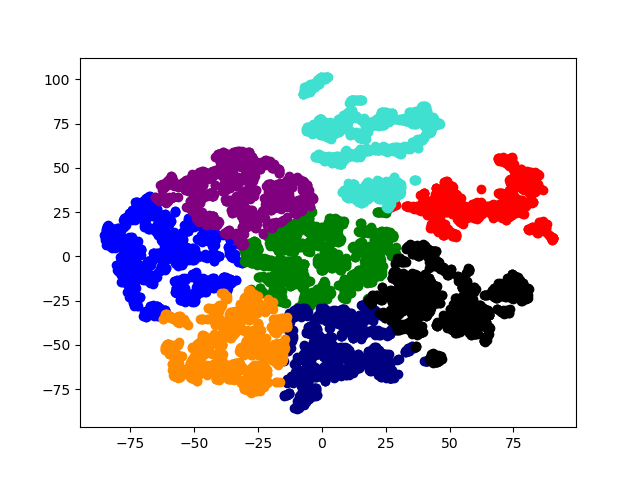} &
\hspace{-5mm}\includegraphics[width=0.33\textwidth]{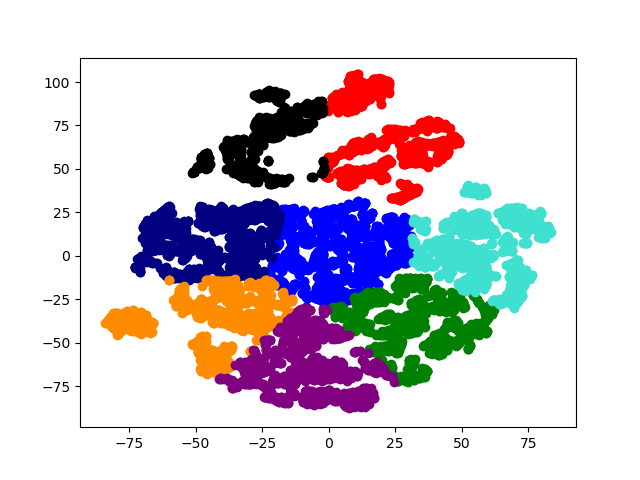} \\
\hspace{-3mm} (S$1$): Original Embedding & \hspace{-4mm}(S$2$): Original Embedding & \hspace{-5mm} (S$3$): Original Embedding \\
\hspace{-3mm}\includegraphics[width=0.33\textwidth]{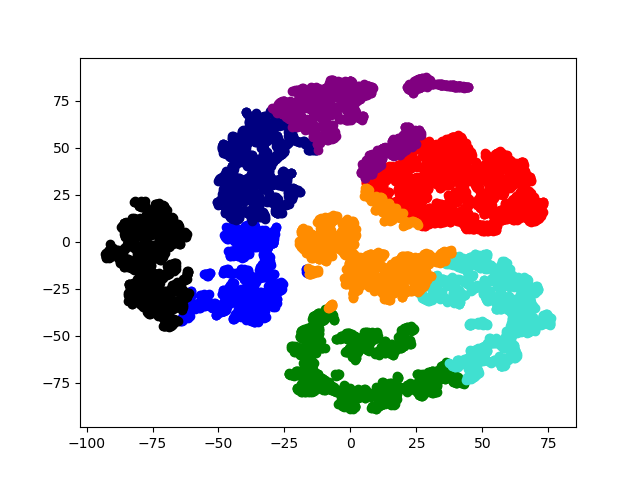} &
\hspace{-4mm}\includegraphics[width=0.33\textwidth]{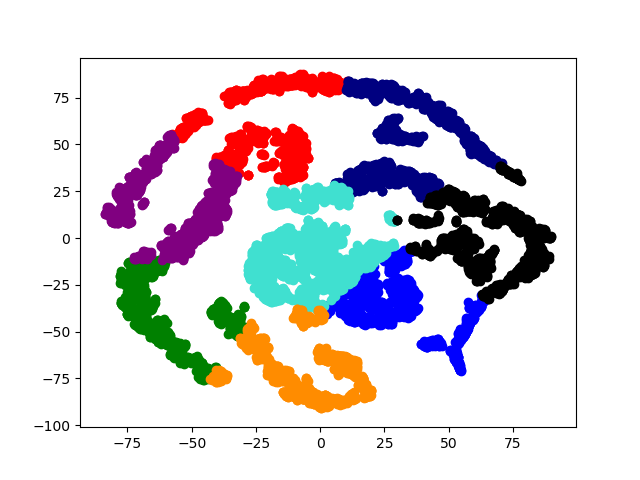} &
\hspace{-5mm}\includegraphics[width=0.33\textwidth]{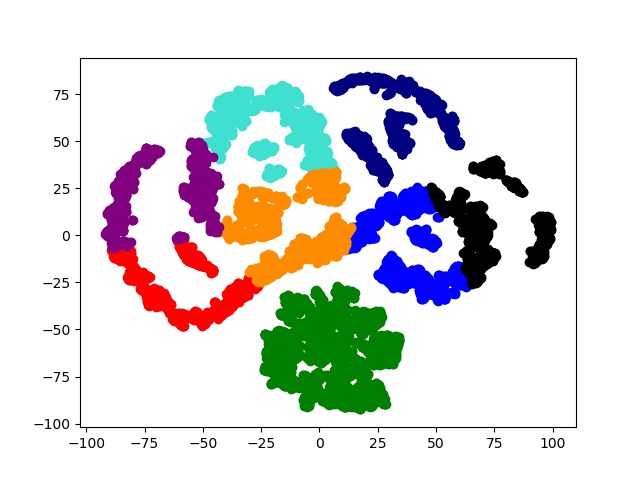}
\\
\hspace{-3mm} (S$1$): Reconfig. Embedding & \hspace{-4mm} (S$2$): Reconfig. Embedding & \hspace{-5mm}(S$3$): Reconfig. Embedding \\
\end{tabular}    
\caption{Visualizations of original (top) and reconfigured (bottom) data embeddings for $3$ different (randomly selected) data samples annotated with S$1$, S$2$ and S$3$, respectively. Each visual excerpt is annotated with different colors corresponding to the different clusters that the data belong to. All visualizations are generated using T-SNE \cite{Maaten08}.}
\label{fig:visual}
\end{figure}

Observing the above visual excerpts, it appears that after reconfiguration, the clusters across different data samples all became significantly more disengtangled with a visibly increased distance between their cluster centers. This provides conclusive evidence to the data disengtangling effect of our embedding algorithm. More importantly, this demonstration further reveals a practical aspect of data that has not been investigated before in the existing literature of GP: 

{\bf Data (especially experimental data) is often the manifestation of how latent concepts that underlie them were observed and depending on specific parameters of the observation process, these concepts might manifest differently in either more or less useful forms for learning. This raises the question of whether one can reorient the observation process to increase the utility of such data.}

In this vein of thought, to address the above question, our data reconfiguration algorithm can be considered to be one potential solution which uses a parameterized construction of a latent space to provide a handle on how to reorient the latent concepts that underlie our data. For an intuitive example, imagine how we would look at the outside world via a narrowed pigeonhole. With different viewing angles, we would perceive the same scene outside differently and apparently, some angles provide a much better perception of that scene (thus, allowing us to interpret the scene more accurately).

In technical terms, such a reorientation is implemented in our algorithm via the regularization of the mixture composition of the latent prior while constraining the entire embedding process to have it reflected on the latent posterior -- see Eq.~\eqref{eq:final-elbo} -- which was used to encode data into a latent space that exhibits the desired separation effect. Such separation/disentanglement is then shown (empirically) to be richer in information and can be leveraged to improve the sample complexity of SSGP (see Section~\ref{app:comparison}), thus supporting our theoretical analysis in Appendix~\ref{appendix:a}.

\subsection{Comparison with SSGP on Large Data}
\label{app:comparison}

To demonstrate the effectiveness of the data disentanglement in reducing the sample complexity of SSGP, we compare the performance of SSGP and our revisited SSGP (which was instead applied on the reconfigured space of data) at different levels of sample complexity. All results were generated for two different data samples extracted from GAS-SENSOR \cite{gas-data}. One of these (containing $500$K data points) is in fact on the same scale of the most extensive datasets used in the GP literature. All performance plots were visualized in Fig.~\ref{fig:gas-performance} below. For each experiment, the data sample is divided into a train/test partition with an $8$-$2$ ratio. All results were averaged over $5$ independent runs.

We see that our revised SSGP consistently achieves better performance than its SSGP counterpart at all complexity levels. In particular, in all cases of the $10$K setting, the performance of our revised SSGP is also shown to approach closely that of the full GP, which serves as a gold-standard lower-bound on the achievable prediction error. This concludes our empirical demonstration which (we believe) has shown that with a proper reconfiguration of data, the predictive performance of a GP can be well-preserved at a much cheaper sample complexity as compared to the previous conservative estimate yielded by SSGP. In fact, the performance trend of SSGP as depicted in the above graphs shows that with more samples, it also slowly converges towards the performance level of GP and our revised SSGP but at a much greater sample complexity -- see the shrinking performance gap between revisited SSGP and SSGP from Fig.~\ref{fig:gas-performance}d to Fig.~\ref{fig:gas-performance}e; and similarly, from Fig.~\ref{fig:gas-performance}g to Fig.~\ref{fig:gas-performance}h.

\begin{figure}[t]
\vspace{-3mm}
\begin{tabular}{ccc}
\hspace{0mm}\includegraphics[width=0.32\textwidth]{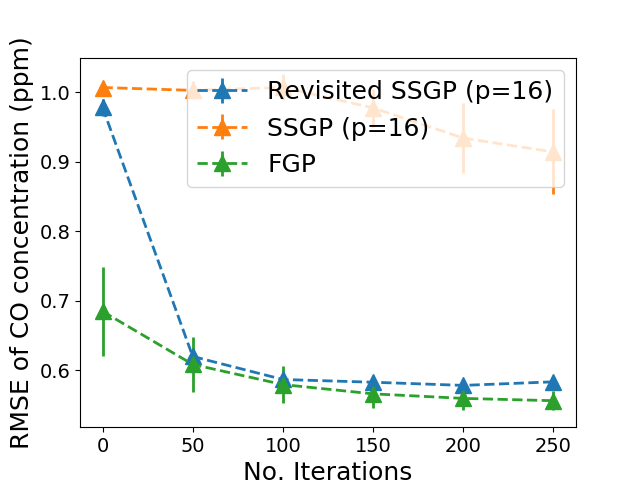} &
\hspace{-4.5mm}\includegraphics[width=0.32\textwidth]{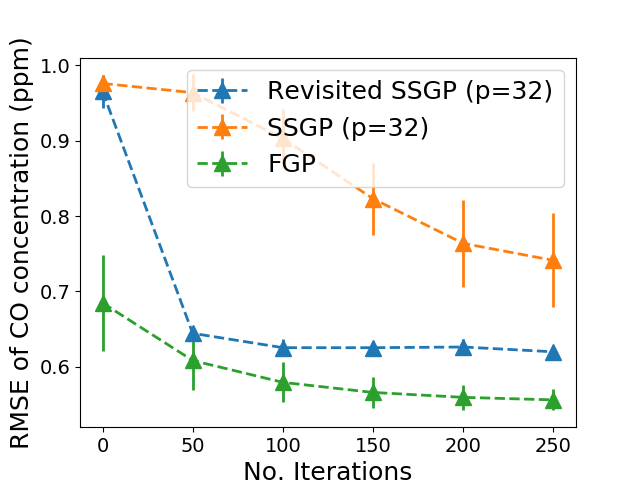} &
\hspace{-4.5mm}\includegraphics[width=0.32\textwidth]{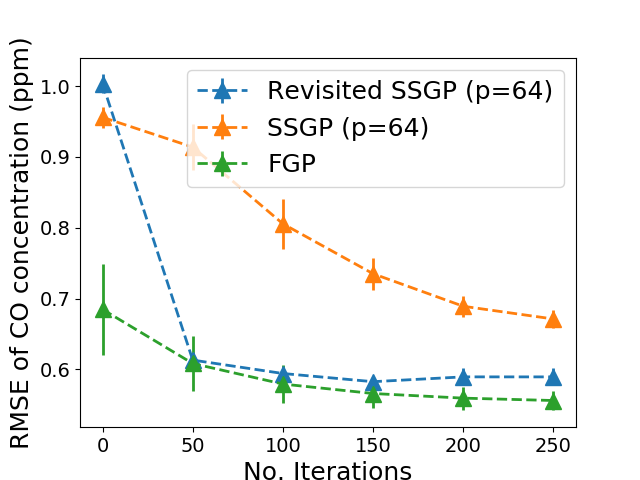} \\
\hspace{0mm} (a) & \hspace{-4.5mm} (b) & \hspace{-4.5mm} (c)\\
\hspace{0mm}\includegraphics[width=0.32\textwidth]{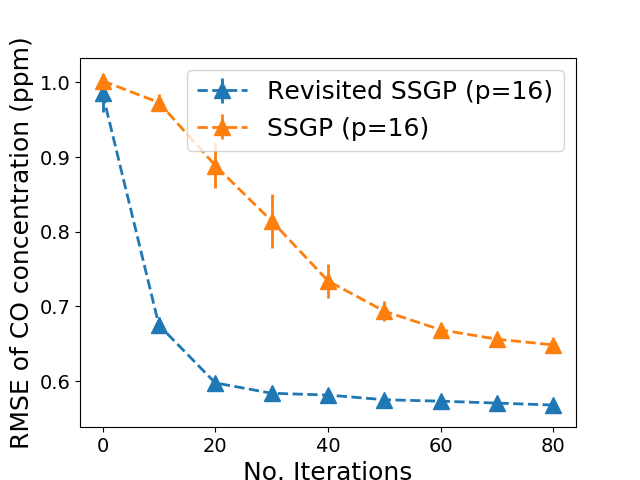} &
\hspace{-4.5mm}\includegraphics[width=0.32\textwidth]{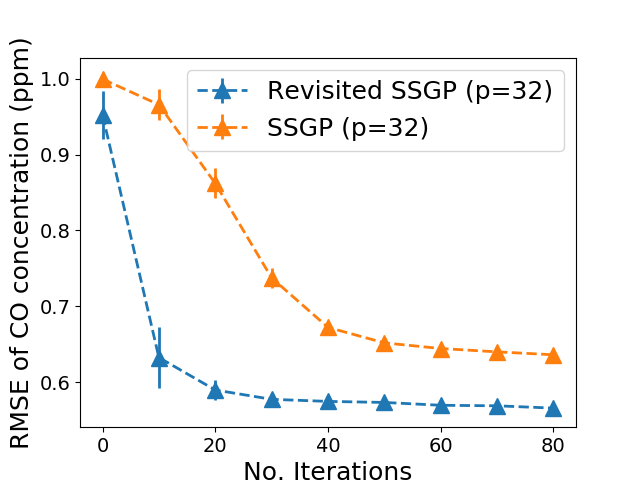} &
\hspace{-4.5mm}\includegraphics[width=0.32\textwidth]{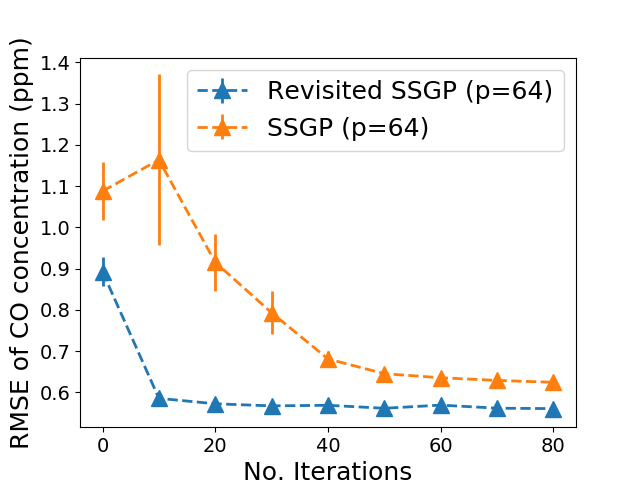}\\
\hspace{0mm} (d) & \hspace{-4.5mm} (e) & \hspace{-4.5mm} (f) \\
\hspace{0mm}\includegraphics[width=0.32\textwidth]{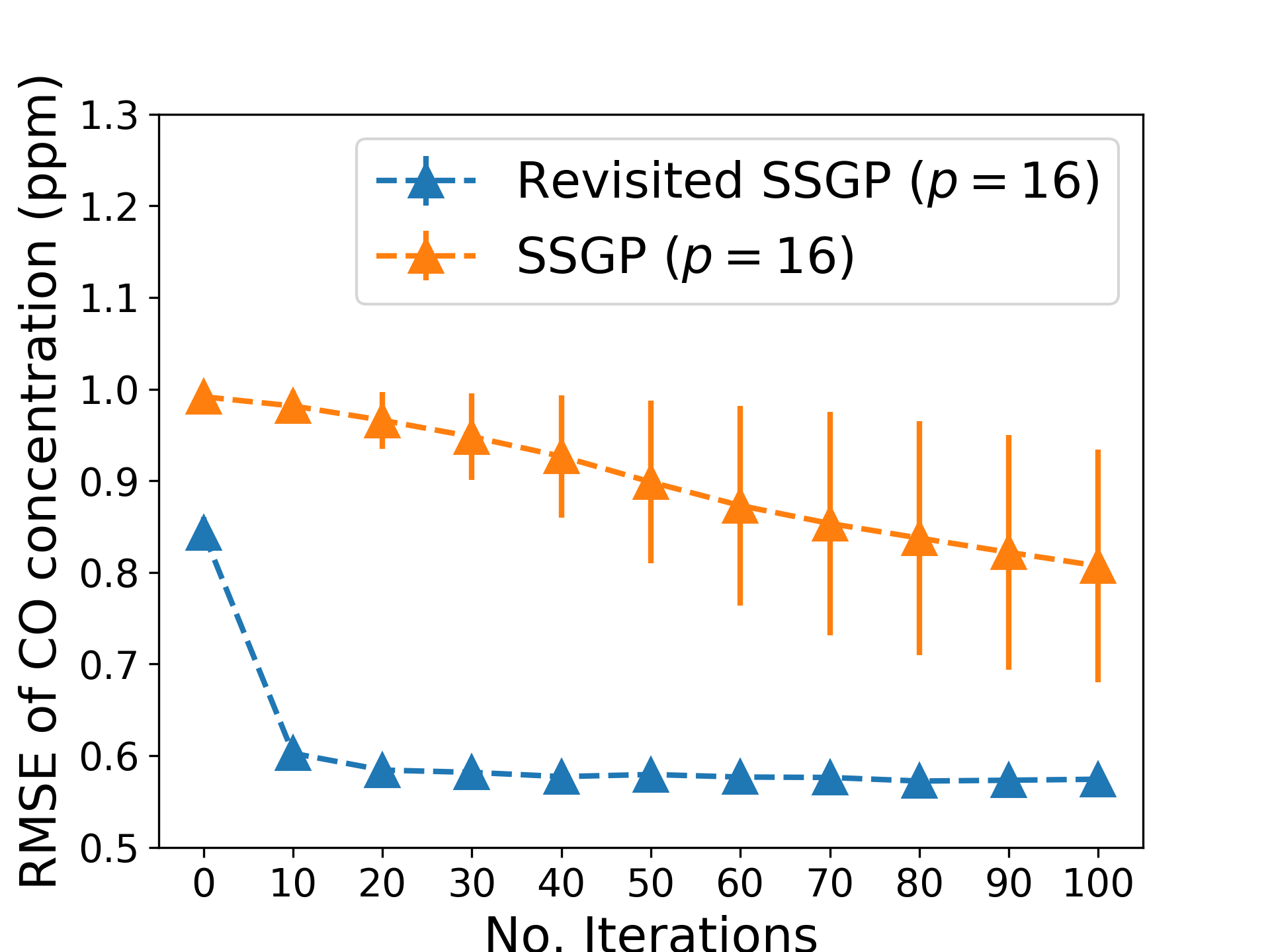} &
\hspace{-4.5mm}\includegraphics[width=0.32\textwidth]{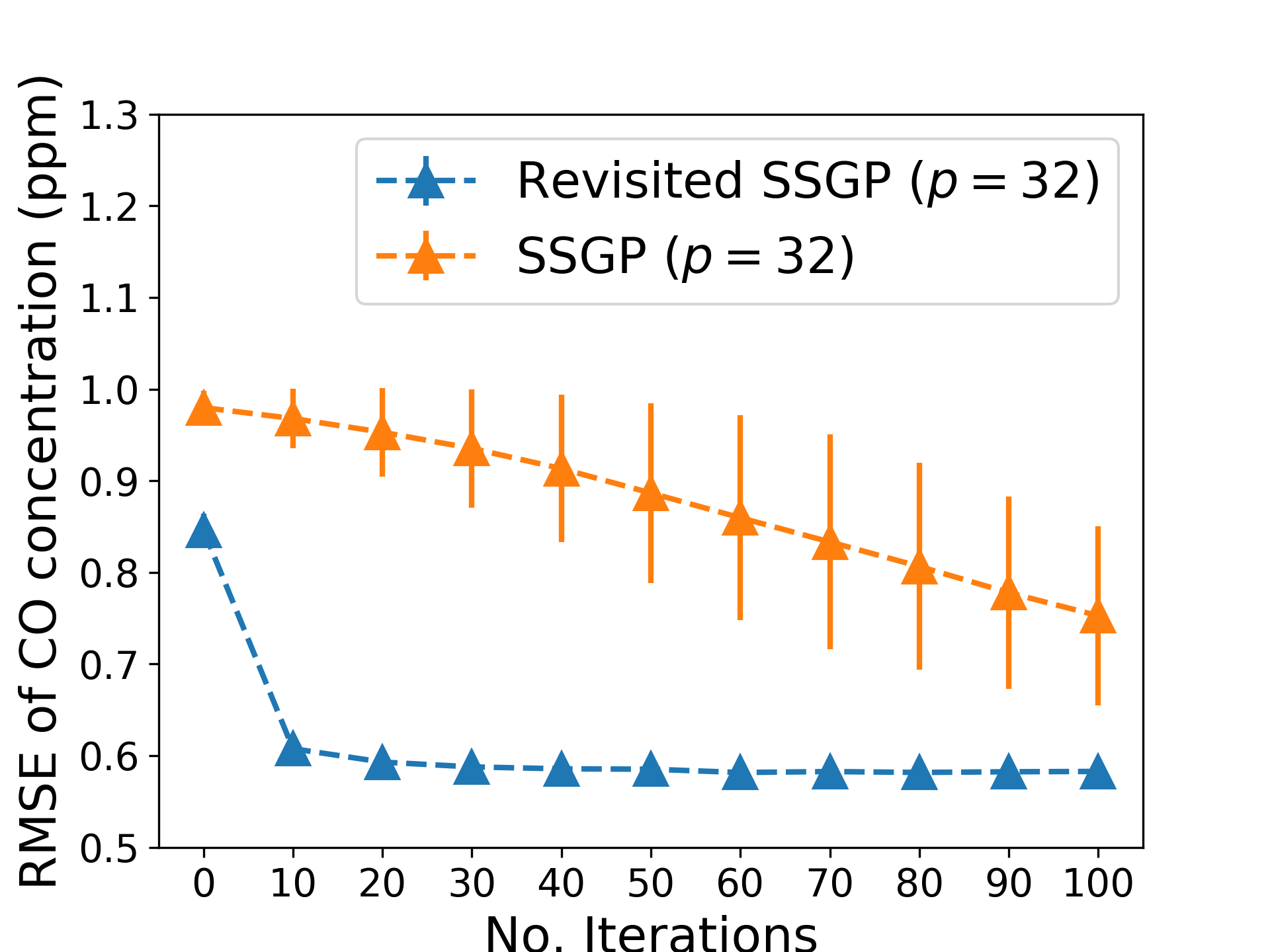} &
\hspace{-4.5mm}\includegraphics[width=0.32\textwidth]{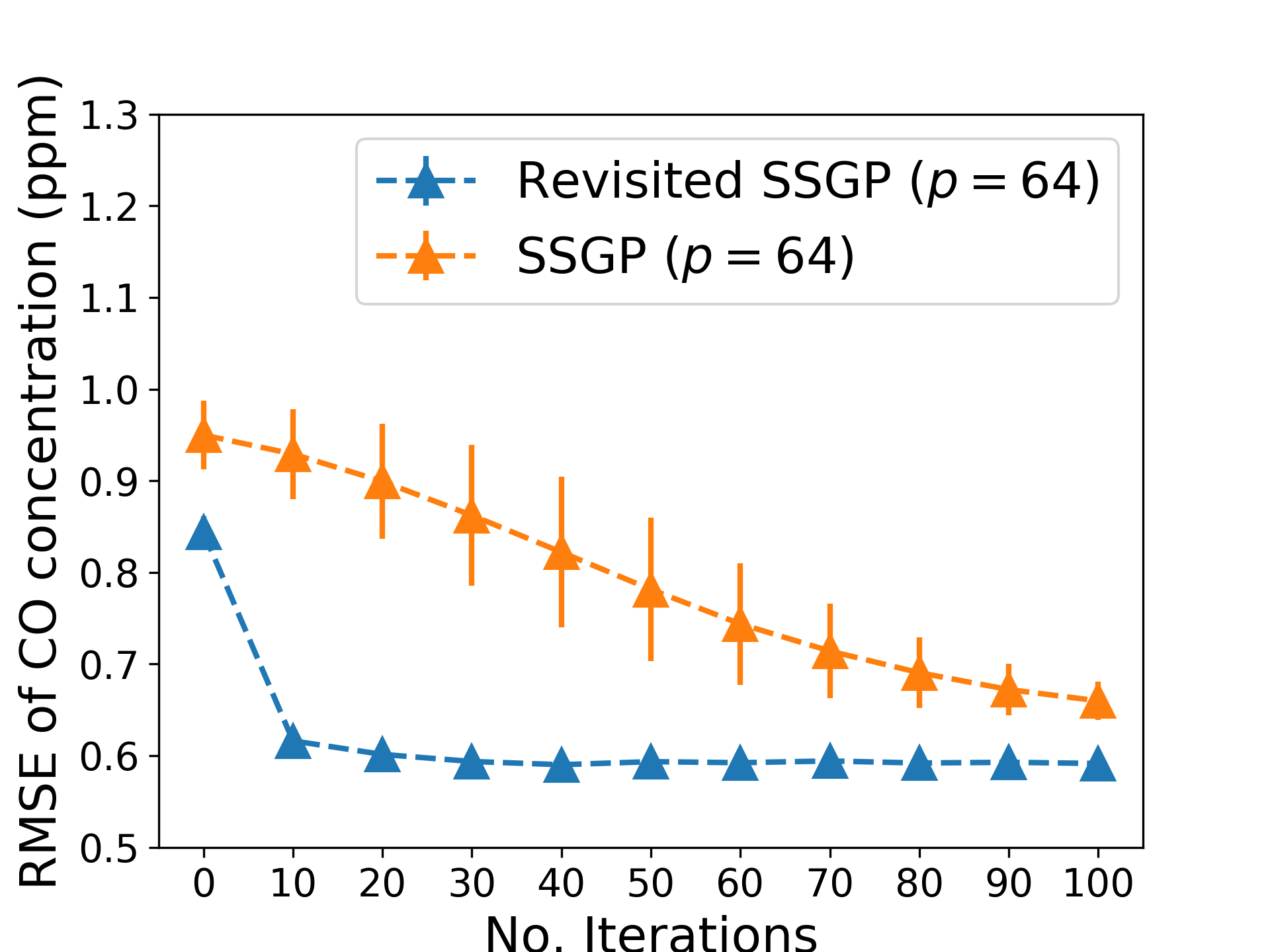}\\
\hspace{0mm} (g) & \hspace{-4.5mm} (h) & \hspace{-4.5mm} (i)
\end{tabular}    
\caption{Graphs of performance comparisons between the full GP, our revised SSGP and the traditional SSGP on a $10$K sample (a-c); $500$K sample (d-f) and the entire GAS SENSOR dataset \cite{gas-data} totalling approximately $4$M data points (g-i). In both settings, the performance differences were plotted at $p = 16$, $32$ and $64$. Note that for the $500$K sized sample and the entire dataset (which contains $4$M data points), the full GP model is not applicable due to its inability (memory- and computation-wise) to store and invert the corresponding large covariance matrix.}
\label{fig:gas-performance}
\end{figure}

\end{document}